\providecommand\given{}
\providecommand\and{}
\newcommand\SetSymbol[1][]{%
	\nonscript\:#1\vert
	\allowbreak\nonscript\:
	\mathopen{}}
\DeclarePairedDelimiterX\set[1]\{\}{%
	\renewcommand\given{\SetSymbol[\delimsize]}
	\def\and{\:\land\:}
	\, #1 \,
}
\DeclarePairedDelimiterX
{\norm}[1]
{\lVert}{\rVert}
{\ifblank{#1}{\,\cdot\,}{#1}}
\DeclarePairedDelimiterX
{\abs}[1]
{\lvert}{\rvert}
{\ifblank{#1}{\,\cdot\,}{#1}}
\DeclarePairedDelimiterX
{\sem}[1]
{\llbracket}{\rrbracket}
{\ifblank{#1}{\,\cdot\,}{#1}}
\DeclarePairedDelimiterX
{\innerprod}[2]
{\langle}{\rangle}
{\ifblank{#1}{\,\cdot\,}{#1},\ifblank{#2}{\,\cdot\,}{#2}}
\DeclarePairedDelimiterX
{\repr}[1]
{[}{]}
{\ifblank{#1}{\,\cdot\,}{#1}}
\DeclareMathOperator{\sign}{sgn}
\DeclareMathOperator{\classifier}{\nu_c}
\DeclareMathOperator*{\argmax}{arg\,max}
\newcommand{\natset}{\mathbb{N}}
\newcommand{\realset}{\mathbb{R}}
\newcommand{\affset}[2]
{\ifblank{#1}
	{\ifblank{#2}{\Phi}{}}
	{\Phi^{{#1}\to{#2}}}}
\newcommand{\tadsset}[2]
{\ifblank{#1}
	{\ifblank{#2}{\Theta}{}}
	{\Theta^{{#1}\to{#2}}}}
\newcommand\pafset{\mathrm{PAF}}
\newcommand{\netset}{\mathcal{N}}
\newcommand{\aff}{\alpha}
\newcommand{\paff}{\psi}
\newcommand{\defined}{\mathrel{\vcentcolon=}}
\newcommand{\mat}[1]{\bm{#1}}
\newcommand{\transpose}{\top}
\newcommand{\inftynorm}[1]{\lvert \! \lvert {#1} \rvert \!\rvert_{\infty}}
\newcommand\netcomp{\mathbin{;}}
\newcommand{\netsem}[1]{\sem{#1}_{\scriptscriptstyle\mathcal{N}}}
\newcommand{\netclass}{\mathcal{C}}
\newcommand{\tadssem}{\semtads}
\newcommand{\tadsjoin}{\Join}
\newcommand\restr[2]{{
  \left.\kern-\nulldelimiterspace 
  #1 
  \vphantom{\big|} 
  \right|_{#2} 
  }}
\newcommand{\eqeos}{\rlap{~~.}}
\newcommand{\transition}[1]{\mathrel{\xrightarrow{#1}}}
\definecolor{llgray}{rgb}{0.93, 0.93, 0.93}
\newcommand{\highlight}[1]{%
	\begin{center} \fcolorbox{gray}{llgray}{\parbox{\columnwidth-15pt}{\strut #1 \strut}} 
	\end{center}}
\newcommand{\semtads}[1]{\sem{#1}_{\scriptscriptstyle\Theta}}
\begin{document}
\title{The Power of Typed Affine Decision Structures: A Case Study}
\titlerunning{The Power of Typed Affine Decision Structures}
\authorrunning{G. Nolte et al.}
\author{
	Gerrit Nolte%
	\setorcid{0000-0002-5080-1039} \inst{1} \eqcontr \and
	Maximilian Schlüter%
	\setorcid{0000-0002-5100-7259} \inst{1} \eqcontr \and
	Alnis Murtovi%
	\inst{1} \and
	and Bernhard Steffen%
	\setorcid{0000-0001-9619-1558} \inst{1}
}

\institute{TU Dortmund University, Dortmund, Germany}

\corresponding{gerrit.nolte@tu-dortmund}

\abstract{
TADS are a novel, concise  white-box representation of neural networks.
In this paper, we apply TADS to the problem of neural network verification, using them to generate either proofs or concise error characterizations for desirable neural network properties.
In a case study, we consider the robustness of neural networks to adversarial attacks, i.e., small changes of an input that drastically change a neural networks perception, and show that TADS can be used to provide precise diagnostics on how and where robustness errors a occur.
We achieve these results by introducing \emph{Precondition Projection}, a technique that yields a TADS describing network behavior precisely on a given subset of its input space, and combining it with PCA, a traditional, well-understood dimensionality reduction technique.
We show that PCA is easily compatible with TADS.
All analyses can be implemented in a straightforward fashion using the rich algebraic properties of TADS, demonstrating the utility of the TADS framework for neural network explainability and verification.
While TADS do not yet scale as efficiently as state-of-the-art neural network verifiers, we show that, using PCA based simplifications, they can still scale to medium-sized problems and yield concise explanations for potential errors that can be used for other purposes such as debugging a network or generating new training samples.
}%

\keywords{(Rectifier) Neural Networks \and (Piece-wise) Affine Functions \and Decision Trees \and Explainability \and Verification \and Robustness \and Principal Component Analysis \and Diagnostics \and Digit Recognition \and MNIST}

\maketitle

\section{Introduction}%
\label{sec:Introduction}%
In recent years, neural networks have been a driving force behind many of the most exciting success stories in machine learning. From image recognition \cite{gu2018recent} and speech recognition \cite{brown2020language} to playing complex games on a superhuman level \cite{vinyals2019grandmaster}, neural networks have achieved results that were almost unthinkable even a decade ago. 

However, while the size, performance and scope of neural networks steadily increases, their opaqueness remains an equally important and essentially unsolved problem~\cite{adadi2018peeking}. Frequently denoted as ``black-box''-models, the decisions of neural networks are to this day hard to explain and, likewise, their properties hard to verify.

In this paper, we are concerned with Typed Affine Decision Structures (TADS)~\cite{schlueter2022towards}, a novel decision-tree-like data structure that represents piece-wise affine functions. TADS are specifically designed to act as interpretable white-box models that can precisely represent any piece-wise linear neural network in an understandable fashion.

While TADS are structurally well-suited for global model explanation and verification of neural networks, the full explanation of even medium-sized neural networks is well out of scope. It is well-known that the semantic complexity of a neural network---with respect to many different measures of complexity---grows exponentially in its size. As a consequence, any precise global explanation of such a model incurs exponential scaling issues \cite{bianchini2014complexity,montufar2014number,fazlyab2019efficient}.

In this paper, we are interested in applying TADS to verifying local properties of neural networks, most notably robustness properties \cite{carlini2017towards,luo2018towards,szegedy2013intriguing,mothilal2020explaining}. Robustness properties encode that, at certain points that a user desires, a neural network's classification is invariant to small changes of its input. For example, in image recognition, if one knows that an image represents a certain object, a single flip of a pixel should not drastically change the networks \textit{correct} classification of said image. Robustness properties are the most commonly considered properties in neural network verification and make up the majority of current benchmarks in the VNNComp verification competition \cite{bak2021second}.

To apply TADS, which are in principle global model explanations, to local properties, we will introduce precondition projection, a transformation of TADS that restricts their domain to a certain region of interest. Further, we show how the algebraic properties of TADS can be used to directly model the classification behavior of a neural network. This is important as neural networks, although often used as classifiers (assigning one of finitely many classes to an input), are fundamentally regression models (assigning real values to their input). By modeling the argmax function directly on a TADS level, this gap can be bridged in an elegant fashion.

Finally, we will present a case study in which we apply TADS to robustness analysis and present its advantages. At present, TADS do not yet scale well to larger problems. We will introduce an approach that uses the well-understood dimensionality reduction technique PCA to prove an underapproximation to the robustness property of interest. This approach mitigates the scaling issues incurred by TADSs, but lacks reliable guarantees on robustness. Thus, we introduce another approach that directly trains neural networks to operate on inputs that are simplified by PCA\@. This method is of similar computational complexity than the underapproximation approach, but yields neural networks for which TADS can give reliable robustness guarantees, while incurring only a small loss in neural network accuracy. 

Lastly, we will show on a concrete example how a TADS-based robustness proof looks like and what additional information it yields beyond already existing verification tools. We will show how this information can be used to characterize precisely and completely the entire set of inputs that violate a given property, and how it can be used to find ``closest'' adversarial examples, if they exist.  

\section{Preliminaries}
\label{sec:Prelims}
\subsection{Linear Algebra and Notation}
\label{subsec:linalg}

The following notations of linear algebra are based on the book~\cite{axler1997linear}.
The real vector space $(\realset^n, +, \cdot)$ with $n > 0$ is an algebraic structure with the operations
\begin{align*}
	+ &\colon \realset^n \times \realset^n \to \realset^n & &\text{vector addition}\\
	{}\cdot{} &\colon \realset\phantom{{}^n} \times \realset^n \to \realset^n & &\text{scalar multiplication}
\end{align*}
which are defined as
\begin{align*}
	(x_1,\dots, x_n) + (y_1,\dots,y_n) &= (x_1 + y_1, \dots, x_n + y_n) \\
	\lambda \cdot (x_1,\dots,x_n) &= (\lambda \cdot x_1, \dots, \lambda \cdot x_n) \eqeos
\end{align*}
A real vector $(x_1,\dots,x_n)$ of $\realset^n$ is abbreviated as $\vec x$.
To refer to its $i$-th component, we write $x_i$ (in contrast, $\vec{x_i}$ denotes the $i$-th \emph{vector} in some enumeration).
The dimension of a real vector space $\realset^n$ is given as ${\dim \realset^n = n}$.

A matrix $\mat W$ is a collection of real values arranged in a rectangular array with $n$ rows and $m$ columns.
\[ \mat W = 
\begin{pmatrix}
	w_{1,1} &w_{1,2} &\dots  &w_{1,m} \\
	w_{2,1} &w_{2,2} &\dots  &w_{2,m} \\
	\vdots  &\vdots  &\ddots &\vdots \\
	w_{n,1} &w_{n,2} &\dots  &w_{n,m}
\end{pmatrix} \]
To indicate the number of rows and columns, one says $\mat W$ has \emph{type} $n \times m$ commonly  notated as $\mat W \in \realset^{n \times m}$.

An element at position $i,j$ of the matrix $\mat{W}$ is denoted by $\mat{W}_{i,j} \defined w_{i,j}$ (where $1 \leq i \leq n$ and $1 \leq j \leq m$).
A matrix $\mat W \in \realset^{n\times m}$ can be reflected along the main diagonal resulting in the transpose $\mat{W}^\transpose$ of shape $m \times n$ defined by the equation
\[ \big(\: \mat{W}^\transpose \:\big)_{i,j} \defined \mat{W}_{j,i} \eqeos \]
The $i$-th row of $\mat W$ can be regarded as a $1 \times m$ matrix given by
\[ \mat{W}_{i,\bullet} \defined (w_{i,1}, \dots, w_{i,m}) \eqeos\]
Similarly, the $j$-th column of $\mat W$ can be regarded as a $n \times 1$ matrix defined as
\[\mat{W}_{\bullet,j} \defined (w_{1,j}, \dots, w_{n,j})^\transpose \eqeos\]
Matrix addition is defined over matrices with the same type to be component-wise, i.e., 
\[ \big(\: \mat{W} + \mat{N} \:\big)_{i,j} \defined \mat{W}_{i,j} + \mat{N}_{i,j} \]
and scalar multiplication as
\[ \big(\: \lambda \cdot \mat{W} \:\big)_{i,j} \defined \lambda \cdot \mat{W}_{i,j} \eqeos \]
The (type-correct) multiplication of two matrices \linebreak $\mat W \in \realset^{n\times r}$ and $\mat N \in \realset^{r\times m}$ is defined as
\[ \big(\: \mat W \cdot \mat N \:\big)_{i,j} \defined \sum_{k=1}^r \mat{W}_{i,k}\cdot \mat{N}_{k,j} \eqeos \]
Identifying 
\begin{itemize}
	\item $n \times 1$ matrices with (column) vectors
	\item $1 \times m$ matrices with row vectors
	\item $1 \times 1$ matrices with scalars
\end{itemize}
as indicated above, makes the well-known dot product of $\vec v, \vec w \in \realset^n$
\[ \innerprod{\vec v}{\vec w} \defined \sum_{i=1}^{n} \vec{v}_i \cdot \vec{w}_i \]
just a special case of matrix multiplication.
The same holds for matrix-vector multiplication that is defined for a $n \times m$ matrix $\mat W$ and a vector $\vec x \in \realset^n$ as
\[ \big(\: \mat W \cdot \vec x \:\big)_{i} \defined (\mat W_{i,\bullet}) \cdot \vec x \eqeos \]
Matrices with the same number of rows and columns, i.e., with type $n\times n$ for some $n\in\natset$, are said to be \emph{square matrices}.

\subsection{Affine Functions}
\begin{definition}[Affine Function]\label{def:affine_functions}%
	A function $\aff\colon \realset^{n} \rightarrow \realset^m$ is called \emph{affine} iff it can be written as 
	\[	\aff(\vec x)= \mat{W} \vec x + \vec b   \]
	for some matrix $\mat W \in \realset^{m \times n}$ and vector $b \in \realset^m$.
	We identify the semantics and syntax of affine functions with the pair $(\mat W, \vec b)$ which can be considered 
	as a canonical representation of affine functions.
	Furthermore, we denote the set of all affine functions $\realset^n \rightarrow \realset^m$ as $\affset{n}{m}$
	with \emph{type} $(n,m)$.
	The untyped version $\affset{}{}$ is meant to refer to the set of all affine functions, independently of their type.
\end{definition}
\begin{lemma}[Operations on Affine Functions]\label{theo:affine_operations}%
	Let $\aff_1,\aff_2$ be two affine functions in canonical form, i.e., 
	\begin{align*}
		\aff_1(\vec x) &= \mat{W_1} \vec x + \vec{b_1} \\
		\aff_2(\vec x) &= \mat{W_2} \vec x + \vec{b_2} \eqeos
	\end{align*}
	Assuming matching types, the operations $+$ (addition), $\cdot$ (scalar multiplication), and $\circ$ (function application) can be calculated on the representation as
	\begin{align*}
		(s \cdot \aff_1)(\vec x)
		&= (s \cdot \mat {W_1}) \,\vec x + (s \cdot \vec {b_1}) \\
		(\aff_1 + \aff_2)(\vec x)
		&= (\mat {W_1}+\mat {W_2}) \,\vec x + (\vec {b_1} + \vec {b_2}) \\
		(\aff_2 \circ \aff_1)(\vec x)
		&= (\mat {W_2} \mat {W_1}) \,\vec x + (\mat {W_2} \vec {b_1} + \vec {b_2})
	\end{align*}
	resulting again in an affine function in canonical representation.
\end{lemma}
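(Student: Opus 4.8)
The plan is to verify each of the three identities by a direct calculation: unfold the definition of the operation on functions on the left-hand side, substitute the canonical forms $\aff_1(\vec x) = \mat{W_1}\vec x + \vec{b_1}$ and $\aff_2(\vec x) = \mat{W_2}\vec x + \vec{b_2}$, and then rearrange using the algebraic laws for matrices and vectors recalled in Section~\ref{subsec:linalg}. In each case the resulting expression is visibly of the form $\mat{W}\vec x + \vec{b}$, so the fact that we again obtain an affine function in canonical representation — the closure claim of the lemma — comes for free, and by Definition~\ref{def:affine_functions} the pair read off from this form \emph{is} the canonical representation.

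For scalar multiplication I would start from $(s \cdot \aff_1)(\vec x) = s \cdot \big(\aff_1(\vec x)\big) = s \cdot \big(\mat{W_1}\vec x + \vec{b_1}\big)$ and apply distributivity of scalar multiplication over vector addition together with its compatibility with matrix–vector multiplication, $s \cdot (\mat{W_1}\vec x) = (s \cdot \mat{W_1})\vec x$, to reach $(s \cdot \mat{W_1})\vec x + (s \cdot \vec{b_1})$. For addition, $(\aff_1 + \aff_2)(\vec x) = \aff_1(\vec x) + \aff_2(\vec x) = (\mat{W_1}\vec x + \vec{b_1}) + (\mat{W_2}\vec x + \vec{b_2})$; commutativity and associativity of vector addition regroup this as $(\mat{W_1}\vec x + \mat{W_2}\vec x) + (\vec{b_1} + \vec{b_2})$, and distributivity of matrix–vector multiplication over matrix addition yields $(\mat{W_1} + \mat{W_2})\vec x + (\vec{b_1} + \vec{b_2})$.

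The composition case is the only one with any content. Here I would compute $(\aff_2 \circ \aff_1)(\vec x) = \aff_2\big(\aff_1(\vec x)\big) = \mat{W_2}\big(\mat{W_1}\vec x + \vec{b_1}\big) + \vec{b_2}$, then use distributivity of matrix–vector multiplication over vector addition, $\mat{W_2}(\mat{W_1}\vec x + \vec{b_1}) = \mat{W_2}(\mat{W_1}\vec x) + \mat{W_2}\vec{b_1}$, followed by associativity of matrix multiplication, $\mat{W_2}(\mat{W_1}\vec x) = (\mat{W_2}\mat{W_1})\vec x$, to obtain $(\mat{W_2}\mat{W_1})\vec x + (\mat{W_2}\vec{b_1} + \vec{b_2})$, exactly as claimed.

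The ``hard part'' here is bookkeeping rather than mathematics. One must check that the hypothesis ``assuming matching types'' is precisely what makes every sum and product above well-defined — in particular that for composition the codomain dimension of $\aff_1$ equals the domain dimension of $\aff_2$, so that $\mat{W_2}\mat{W_1}$ is a legal matrix product — and that the distributivity and associativity laws invoked are exactly the ones induced by the component-wise definitions of Section~\ref{subsec:linalg}; if one insisted on full rigor, each could be re-derived by comparing $(i,j)$-entries, but given the preliminaries this is unnecessary.
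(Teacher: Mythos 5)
Your proof is correct: the paper states this lemma without proof, treating it as a routine consequence of the component-wise definitions in Section~\ref{subsec:linalg}, and your direct unfolding of each operation followed by distributivity and associativity is exactly the standard argument being taken for granted. The type-checking remark for composition is also the right observation and matches the paper's subsequent discussion of how types evolve under $\circ$.
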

It is well-known that the type resulting from function composition evolves as follows
\[ \circ\colon \affset{r}{m} \times \affset{n}{r} \to \affset{n}{m} \eqeos \]
The type of the operation is important for the closure axiom, the basis for most algebraic structures. 
This leads to the following well-known theorem \cite{axler1997linear}:
\begin{theorem}[Algebraic Properties]%
	\label{prop:algprop}%
	Denoting, as usual, scalar multiplication with $\cdot$ and \linebreak
	function composition with $\circ$, we have:
	\begin{itemize}
		\item
		$(\affset{n}{m}, +, \cdot) $ is a vector space and
		\item 
		$(\affset{n}{n}, \circ, \mathrm{id}) $ is a monoid. 
	\end{itemize}
\end{theorem}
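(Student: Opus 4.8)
The plan is to realize both structures as substructures of standard, well-understood ones, so that every axiom except closure is inherited, and closure is precisely what Lemma~\ref{theo:affine_operations} delivers.

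For the vector-space claim, I would start from the fact that the set of \emph{all} functions $\realset^n \to \realset^m$ is a real vector space under pointwise addition and pointwise scalar multiplication. The formulas in Lemma~\ref{theo:affine_operations} accomplish two things simultaneously: they show that the operations defined on canonical representations agree with the pointwise operations on the underlying functions, and that $\affset{n}{m}$ is closed under both. Since $\affset{n}{m}$ also contains the zero function (take $\mat W = \mat 0$, $\vec b = \vec 0$), it is a linear subspace of the function space and hence itself a vector space. If an explicit computation is preferred, one instead transports structure along the bijection $\affset{n}{m} \to \realset^{m\times n}\times\realset^m$, $\aff \mapsto (\mat W,\vec b)$: Lemma~\ref{theo:affine_operations} says this map intertwines $+$ and $\cdot$ with the coordinatewise operations of the product vector space $\realset^{m\times n}\times\realset^m$, so it is a linear isomorphism and the eight vector-space axioms transfer verbatim.

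For the monoid claim, I would argue analogously, now using the monoid of all functions $\realset^n \to \realset^n$ under composition, whose unit is $\mathrm{id}$. Associativity of $\circ$ is inherited, since function composition is associative on any set of functions. The identity function is affine, as $\mathrm{id}(\vec x) = \idmat{n}\vec x + \vec 0$, so $\mathrm{id} \in \affset{n}{n}$, and it is evidently a two-sided unit for $\circ$. Finally, closure of $\affset{n}{n}$ under composition is the special case $r = m = n$ of the typing rule $\circ\colon \affset{r}{m}\times\affset{n}{r}\to\affset{n}{m}$, which in turn follows from the composition formula $(\aff_2\circ\aff_1)(\vec x) = (\mat{W_2}\mat{W_1})\vec x + (\mat{W_2}\vec{b_1} + \vec{b_2})$ of Lemma~\ref{theo:affine_operations}, since $\mat{W_2}\mat{W_1}$ is a well-defined $n\times n$ matrix and $\mat{W_2}\vec{b_1}+\vec{b_2}$ a well-defined vector in $\realset^n$. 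Hence $(\affset{n}{n},\circ,\mathrm{id})$ is a submonoid of the composition monoid, in particular a monoid.

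There is no genuinely hard step; the only point worth spelling out is the identification of syntax and semantics tacitly made in Definition~\ref{def:affine_functions}. To justify calling $(\mat W,\vec b)$ a \emph{canonical} representative one checks that it is unique: if $\mat{W_1}\vec x + \vec{b_1} = \mat{W_2}\vec x + \vec{b_2}$ for all $\vec x$, then evaluating at $\vec x = \vec 0$ gives $\vec{b_1} = \vec{b_2}$, and evaluating at the unit vectors $\unitvec{1},\dots,\unitvec{n}$ gives $\mat{W_1} = \mat{W_2}$. This uniqueness is what makes the operations of Lemma~\ref{theo:affine_operations} well defined on representations and the bijection above a genuine bijection; after that, the verification is pure bookkeeping.
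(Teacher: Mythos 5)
Your proof is correct and complete: the subspace/submonoid argument (inheriting all axioms from the ambient function space and monoid, with closure supplied by Lemma~\ref{theo:affine_operations} and the well-definedness of the canonical representation checked via evaluation at $\vec 0$ and the unit vectors) is exactly the standard argument. The paper itself offers no proof, simply citing the result as well known from~\cite{axler1997linear}, so your write-up supplies precisely the routine verification the authors chose to omit.
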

This theorem can straightforwardly be lifted to untyped $\affset{}{}$ by simply restricting all operations 
to the cases where they are well-typed, i.e., where addition  is restricted to functions of the same type ($+_t$), and 
function composition to situation where the output type of the first function matches the input type of the second ($\circ_t$): 

\begin{theorem}[Properties of Typed Operations]%
	\label{prop:typed-algprop}%
	$(\affset{}{}, +_t, \cdot, \circ_t) $ is a typed algebra, i.e, an algebraic  \linebreak structure that is closed under well-typed operations.
\end{theorem}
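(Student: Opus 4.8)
The plan is to obtain the statement as a direct corollary of Lemma~\ref{theo:affine_operations}, the composition typing rule $\circ\colon\affset{r}{m}\times\affset{n}{r}\to\affset{n}{m}$, and Theorem~\ref{prop:algprop}. Since the excerpt defines a \emph{typed algebra} simply as an algebraic structure that is closed under its well-typed operations, the whole proof reduces to checking closure of $\affset{}{}$ under $+_t$, $\cdot$, and $\circ_t$; the equational laws then come for free from the typed case. First I would make the set-up precise by writing $\affset{}{} = \bigsqcup_{n,m>0}\affset{n}{m}$ and noting that the type of an affine function $\aff$ is unambiguous, since $n = \dim(\domain\aff)$ and $m = \dim(\codomain\aff)$; hence ``well-typed'' is meaningful and the partition into typed components is genuine.

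Then I would dispatch the three closure obligations using the canonical representations supplied by Lemma~\ref{theo:affine_operations}. For scalar multiplication, $s\in\realset$ and $\aff_1\in\affset{n}{m}$ yield the pair $(s\cdot\mat{W_1},\,s\cdot\vec{b_1})$, which is the canonical form of an element of $\affset{n}{m}$, so $\cdot\colon\realset\times\affset{n}{m}\to\affset{n}{m}$ and in particular $s\cdot\aff_1\in\affset{}{}$. For addition restricted to a matching pair $\aff_1,\aff_2\in\affset{n}{m}$, the pair $(\mat{W_1}+\mat{W_2},\,\vec{b_1}+\vec{b_2})$ witnesses $\aff_1 +_t \aff_2\in\affset{n}{m}$; on non-matching types $+_t$ is undefined and closure holds vacuously. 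For composition of $\aff_1\in\affset{n}{r}$ with $\aff_2\in\affset{r}{m}$, the pair $(\mat{W_2}\mat{W_1},\,\mat{W_2}\vec{b_1}+\vec{b_2})$ witnesses $\aff_2\circ_t\aff_1\in\affset{n}{m}$, exactly as predicted by the quoted typing rule for $\circ$. This establishes closure under all three well-typed operations, which is all that the notion of typed algebra demands.

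For completeness I would add one remark: every equational law present in Theorem~\ref{prop:algprop}---the vector-space axioms on each $\affset{n}{m}$ and the monoid axioms on each $\affset{n}{n}$ with unit $\id$---continues to hold in $\affset{}{}$, because any instance of such a law mentions only functions lying in a single typed component (one fixed $\affset{n}{m}$, or one fixed $\affset{n}{n}$), where the law already holds by Theorem~\ref{prop:algprop}. The only subtlety worth flagging is that in the typed setting the neutral elements are type-indexed families---a zero in every $\affset{n}{m}$ and an identity in every $\affset{n}{n}$---rather than single global constants; once the signature of the typed algebra is written with this in mind, there is nothing further to prove. I expect that this small amount of bookkeeping---pinning down the precise signature and the meaning of ``well-typed''---is the only mildly delicate point, since everything else is an immediate appeal to results already established in the excerpt.
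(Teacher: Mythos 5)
Your proposal is correct and follows essentially the same route the paper takes: the paper proves this theorem only by the remark preceding it (restrict $+$ and $\circ$ to well-typed instances, with closure coming from the canonical forms in Lemma~\ref{theo:affine_operations} and the typing rule for $\circ$, and the laws inherited componentwise from Theorem~\ref{prop:algprop}). Your additional bookkeeping about the disjoint-union structure of $\affset{}{}$ and the type-indexed neutral elements is a faithful elaboration of that sketch, not a different argument.
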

\subsection{Piece-wise Affine Functions}
Piece-wise affine functions (PAFs) are studied extensively in tropical geometry (introductory book \cite{maclagan2021tropical}, in context of machine learning \cite{maragos2021tropical}), are used in interpolation (given their strong connection to splines and Riemann integrals), and are more and more analyzed with respect to their connection to neural networks 
\cite{Hinz2021,chu2018exact,montufar2014number,gehr2018ai2,pascanu2013number,zhang2020empirical,zhang2018tropical,Hanin2019,serra2018bounding,hanin2019complexity,Woo2018,Sudjianto2020UnwrappingTB,raghu2017expressive,arora2016understanding} (specifically to PLNNs, see \cref{def:nn}).
PAFs are usually defined over a polyhedral partitioning of the pre-image space \cite{Brondsted93,GorokhovikZ94,Ovchinnikov2010}.
Polyhedra arise by intersecting halfspaces:
\begin{definition}[Hyperplanes and Halfspace]
Let $\vec w \in \realset^n$ and $b \in \realset$.
Then the set 
\[ p = \set{\vec x \in \realset^n \given \innerprod{\vec w}{\vec x} + b = 0} \]
is called a \emph{hyperplane} of $\realset^n$.
A hyperplane partitions $\realset^n$ into two convex subspaces, called \emph{halfspaces}.
The positive and negative halfspaces of $p$, respectively, are defined as
\begin{align*}
	p^+ &\defined \set{\vec x \in \realset^n \given \innerprod{\vec w}{\vec x} + b > 0 } \\
	p^- &\defined \set{\vec x \in \realset^n \given \innerprod{\vec w}{\vec x} + b < 0 } \eqeos
\end{align*}
\end{definition}
\begin{definition}[Polyhedron]
	A polyhedron $Q \subseteq \realset^n$ is the intersection of $k$ halfspaces for some
	natural number $k$
	\[ Q = \bigcap_{i=1}^k \set{ \vec x \in \realset^n \given \innerprod{\vec {w_i}}{\vec x} + {b_i} \leq 0 } \eqeos \]
\end{definition}

\begin{definition}[Piece-wise Affine Function]
	A function $\paff\colon \realset^{n} \rightarrow \realset^m$ is called \textit{piece-wise affine} if it can be written as
	\[ \paff(\vec x)= \aff_i(\vec x) \; \text{ for } \; \vec x \in Q_i \]
	where $Q = \set{Q_1,\dots,Q_k}$ is a set of polyhedra that partitions the space $\realset^n$ and $\aff_1, \dots, \aff_k$ are affine functions. We call $\aff_i=\mat W_i \vec x +\vec b_i$ with $1\leq i\leq k$ the function associated with polyhedron $Q_i$.
\end{definition}

\subsubsection{Norms and Distances}
\label{subsec:norms}

\begin{figure}[htp]
	\centering
	\subfloat[$B_1^{2}$]{%
		\includegraphics[width=.48\columnwidth]{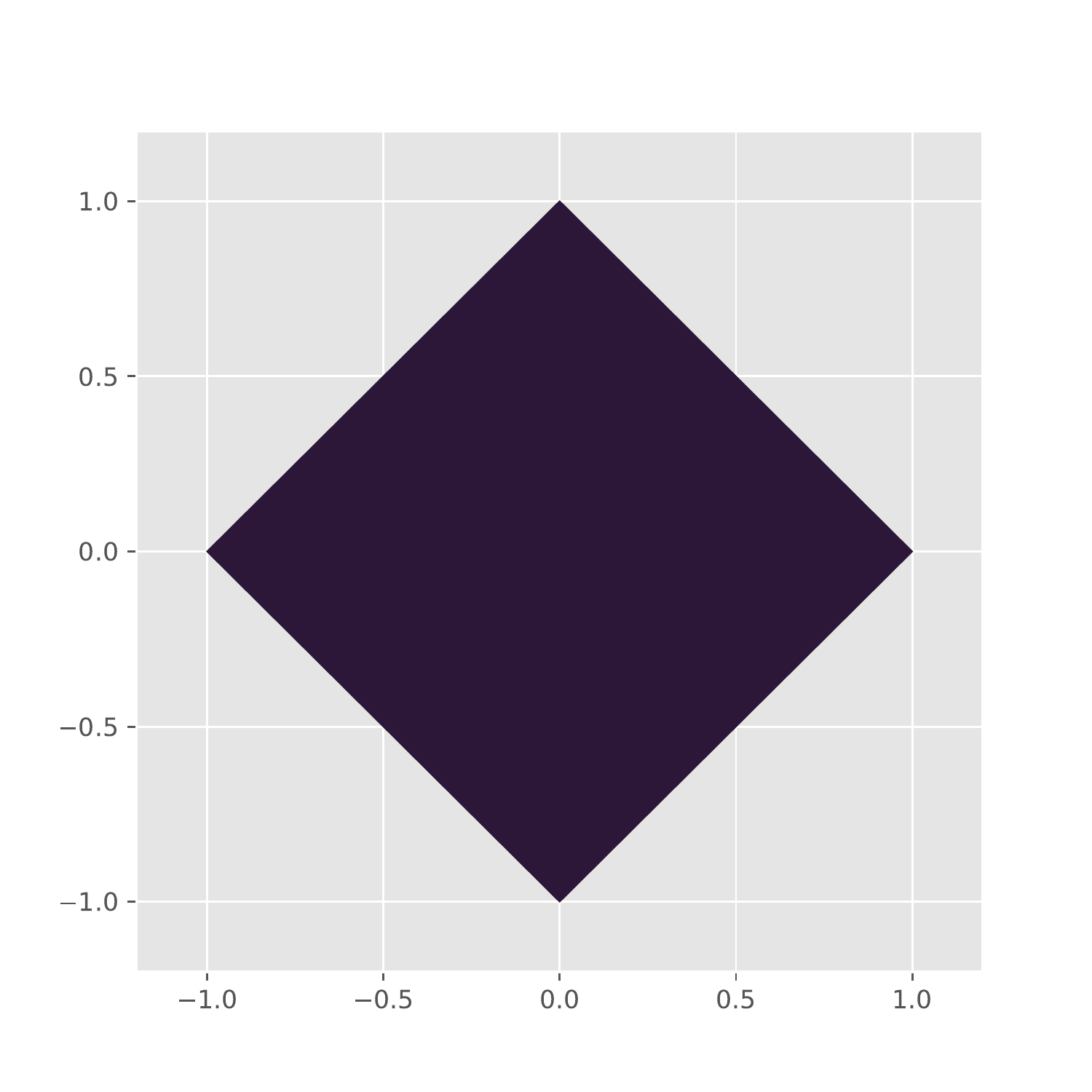}
		\label{fig:sfig1}
	} 
	\subfloat[$B_2^{2}$]{%
		\includegraphics[width=.48\columnwidth]{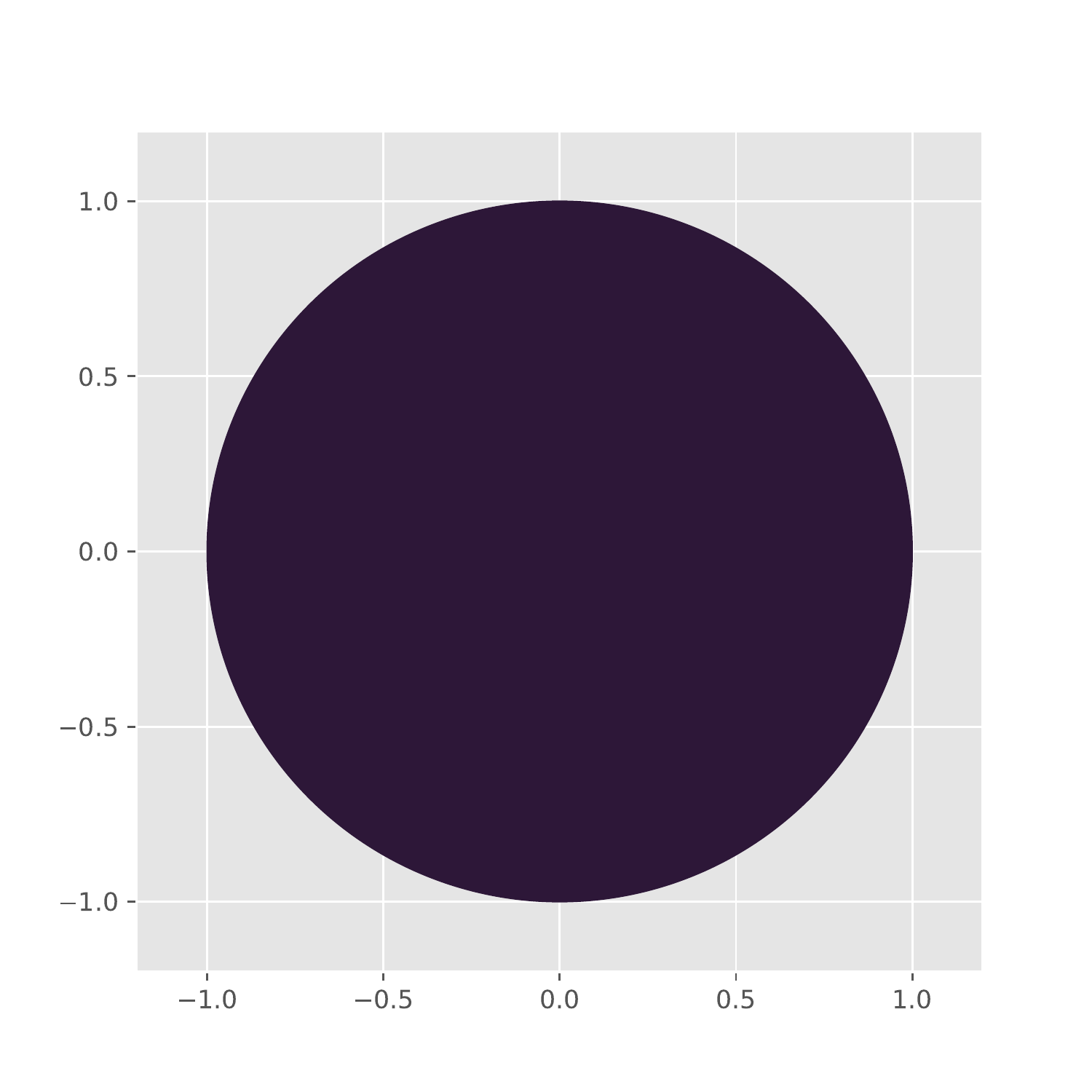}
		\label{fig:sfig2}
	} \\ 
	\subfloat[$B_\infty^{2}$]{%
		\includegraphics[width=.48\columnwidth]{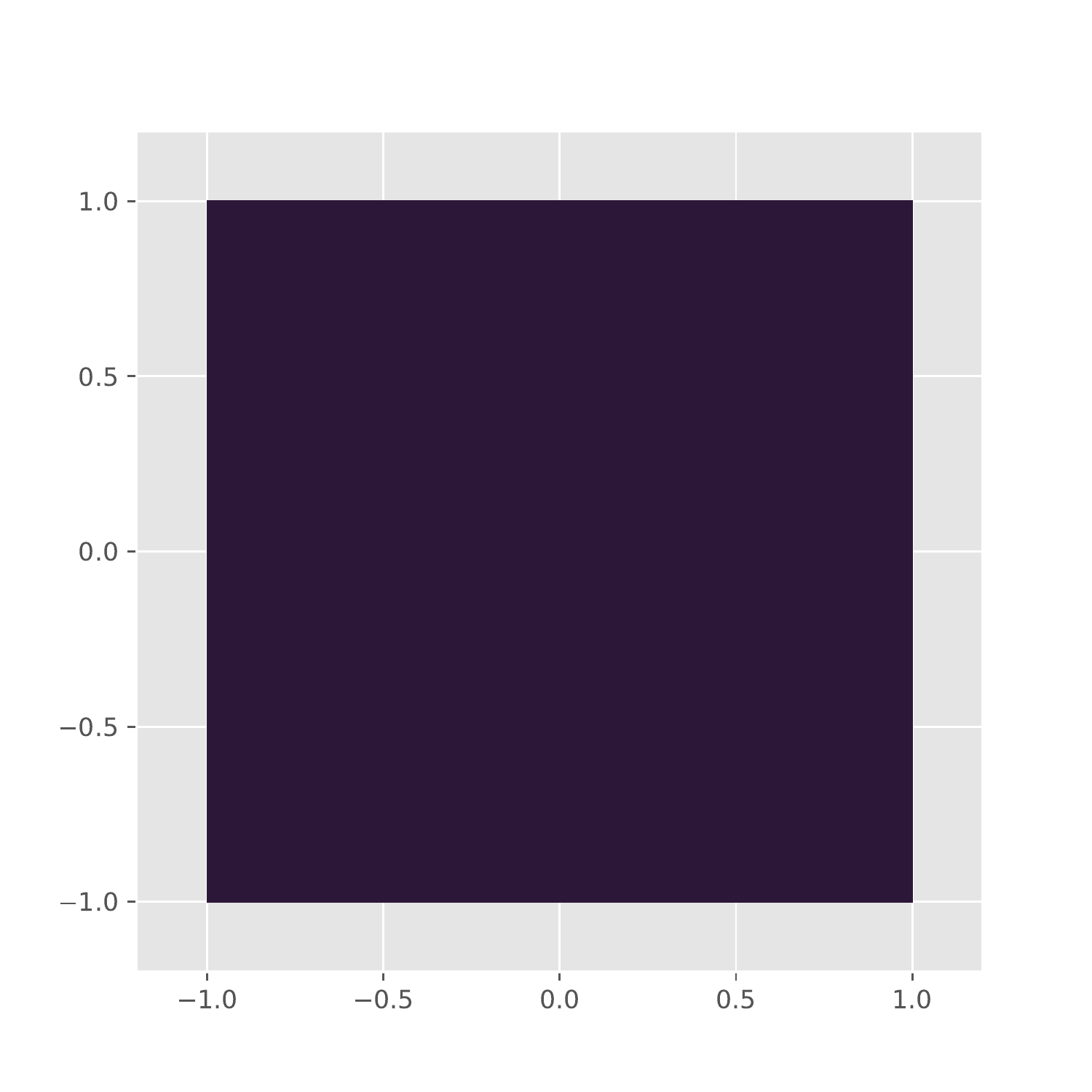}
		\label{fig:sfig3}
	}
	\caption{The two-dimensional $m$-balls $B_m^{2}$ for $m=1$ (top left), $m=2$ (top right) and $m=\infty$ (bottom).}
	\label{fig:lballs}
\end{figure}
Throughout this work, we will often be concerned with the behavior of neural networks and how it changes when a point is slightly altered. Thus, we will often be concerned with different \emph{neighborhoods} of points.
These are formalized in mathematics using metric spaces and normed spaces \cite{magnus2022metric}.
For our purposes, however, a special type of normed spaces defined by so-called $l$-norms is sufficient:

\begin{definition}[L-Norms]
For $m \in \natset$, the $l_m$-norm is the function ${\norm{\cdot}_m \colon \realset^n \rightarrow \realset}$ defined by:
\[ \norm{\vec x}_m \defined \sqrt[m]{\sum_{i=1}^n \abs{x_i}^m} \eqeos \]
\end{definition}
Important $l$-norms are the $l_1$ norm
\[ \norm{\vec x }_1 \defined \sum_{i=1}^n \abs{x_i} \]
and the $l_2$ norm or euclidean norm\footnote{For real vector spaces $\realset^n$ the euclidean norm is the canonical norm as $\norm{\vec x}_2 = \innerprod{\vec x}{\vec x}$.}
\[ \norm{\vec x }_2{}^2 \defined \sqrt{\sum_{i=1}^n x_i^2} \eqeos \]
Another important norm is the so-called $l_{\infty}$ norm. While not technically an $l$-norm according to the definition above, it arises naturally as the limit of the $l$-norms as $m$ approaches infinity and is defined by:
\[ \norm{\vec x}_{\infty} \defined \max\nolimits_{i \in \{1,\ldots, n \}} \abs{x_i} \]
With these definitions we can now formalize the neighborhood of a point.
\begin{definition}[Unit Ball]\label{def:mball}
	For a given $l_m$-norm $\norm{}_m$ we define the corresponding $m$-unit ball as
	\begin{equation}\label{eq:m_ball}
		B^n_m \defined \set{ \vec x \in \realset^n \given \norm{\vec x}_m \leq 1} \eqeos
	\end{equation}
\end{definition}
The unit ball is a closed, convex subset of $\realset^n$ centered at the origin.
It generalizes the notion of a disk with radius 1 to both higher dimensions ($n > 2$) and non-euclidean spaces ($m \neq 2$).
The relevant unit balls for this paper are illustrated in \cref{fig:lballs}.
Every generic $m$-ball of $\realset^n$ can be constructed using translation
\[ \vec y + B^n_m = \set{ \vec x \in \realset^n \given \norm{\vec x - \vec y}_m \leq 1} \]
and scaling
\[ r B^n_m = \set{ \vec x \in \realset^n \given \norm{\vec x}_m \leq r} \]
of unit $m$-balls.
In the latter case $r$ is called the radius.
If it is clear from the context, we omit the dimensionality of the $m$-ball.

\subsection{Neural Networks}
\label{subsec:NN}
The following brief introduction to neural networks is based on \cite{Goodfellow-et-al-2016}, but in its presentation adapted to better fit the context of this work. 

Neural networks are perhaps todays most important machine learning models that are most succinctly characterized by their layered structure. There exist numerous neural network architectures that one might consider. For this work, we focus on the very general class of fully connected neural networks and define neural networks as follows:
\begin{definition}[Piece-wise Linear Neural Networks]%
\label{def:nn}%
A piece-wise linear neural network $\nu$ with $l$ layers is a machine learning model consisting of an alternating sequence of $l$ \emph{affine preactivation functions} $\aff_i$ and ${l-1}$ \emph{ReLU activation functions} $\phi$:
\[ \nu = \aff_{l+1} \netcomp \phi \netcomp \aff_l \netcomp \dots \netcomp \phi \netcomp \aff_1 \eqeos \]
For the PLNN to be syntactically correct, the affine functions must be compatible, i.e., the output dimension of each preactivation must match the input dimension of the following.
\end{definition}

In accordance to standard neural network terminology, we call the combination of a preactivation with its activation $\phi \circ \alpha_i$ the $i$-th layer of the neural network.

In the following paragraphs preactivations and activations are properly introduced.
After that, the semantics of a PLNN can be defined in terms of its components.
Lastly, a common complexity measure of PLNN is presented.

\paragraph*{Preactivations.}\ 
In traditional applications, the concrete affine functions $\alpha_k$ of a PLNN $\nu$, as defined in \cref{def:nn}, would result from a training process, usually using gradient descent based optimization techniques~\cite{Goodfellow-et-al-2016}, where the PLNN is trained to accurately predict desired outputs on a given dataset.

\paragraph*{Activations.}\ The activation function $\phi$ is an architectural design choice made a-priori by the user.
The primary purpose of the activation function is to introduce \emph{non-linearity} into the neural network, which can drastically increase the amount of functions that can be approximated.
For the purposes of this paper, we exclusively use the rectified linear unit (ReLU) function. 

\paragraph*{ReLU.}\ The ReLU function $\phi$ has proven to be a successful activation function in practical applications, combining convenient properties of linear functions with a sufficient degree of non-linearity. It is prominently recommended as the default choice of activation function for fully connected neural networks~\cite{Goodfellow-et-al-2016}.
Furthermore, due to the simple structure of the ReLU function, neural networks with ReLU activations lend themselves well to formal analysis and are typically considered in verification tasks \cite{katz2017reluplex,bak2021second}.

\begin{figure}
	\centering
	\includegraphics[width=0.7\columnwidth]{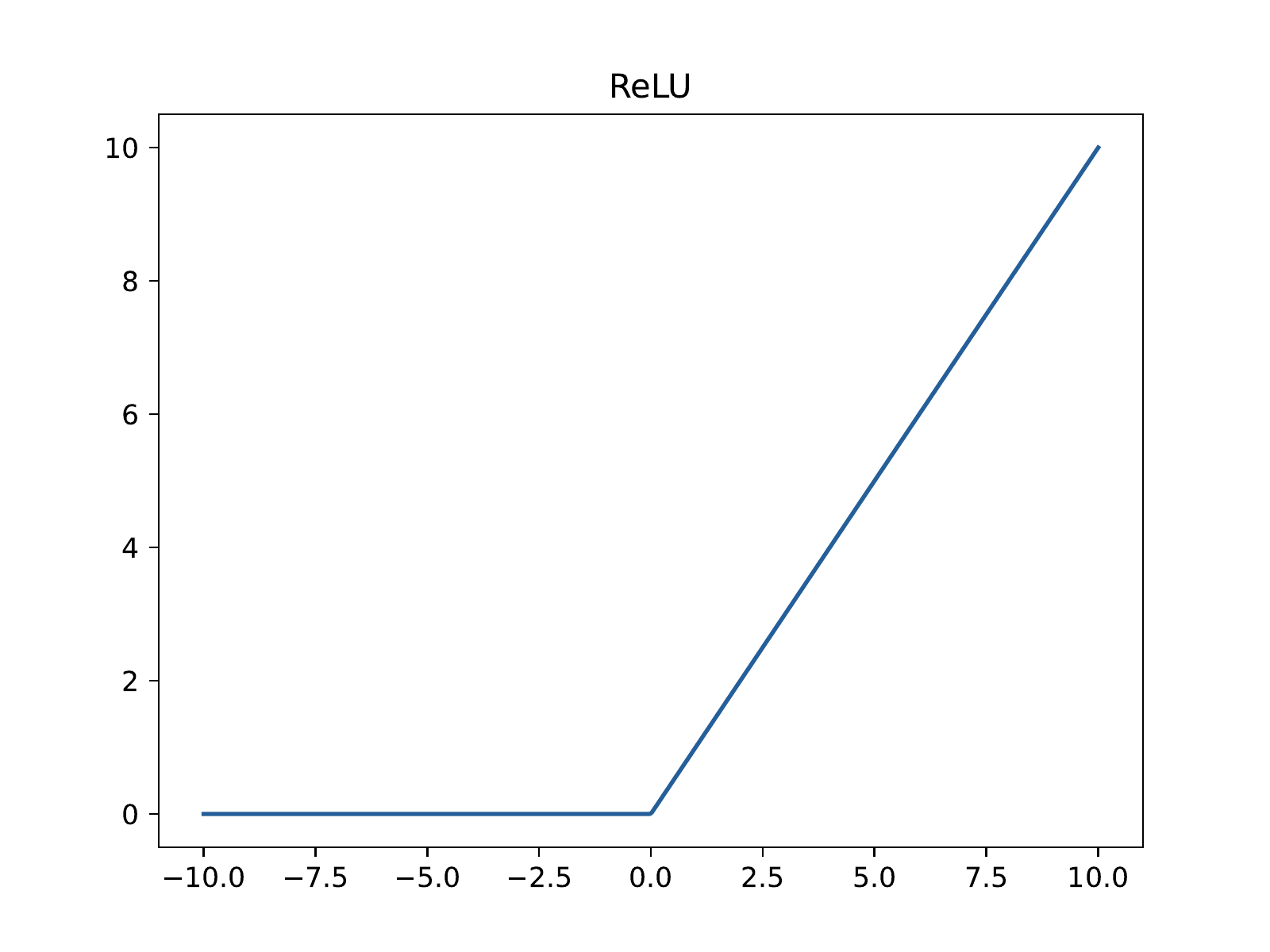}
	\caption{The function plot of the ReLU function.}
	\label{fig:relu}
\end{figure}

\begin{definition}[Rectified Linear Unit]%
	\label{def:relu}%
	The one-dimensional ReLU function ${\phi_1 \colon \realset \rightarrow \realset}$ is defined as the positive part of its argument:
	\[ \phi_1(x)= \max(0,x) \ . \]
	The $k$-dimensional ReLU function ${\phi_k \colon \realset^k \rightarrow \realset^k}$ is the elementwise application of the one-dimensional ReLU function:
	\[ \phi_k \big( (x_1, x_2,\dots,x_k)^\transpose \big) = \big( \phi_1(x_1), \phi_1(x_2),\dots, \phi_1(x_k) \big)^\transpose \eqeos \]
\end{definition}
As the ReLU activation function is the only activation function we consider, we will use $\phi$ for the remainder of this paper exclusively to refer to the ReLU function and omit the explicit mention of the dimensionality when it is clear from context. 

\begin{definition}[PLNN Semantics]%
	\label{def:plnn-semantis}%
	The semantics of a piece-wise linear neural network $\nu$ is a piece-wise affine function ${\netsem{\nu} \colon \realset^n \rightarrow \realset^m}$ given by the sequential evaluation of its layers:
	\[ \netsem{\nu} = \aff_{l+1} \circ \phi \circ \aff_l \circ \dots \circ \phi \circ \aff_1 \eqeos \]
\end{definition}
For evaluation, a vector $\vec{x}\in\realset^n$ is passed layer by layer through the PLNN.
The data-flow is unidirectional and using the above notation from right to left.

Note that, given the close relationship between a PLNN's syntax and semantics, many works in deep learning choose to not clearly separate the  syntax and semantics of PLNN's. A transition between the two definitions can be easily achieved by replacing ';' with '$\circ$' in \cref{def:nn}.

\begin{figure}
	\centering
	\includegraphics[width=.8\columnwidth]{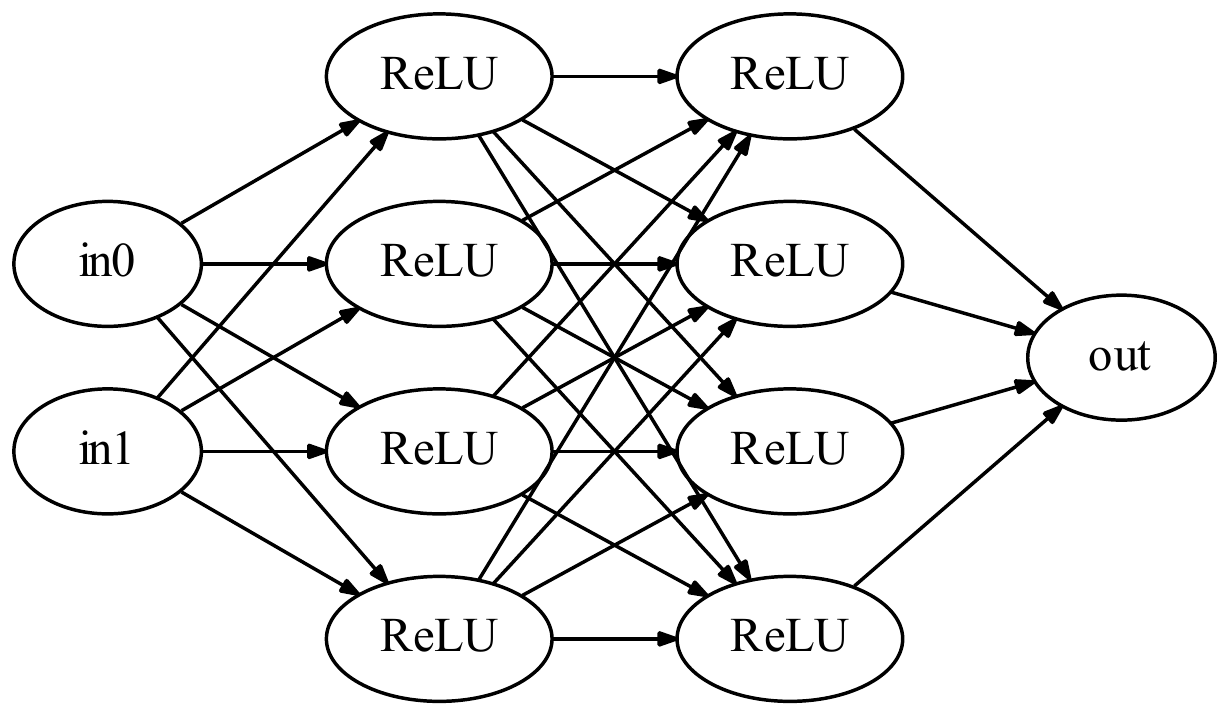}
	\caption{A simple PLNN with two hidden layers and ReLU activations.}
	\label{fig:plnn_example}
\end{figure}

Traditionally, neural networks are visualized as computation graphs, where the nodes are the eponymous ``neurons''. 
There, each affine function $\alpha_i \colon \realset^n \rightarrow \realset^m$ is visualized as a bipartite graph connecting $n$ input neurons to $m$ output neurons.
An example of such graph is given in \cref{fig:plnn_example}.

From the representation used in \cref{def:nn}, the number of neurons of a neural network can be computed through the preactivations as follows:
Let $\nu = \aff_{l+1} \netcomp \dots \netcomp \aff_1$ with $\aff_i\colon\realset^{n_i}\to\realset^{n_{i+1}}$ then the total number of neurons of $\nu$ is given by
\[ \sum_{i=1}^{l+1} n_{i+1} \eqeos \]
The number of neurons is a natural measure of ``size'' in a neural network, and it is well-known that the semantic complexity of functions---measured in the number of linear regions that are needed to characterize them---that a neural network can represent increases exponentially in its number of neurons~\cite{bianchini2014complexity,montufar2014number,fazlyab2019efficient}.

\subsubsection{Neural Networks Classifiers}
\label{sec:nnclassification}
As defined in \cref{def:nn}, PLNNs are fundamentally representations of continuous functions $\realset^n \rightarrow \realset^m$. However, they are frequently employed in classification tasks where the co-domain is instead a discrete set of classes $\{1,\dots, c\}$. To bridge this gap, one typically proceeds by training a neural network $\nu \colon \realset^n \rightarrow \realset^c$ and associating each component $y_i$ of its output $\vec y=\nu(\vec x)$ with the $i$-th class. Then, the class with the largest $y_i$ is chosen for classification.

This is formalized by the argmax function.
\begin{definition}[Argmax]
The $k$-dimensional argmax function 
\[ \argmax\nolimits_k\colon \realset^k \rightarrow \{1,\dots, k\} \]
is defined as
\[ \argmax(x_1,\dots, x_k) = j \] iff $j$ is the smallest index for which $x_j \geq x_i$ holds for all $1 \leq i \leq k$. 
\end{definition}
Again, when it is clear from context, we omit the index denoting the dimensionality and simply write $\argmax$. 

As described before, the argmax function can be used to convert PLNNs into classifiers. This naturally leads us to define PLNN classifiers.

\begin{definition}[PLNN Classifiers]
For a PLNN $\nu$ with $\sem{\nu} \colon \realset^n \rightarrow \realset^m$, the corresponding \emph{PLNN classifier} $\nu_c \colon \realset^n \rightarrow \{1,\dots, m\}$ is defined as
\[ \nu_c = {\argmax} \circ {\netsem{\nu}} \eqeos \]
\end{definition}
\subsection{Typed Affine Decision Structures}
\label{subsec:TADS}

Central to our explanation approach is a decision-tree-like data structure that we call \emph{Typed Affine Decision Structure} (TADS).
Based on the transformation process presented in \cite{schlueter2022towards}, it is possible to transform a PLNN $\nu$ into a \emph{semantically equivalent} TADS $\theta(\nu)$.
The transformation is based on the common syntactical representation of PLNNs and is compositional in the layers.

PLNNs explanation and verification is challenging because of the complex data flow of PLNNs \cite{schlueter2022towards}.


\begin{figure}
	\centering
	\includegraphics[width=.7\columnwidth]{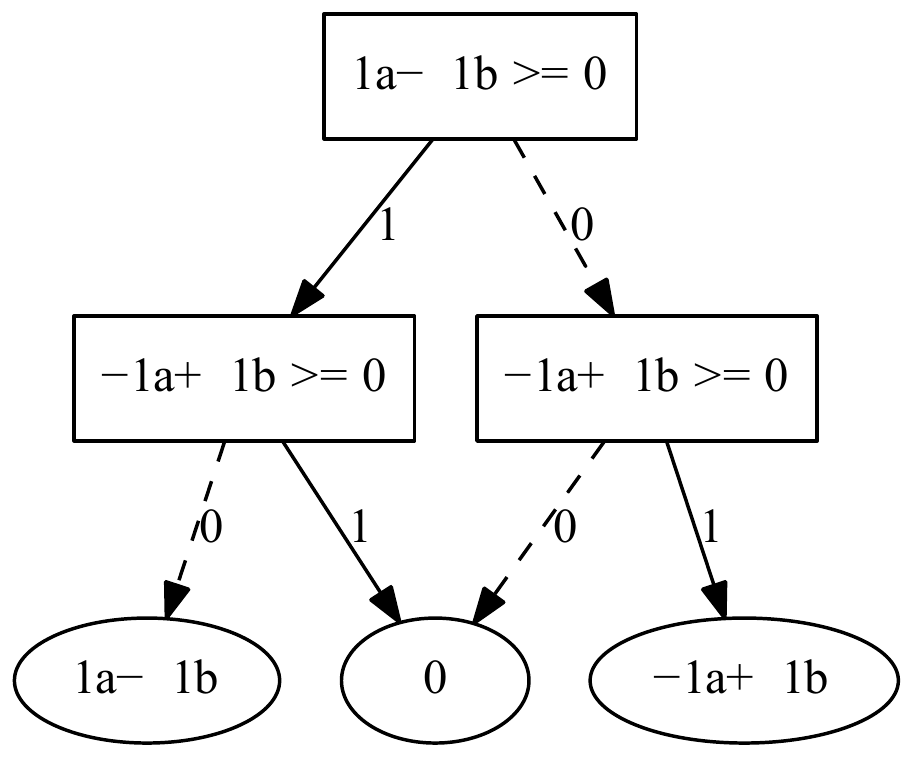}
	\caption{A simple TADS implementing the piece-wise affine function $\abs{x_1 - x_2}$.}
	\label{fig:tads_example}
\end{figure}

\paragraph*{Data Structure.}\ Skipping implementation details, TADS can be introduced intuitively using decision trees.
In a decision tree, one distinguishes two types of nodes: 
\begin{enumerate}
	\item Inner nodes have decision predicates. For every possible evaluation of that predicate, the node has exactly one successor.
	\item Leaves are elements from a given universe that one wants to distinguish.
\end{enumerate}
For TADS, specifically, leaves are from the universe of affine functions and decision predicates are affine inequalities\footnote{Also called inhomogeneous inequalities.}.
An example of a TADS can be found in \cref{fig:tads_example}.
TADS structurally resemble decision trees that are widely considered explainable machine learning models, i.e., they can, by virtue of their structure, be understood by a human\cite{guidotti2018explaining}.

Based on this introduction using decision trees one can straightforwardly define TADS. 
\begin{definition}[TADS]
	A TADS $t = (N, \rightarrow, \zeta)$ is a decision DAG\footnote{A decision DAG is a decision tree where isomorphic subtrees have been merged, see \cite{gossen2021algebraic}.} $(N, \rightarrow, \zeta)$ with root $\zeta$ whose nodes $N$ have the following two types:
	\begin{enumerate}
		\item Inner nodes are called decisions or predicates. They consist of an affine inequality and two successors, one if the predicate is true and one if not.
		\item Leaves are also called terminals. They are affine functions and have no successors.
	\end{enumerate}
	To be syntactically correct, all nodes (i.e., all inequalities and affine functions) must accept input vectors with a fixed number of entries.
	This is called the \emph{input dimension} of the TADS\@.
	Similarly, all terminals must map input vectors into a common output space.
	The dimensionality of this output space is called the \emph{output dimension}.\footnote{Note that the input and output vectors always result from the real vector spaces $\realset^n$ and $\realset^m$, respectively. Therefore, the number of entries is well-defined and equal to the dimensionality of the full vector space of inputs and outputs.}
	For given input dimension $n$ and output dimension $m$ we define the set of all TADS as $\tadsset{n}{m}$.
\end{definition}
TADS are sequentially evaluated like a decision tree.
\begin{definition}[TADS Evaluation]
	The semantic function of TADS\footnote{In this definition we use \emph{currying}, as known in, e.g., functional programming.}
	\[ \semtads{} \colon \tadsset{n}{m} \to \realset^n \to \realset^m \]
	is inductively defined as
	\begin{alignat*}{2}
		\semtads{p}(\vec x) &\defined \semtads{p'}(\vec x) \qquad &&\text{if } p \transition{l} p' \land \sem{p}(\vec x) = l \\
		\semtads{\aff}(\vec x) &\defined \aff(\vec x) \qquad &&\text{if } \aff \not\to
	\end{alignat*}
	for a TADS $t = (N, \to, \zeta)$, with $p,p',\aff \in \nu$.
	For convenience we introduce the shorthand $\tadssem{t} \defined \tadssem{\zeta}$.
\end{definition}

Semantically, both PLNNs and TADS represent piecewise affine functions. Moreover, PLNNs can be transformed into TADS:
\begin{lemma}[Trinity: PLNNs, TADS, and PAFs]
	There exists a semantics preserving transformation
	\[ \theta \colon \netset \to \tadsset{}{} \]
	from PLNNs to TADS, such that the following diagram commutes:
	\begin{center}
		\begin{tikzcd}
			 \netset\ar[r,"\theta"] \ar[dr,swap,"\netsem{}"] & \tadsset{}{} \ar[d, "{\tadssem{}}"] \\
			& \pafset 
		\end{tikzcd}
	\end{center}
\end{lemma}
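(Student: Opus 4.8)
The plan is to construct the transformation $\theta$ explicitly by induction on the layered structure of a PLNN, mirroring the compositional semantics of Definition~\ref{def:plnn-semantis}. The key observation is that a PLNN is built from two kinds of primitive pieces: affine preactivations $\aff_i$ and ReLU activations $\phi$. An affine function is trivially a TADS consisting of a single terminal node (a decision DAG with no inner nodes), so $\theta(\aff)$ is immediate. For the ReLU function $\phi_k$ on $\realset^k$, I would first handle $\phi_1\colon\realset\to\realset$ as a TADS with one decision node testing the inequality $x \leq 0$, whose true-branch terminal is the zero function and whose false-branch terminal is the identity; this is precisely the shape of \cref{fig:tads_example}'s building block. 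The $k$-dimensional $\phi_k$ is then obtained by a product/parallel-composition construction on TADS that branches on all $k$ coordinate sign conditions, yielding a decision DAG with $2^k$ terminals, each an affine (in fact diagonal-projection) map — which is exactly a piecewise affine function agreeing with $\phi_k$.

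The inductive step requires a composition operation $\netcomp$ (or $\circ$) on TADS: given $t_1 \in \tadsset{n}{r}$ and $t_2 \in \tadsset{r}{m}$, produce $t_2 \circ t_1 \in \tadsset{n}{m}$ with $\tadssem{t_2 \circ t_1} = \tadssem{t_2} \circ \tadssem{t_1}$. The natural construction replaces each terminal $\aff$ of $t_1$ with a copy of $t_2$ in which every decision predicate $\innerprod{\vec w}{\vec y}+b \lessgtr 0$ (over the intermediate variable $\vec y$) is pulled back along $\aff$ — i.e., substituting $\vec y = \aff(\vec x) = \mat W \vec x + \vec b$ turns it into the predicate $\innerprod{\vec w}{\mat W \vec x + \vec b}+b \lessgtr 0$, which is again an affine inequality in $\vec x$ by \cref{theo:affine_operations} — and every terminal $\aff'$ of that copy of $t_2$ is replaced by $\aff' \circ \aff$, again affine by \cref{theo:affine_operations}. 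Merging isomorphic subtrees keeps the result a well-formed decision DAG. Then I would define $\theta(\nu) \defined \theta(\aff_{l+1}) \circ \theta(\phi) \circ \cdots \circ \theta(\phi) \circ \theta(\aff_1)$, which is syntactically well-typed exactly when $\nu$ is, because the TADS composition inherits the same typing discipline as $\circ$ on affine functions.

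Commutativity of the diagram — i.e.\ $\tadssem{\theta(\nu)} = \netsem{\nu}$ as elements of $\pafset$ — then follows by structural induction, using (i) soundness of the base cases, which is a direct unfolding of $\semtads{}$ on the single-node or sign-branching TADS against the definitions of $\phi_1$, $\phi_k$, and affine evaluation; and (ii) a lemma that TADS composition is sound, $\tadssem{t_2 \circ t_1}(\vec x) = \tadssem{t_2}(\tadssem{t_1}(\vec x))$, proved by tracing an evaluation path: starting from $\vec x$, the pulled-back predicates in the grafted copy of $t_2$ evaluate identically to the original predicates of $t_2$ at the point $\tadssem{t_1}(\vec x)$ (this is the pullback-correctness step), so the same terminal of $t_2$ is reached, and its composed terminal $\aff' \circ \aff$ returns $\aff'(\aff(\vec x)) = \tadssem{t_2}(\tadssem{t_1}(\vec x))$. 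Finally, since both $\netsem{\nu}$ and $\tadssem{\theta(\nu)}$ land in $\pafset$ and agree pointwise, they are equal there, and the triangle commutes.

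I expect the main obstacle to be the bookkeeping around the TADS \emph{composition} operation and its soundness proof — in particular, making the "graft a copy of $t_2$ at each leaf of $t_1$ with pulled-back predicates" construction precise on decision DAGs (as opposed to trees, where merging of isomorphic subtrees needs care) and verifying the pullback-correctness invariant cleanly. Everything else is routine: the base cases are small finite case analyses, typing is inherited directly from \cref{theo:affine_operations} and the composition typing $\circ\colon \affset{r}{m}\times\affset{n}{r}\to\affset{n}{m}$, and the induction on layers is immediate once the composition lemma is in hand. One may also appeal directly to \cite{schlueter2022towards}, which already establishes this transformation; the argument above is essentially a recapitulation of that construction adapted to the notation of this paper.
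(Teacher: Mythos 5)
Your construction is correct and is essentially the same one the paper relies on: the paper states this lemma without an in-text proof, deferring to the layer-compositional transformation of \cite{schlueter2022towards}, which is exactly what you recapitulate (affine leaves, sign-branching ReLU TADS, and leaf-grafting composition with pulled-back predicates). The soundness argument via structural induction on layers plus a composition lemma is the intended route, so no gap here.
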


\highlight{TADS are computationally transparent and semantically equivalent to PLNNs.}

\paragraph*{Algebraic Properties. }\ Much like ADDs and BDDs, TADS inherit the algebraic properties of their leaf algebra.
For TADS, the leave algebra---affine functions---forms a vector space.
Using \emph{lifting} one can directly implement the vector space operations on TADS \cite{schlueter2022towards}.
\begin{lemma}[Lifting]
	Lifting addition ($+$) and scalar multiplication ($\cdot$) from affine functions to TADS gives semantically equivalent operators to their PAF counterparts, i.e., for all TADS $t_1,t_2\in\tadsset{}{}$
	\begin{align*}
		 \semtads{t_1 + t_2} &= \semtads{t_1} + \semtads{t_2} \\ 
		 \semtads{s \cdot t} &= s \cdot \semtads{t} \eqeos
	\end{align*}
\end{lemma}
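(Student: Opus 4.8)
The plan is to establish both equations by structural induction on the TADS, mirroring the inductive definition of $\semtads{}$ and the definition of the lifted operations. The key observation is that lifting is defined so that the tree structure (the decision nodes and their predicates) of $t_1 + t_2$ is obtained by ``merging'' the decision structures of $t_1$ and $t_2$, while leaves are combined pointwise using the corresponding operation on affine functions; scalar multiplication $s \cdot t$ leaves the decision structure of $t$ untouched and scales each leaf. Since affine functions are closed under $+$ and scalar multiplication by \cref{theo:affine_operations}, the merged structure is again a well-typed TADS, so the statement is well-posed.

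First I would treat scalar multiplication, as it is the simpler case. I would prove by induction on the structure of $t$ that $\semtads{s \cdot t}(\vec x) = s \cdot \semtads{t}(\vec x)$ for every $\vec x \in \realset^n$. In the base case $t = \aff$ is a single affine leaf; by definition of lifting, $s \cdot t$ is the leaf $s \cdot \aff$, and by \cref{theo:affine_operations} we have $(s\cdot\aff)(\vec x) = s\cdot(\aff(\vec x))$, which by the evaluation rule for terminals equals $s \cdot \semtads{t}(\vec x)$. In the inductive step, $t$ has root predicate $p$ with successors reached along the two possible truth values; since scalar multiplication does not alter predicates, $s\cdot t$ has the same root predicate, and for a given $\vec x$ the evaluation follows the same branch in both $t$ and $s\cdot t$. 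Applying the induction hypothesis to the chosen subtree closes the case.

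Next I would handle addition, again by induction — here on the combined structure, i.e., on the sum of the sizes (or on a suitable well-founded order) of $t_1$ and $t_2$. The base case is when both $t_1 = \aff_1$ and $t_2 = \aff_2$ are leaves: lifting yields the leaf $\aff_1 + \aff_2$, and \cref{theo:affine_operations} gives $(\aff_1+\aff_2)(\vec x) = \aff_1(\vec x) + \aff_2(\vec x) = \semtads{t_1}(\vec x) + \semtads{t_2}(\vec x)$. For the inductive step, at least one of the two has a root predicate; by the merging rule for lifting, $t_1 + t_2$ branches on this predicate, and evaluating at $\vec x$ selects a successor that is the lifted sum of the corresponding sub-TADS of $t_1$ and $t_2$ (where a leaf argument is simply carried along unchanged). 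Because all terminals below a fixed branch are evaluated at the same $\vec x$ that determined the branch, the induction hypothesis applies to the selected pair of sub-TADS and yields the claim.

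The main obstacle is not conceptual but definitional: the statement of the lemma presupposes the concrete construction of the lifted operators from \cite{schlueter2022towards}, which is only sketched in this excerpt. To make the induction precise I would need to pin down exactly how predicates from $t_1$ and $t_2$ are interleaved when forming $t_1 + t_2$ (e.g., whether one imposes a global order on hyperplanes, as with ADDs/BDDs), and check that this merging is independent of the chosen order up to semantic equivalence — which is where the DAG-canonicity remark (isomorphic subtrees merged) is implicitly used. Once that bookkeeping is fixed, each inductive step reduces to a single application of \cref{theo:affine_operations}, so the remainder is routine.
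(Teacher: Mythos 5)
The paper itself gives no proof of this lemma --- it is imported verbatim from the cited prior work on TADS, where the lifted operators are defined by the standard ADD/BDD-style \emph{apply} construction. Your structural-induction argument (scalar multiplication preserves the decision structure and scales leaves; addition merges the two decision structures and combines leaves via \cref{theo:affine_operations}) is exactly the canonical proof of that lifting result, and your closing caveat correctly identifies the only real work: the argument is contingent on the precise merging rule for predicates, which is specified in the cited reference rather than in this paper. One point worth making explicit when you fix that bookkeeping: unlike Boolean variables in BDDs, distinct affine predicates are not independent, so merged TADS may contain infeasible paths; your induction still goes through because the evaluation semantics only ever follows the single branch consistent with the given $\vec x$, so infeasible paths never contribute to $\semtads{t_1+t_2}(\vec x)$.
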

By the lifting theorem of~\cite{schlueter2022towards} the algebraic properties are preserved and thus:
\begin{theorem}[TADS vector space]
	TADS $(\tadsset{}{}, +, \cdot)$ form a vector space.
\end{theorem}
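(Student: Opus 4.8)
The plan is to transfer the vector space axioms from piece-wise affine functions to TADS along the evaluation map $\semtads{\cdot}$, using that this map respects $+$ and $\cdot$ (the Lifting lemma) and that TADS are identified up to semantic equivalence, so that it suffices to establish each axiom \emph{after} applying $\semtads{\cdot}$. Since the operations on $\tadsset{}{}$ are typed, I would really prove that for each fixed type $(n,m)$ the set $\tadsset{n}{m}$ is a real vector space, and then read the untyped statement type by type, exactly as \cref{prop:typed-algprop} does for affine functions.

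\emph{Step 1: piece-wise affine functions form a vector space.} Fix a type $(n,m)$. First I would check closure: the partition of a piece-wise affine function is unchanged under scalar multiplication, and given two such functions with polyhedral partitions $\{Q_i\}$ and $\{R_j\}$ the common refinement $\{Q_i\cap R_j\}$ is again a finite polyhedral partition on whose cells both summands are affine; by \cref{theo:affine_operations} the cellwise sum (resp. scalar multiple) is again affine, so the result is piece-wise affine. The eight vector space axioms then hold because they hold pointwise in the target $\realset^m$ — equivalently, cellwise by \cref{prop:algprop}. The zero vector is the constant map $\vec x \mapsto \vec 0$ and the additive inverse of $\paff$ is $(-1)\cdot\paff$.

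\emph{Step 2: lifting the axioms.} It is convenient to regard ``real vector space'' as an equational theory: fix the field $\realset$ once and for all, treat each $s\in\realset$ as a unary operation $s\cdot(-)$, and note that every axiom — associativity and commutativity of $+$, neutrality of the zero, $v+(-1)\cdot v=0$, $1\cdot v=v$, $(st)\cdot v=s\cdot(t\cdot v)$, and the two distributivities — is a universally quantified equation $L(t_1,\dots,t_k)=R(t_1,\dots,t_k)$. For arbitrary TADS $t_1,\dots,t_k$ of matching type, repeated application of the Lifting lemma gives $\semtads{L(t_1,\dots,t_k)}=L(\semtads{t_1},\dots,\semtads{t_k})$ and likewise for $R$; by Step 1 these two piece-wise affine functions coincide; hence $L(t_1,\dots,t_k)$ and $R(t_1,\dots,t_k)$ are TADS with identical semantics, and therefore equal (up to semantic equivalence). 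Choosing the single-leaf TADS with leaf $\vec x\mapsto\vec 0$ as the zero and $(-1)\cdot t$ as $-t$ finishes the verification. This is precisely what one gets by instantiating the lifting theorem of~\cite{schlueter2022towards} at the leaf algebra $(\affset{n}{m},+,\cdot)$, which is a vector space by \cref{prop:algprop}.

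\emph{Main obstacle.} The only point that genuinely needs care is the sense in which TADS ``are'' a vector space: one must be sure the lifted $+$ and $\cdot$ are well defined on semantic-equivalence classes of TADS (so that ``$t_1+t_2$'' names a single object) and that distinct classes are represented by distinct reduced decision DAGs. Both facts are supplied by the Lifting/canonicity machinery behind reduced decision DAGs~\cite{schlueter2022towards,gossen2021algebraic}; once they are in hand, each axiom collapses to a one-line chase through $\semtads{\cdot}$ with no computation left to do.
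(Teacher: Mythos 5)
Your proposal is correct and matches the paper's own (one-line) argument: the paper simply invokes the lifting theorem of~\cite{schlueter2022towards} to transfer the vector space structure of the leaf algebra $(\affset{n}{m},+,\cdot)$ (Theorem~\ref{prop:algprop}) to TADS, which is exactly what your Step~2 spells out, including the genuinely necessary well-definedness point about semantic equivalence classes of reduced decision DAGs. Your Step~1 on piece-wise affine functions is a harmless extra detour that the paper does not need, since the lifting is done at the level of the affine leaf algebra rather than through the semantics into $\pafset$.
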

It is well-known that piece-wise affine functions are closed under composition.
Even though this operator can not be directly lifted, it can be easily  implemented on TADS \cite{schlueter2022towards}.
\begin{theorem}[TADS Composition]
	TADS composition
	\[\Join : \tadsset{n}{m} \times \tadsset{m}{r} \rightarrow \tadsset{n}{r}\]
	is defined such that for all $t_1 \in \tadsset{n}{m}$, $t_2 \in \tadsset{m}{r}$:
	\[ \tadssem{t_1 \Join t_2} = \tadssem{t_1} \circ \tadssem{t_2} \eqeos \]
\end{theorem}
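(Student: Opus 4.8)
The statement bundles a definition of the operator $\Join$ with a correctness claim, so the plan is to first exhibit the construction and then verify the semantic identity by structural induction. Since, as the paper observes, composition cannot be obtained by lifting (unlike $+$ and $\cdot$), the natural route is the decision-diagram ``substitution'' pattern: recurse over the first argument $t_1$ and, at each affine terminal $\aff\colon\realset^n\to\realset^m$ of $t_1$, plug in a copy of $t_2$ in which every node has been precomposed with $\aff$. Concretely, for an affine $\aff$ let $t_2^{\aff}$ be the TADS obtained from $t_2$ by replacing each terminal $\aff'\colon\realset^m\to\realset^r$ with $\aff'\circ\aff$ and by replacing each decision predicate, say with underlying affine form $\vec y\mapsto\innerprod{\vec w}{\vec y}+b$, by the predicate with underlying form $\vec x\mapsto\innerprod{\vec w}{\aff(\vec x)}+b$ (keeping the same comparison and the same ``true''/``false'' labelling of the two successors). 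Then define $\Join$ by structural recursion on $t_1$: if $t_1$ is a terminal $\aff$, set $t_1\Join t_2\defined t_2^{\aff}$; if $t_1$ is a decision node with predicate $p$ and successors $t_{\true},t_{\false}$, set $t_1\Join t_2\defined(p,\ t_{\true}\Join t_2,\ t_{\false}\Join t_2)$. By the Lemma on Operations on Affine Functions, $\aff'\circ\aff\in\affset{n}{r}$ and $\vec x\mapsto\innerprod{\vec w}{\aff(\vec x)}+b$ is again a scalar affine form on $\realset^n$ (that same lemma applied to a $1$-dimensional output), so $t_2^{\aff}$ and hence $t_1\Join t_2$ is a syntactically well-formed element of $\tadsset{n}{r}$. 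The recursion terminates because $t_1$ is finite, and the result can afterwards be normalised by merging isomorphic subtrees, which does not affect the semantics.

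For correctness I would prove $\semtads{t_1\Join t_2}(\vec x)=\semtads{t_2}\bigl(\semtads{t_1}(\vec x)\bigr)$ for every $\vec x\in\realset^n$ --- this is the identity in the theorem, read with the composition order forced by the types $\tadsset{n}{m}\times\tadsset{m}{r}\to\tadsset{n}{r}$ --- by structural induction on $t_1$. The inductive step is immediate: if $t_1=(p,t_{\true},t_{\false})$, then the root of $t_1\Join t_2$ is again $p$, so evaluating at $\vec x$ selects the same branch $c\in\{\true,\false\}$ as $t_1$ does, and then $\semtads{t_1\Join t_2}(\vec x)=\semtads{t_c\Join t_2}(\vec x)=\semtads{t_2}(\semtads{t_c}(\vec x))=\semtads{t_2}(\semtads{t_1}(\vec x))$ by the TADS evaluation rule and the induction hypothesis. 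The base case $t_1=\aff$ is the core of the argument and reduces to the auxiliary claim that $\semtads{t_2^{\aff}}(\vec x)=\semtads{t_2}(\aff(\vec x))$, which I would prove by a nested structural induction on $t_2$. The crucial observation is that $\innerprod{\vec w}{\aff(\vec x)}+b$ equals the original predicate's affine form \emph{evaluated at $\aff(\vec x)$} as a real number --- not merely in sign --- so the path taken through $t_2^{\aff}$ from $\vec x$ coincides edge-for-edge with the path taken through $t_2$ from $\aff(\vec x)$, and the terminals reached, $\aff'\circ\aff$ and $\aff'$, agree because $(\aff'\circ\aff)(\vec x)=\aff'(\aff(\vec x))$.

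The combinatorics here is routine --- it is the standard ``apply''-style construction on decision diagrams --- so I expect the only delicate point to be this base-case sub-lemma, and within it the bookkeeping around predicates: the precomposition must preserve the full decision semantics of each node (strict versus non-strict comparison, and which successor is the ``true'' one), which is exactly what is guaranteed by the fact that precomposition preserves the numeric value of the affine form and not just its sign. The remaining obligations are light: well-typedness of the output is handed to us by the affine-operations lemma, termination is trivial, and the normalisation to a reduced DAG is orthogonal to the semantic identity. One point worth stating explicitly in the proof is the composition order --- $t_1\Join t_2$ runs $t_1$ first and feeds the resulting intermediate vector in $\realset^m$ into $t_2$ --- which is precisely what makes the declared type $\tadsset{n}{m}\times\tadsset{m}{r}\to\tadsset{n}{r}$ come out right.
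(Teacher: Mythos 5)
Your construction is correct, and it is the standard one: the paper itself does not prove this theorem --- it merely asserts that composition ``can be easily implemented on TADS'' and defers to the cited prior work --- so there is no in-paper argument to compare against, but the substitution (``apply''-style) recursion you describe is exactly the intended implementation. The two load-bearing points are both present in your write-up: (i) the base-case sub-lemma $\semtads{t_2^{\aff}}(\vec x)=\semtads{t_2}(\aff(\vec x))$, justified by the observation that precomposing a predicate's affine form with $\aff$ preserves its numeric value (hence its truth value and the branch taken), and (ii) well-typedness of the precomposed terminals and predicates via the lemma on operations on affine functions. One remark worth keeping: the identity as printed, $\semtads{t_1\Join t_2}=\semtads{t_1}\circ\semtads{t_2}$, does not typecheck under the paper's usual (right-to-left) reading of $\circ$ given $t_1\in\tadsset{n}{m}$, $t_2\in\tadsset{m}{r}$ and result type $\tadsset{n}{r}$; you correctly resolve this by letting the types dictate that $t_1$ is evaluated first, i.e.\ $\semtads{t_1\Join t_2}(\vec x)=\semtads{t_2}(\semtads{t_1}(\vec x))$, which is the only coherent reading and the one consistent with the paper's later use $t_{\nu_c}=t_\nu\Join t_a$ to realize $\argmax\circ\sem{\nu}$. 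No gaps.
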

It follows straightforwardly that:
\begin{theorem}[TADS Monoid]
	TADS $(\tadsset{}{}, \Join)$ forms a monoid.
\end{theorem}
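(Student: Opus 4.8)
The plan is to transport the monoid structure from piece-wise affine functions back along the semantics map, exactly in the spirit of \cref{prop:typed-algprop} for affine functions. Two facts are already available. First, by the \emph{TADS Composition} theorem the operator $\Join$ mirrors composition of the represented PAFs: $\tadssem{t_1\Join t_2}$ is the composite of $\tadssem{t_1}$ and $\tadssem{t_2}$, and $\Join$ carries the type $\tadsset{n}{m}\times\tadsset{m}{r}\to\tadsset{n}{r}$. Second, as noted above PAFs are closed under composition, and composition of functions is associative and has the identity function as a two-sided unit, so PAFs under composition already form a (typed) monoid. What remains is to lift associativity and unitality through $\tadssem{\cdot}$.

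First I would pin down the units. For each dimension $n$, take $\mathrm{id}_n\in\tadsset{n}{n}$ to be the single-terminal TADS whose only node is the affine function $\vec x\mapsto\idmat{n}\,\vec x+\vec 0$; by the TADS evaluation rule for terminals, $\tadssem{\mathrm{id}_n}=\mathrm{id}_{\realset^n}$. The \emph{TADS Composition} theorem then gives, for every $t\in\tadsset{n}{m}$, that $\tadssem{t\Join\mathrm{id}_m}=\tadssem{t}=\tadssem{\mathrm{id}_n\Join t}$, so the family $\{\mathrm{id}_n\}_n$ supplies the (type-indexed) two-sided unit of $\Join$.

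Next I would check associativity. For type-composable $t_1,t_2,t_3$, two applications of the \emph{TADS Composition} theorem rewrite $\tadssem{(t_1\Join t_2)\Join t_3}$ as $(\tadssem{t_1}\circ\tadssem{t_2})\circ\tadssem{t_3}$ and $\tadssem{t_1\Join(t_2\Join t_3)}$ as $\tadssem{t_1}\circ(\tadssem{t_2}\circ\tadssem{t_3})$, which agree by associativity of $\circ$; all intermediate types line up because of the typing of $\Join$. Collecting both parts — and restricting $\Join$ to well-typed pairs — then shows that $(\tadsset{}{},\Join)$ is a typed monoid, equivalently that $(\tadsset{n}{n},\Join,\mathrm{id}_n)$ is an ordinary monoid for every $n$, in complete analogy with \cref{prop:algprop} and \cref{prop:typed-algprop}.

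The one genuinely delicate point, which I expect to be the main obstacle, is the syntactic/semantic gap: the identities displayed above are equalities of represented PAFs, i.e.\ equalities after applying $\tadssem{\cdot}$, and not equalities of the underlying decision DAGs. So the monoid laws hold modulo semantic equivalence — the same reading already used implicitly for the \emph{TADS vector space} theorem. To obtain them as literal equalities of TADS one additionally needs that the TADS involved are in canonical (merged) form and that $\Join$ is compatible with this normalization, which is exactly what the lifting machinery of~\cite{schlueter2022towards} supplies. Granting that, everything else is a routine transport of structure.
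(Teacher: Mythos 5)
Your proposal is correct and follows essentially the same route as the paper, which simply remarks that the monoid structure ``follows straightforwardly'' from the TADS Composition theorem — i.e., transporting associativity and identity of function composition on PAFs back along the semantics map, exactly as you do. Your additional care about the syntactic/semantic gap (equality of represented PAFs versus equality of the underlying DAGs, resolved by canonical forms from the lifting machinery of~\cite{schlueter2022towards}) is a point the paper leaves implicit, and you handle it correctly.
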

The composition operator $\Join$ is especially important in the context of neural networks, as neural networks are inherently compositions of piece-wise affine functions.

\subsection{Principal Component Analysis}
\label{subsec:PCA}
Principal Component Analysis (PCA) is one of the most popular techniques for dimensionality reduction and feature extraction \cite{wold1987principal,abdi2010principal,bro2014principal}.
At a high level, it seeks to find, for a given dataset $D\subset \realset^n$, a linear subspace $V \subset \realset^n$ with dimension
 $\dim(V)\ll n$ that can be used to encode $D$ with as little reconstruction loss as possible.

\begin{figure*}[ht]
	\centering
	\subfloat[A PCA analysis on a set of normally distributed sample inputs $X$. The green line indicates the axis represented by the first principal component. Note that the vectors scatter mostly along this axis.]
	{
	\includegraphics[width=.31\linewidth]{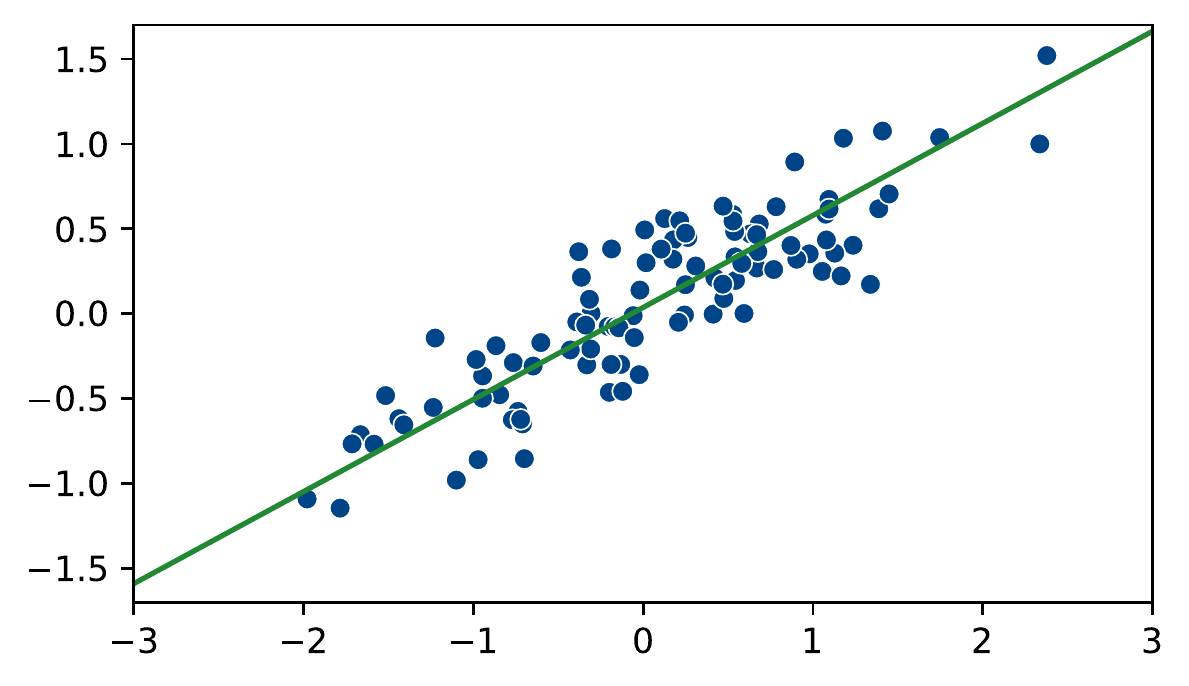}
	\label{fig:pca_graph}
	} \hfil
	\subfloat[PCA dimensionality reduction from $\realset^2$ to $\realset^1$. The PCA representation is given by $\rho_1(X)$.]
	{
		\includegraphics[width=.31\linewidth]{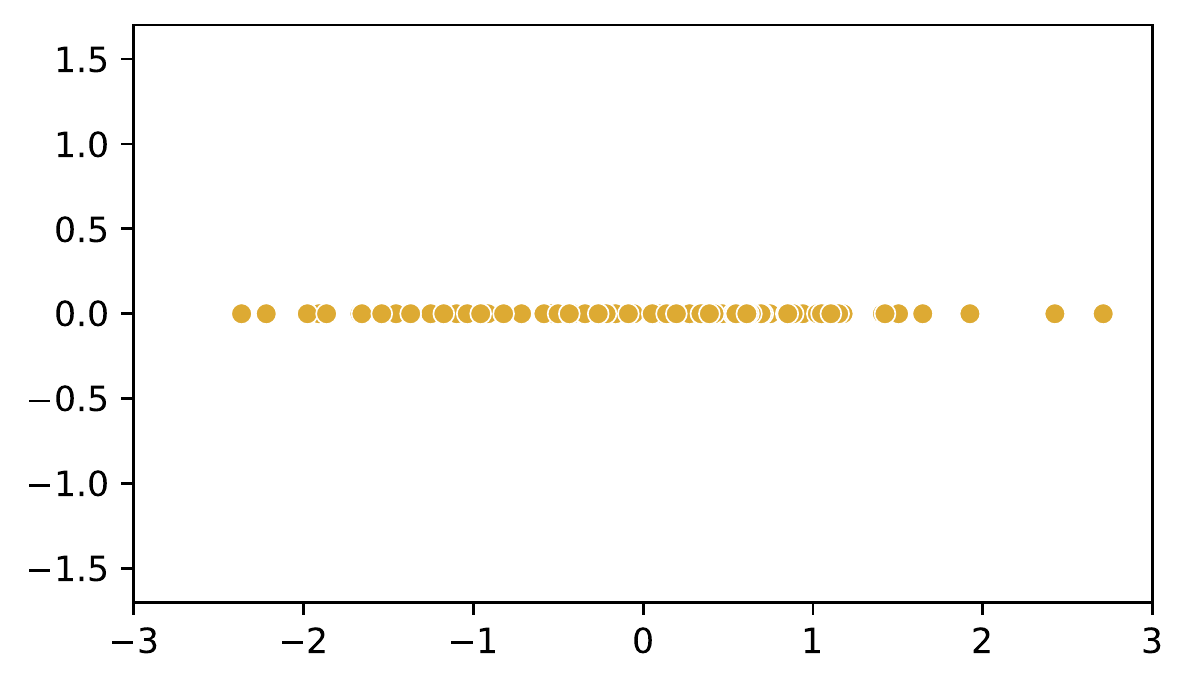}
		\label{fig:pca_representations}
	} \hfil
	\subfloat[The reconstructions $\theta_1(\rho_1(X))$ (yellow) of the original points $X$ (blue). Note that the reconstruction is very faithful, with reconstructed points being very close to their original coutnerparts.]
	{
		\includegraphics[width=.31\linewidth]{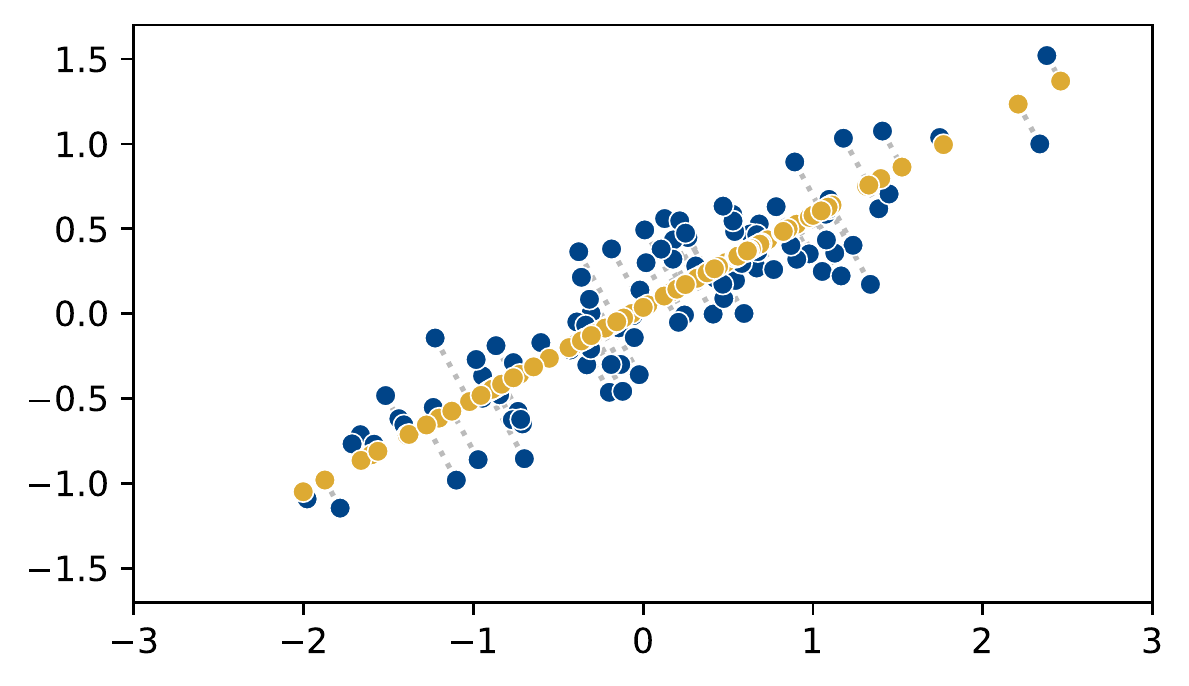}
		\label{fig:pca_reconstruction}
	}
	\caption{Example for PCA dimensionality reduction. A set of points $X$ (shown in blue) follows a multinormal distribution that scatters more along one axis. This axis is very closely resembled by the first principal component (shown in green). Through orthogonal projection one can reduce the dimensionality (b), and the reconstruction (c) is very close to the original. }
\end{figure*}

Such an encoding is useful for machine learning algorithms as it can drastically reduce the input dimension. Large input dimensions can be very problematic in machine learning and entail numerous potential problems, altogether known as the \textit{curse of dimensionality} \cite{theodoridis2006pattern}. 

The fundamental objects of PCA are the eponymous principal components that are defined as follows:
\begin{definition}[Principal Components]\label{def:PCA}%
For a given dataset ${D=\{\vec d_1,\dots, \vec d_j\} \subset \realset^n}$ with ${j \geq n}$ and zero mean $\sum_{\vec d \in D} d = \vec 0$, there exist $n$ \emph{principal components} $\vec{p_1},\dots, \vec{p_n} \in \realset^n $ which are characterized as iterative solutions to the following optimization problem: The $i$-th principal component $\vec{p_i}$ maximizes the variance of the data when it is projected onto $\vec{p_i}$:
\[ \sum_{d \in D} \innerprod{\vec{p_i}}{\vec d}^2 \rightarrow \max \] 
under the constraint that $p_i$ has unit length
\[ \norm{\vec p_i}_2 =1 \]
and is orthogonal to all previous principal components
\[ \forall h < i : \innerprod{\vec{p_i}}{\vec{p_h}} = 0 \eqeos \]
\end{definition}
Note that every dataset $D$ with non-zero mean, i.e., $\sum_{\vec d \in D} \frac{\vec d}{|D|} = \vec \mu$ with $\vec \mu \neq 0$, can be made to obey the restriction $\vec \mu =\vec 0$ by performing the following transformation on each datapoint: $\vec d_i' = \vec d_i - \vec \mu$.
%
%
By definition, the principal components are pair-wise orthogonal and normed and therefore linearly independent. Thus, they form a basis of $\realset^n$. 
It follows that there is a natural, unique representation based on the principal components $\rho(\vec x)=(r_1,\dots, r_k)^\transpose$ such that:
\[ \vec x =  \sum_{i=1}^n r_i \vec p_i \eqeos \]
In particular, in the case of PCA, the $r_i$ can be computed as 
\[ r_i = \innerprod{\vec p_i}{\vec x} \eqeos \]
With this, PCA can naturally be used as a dimensionality reduction tool.
\begin{definition}[PCA Dimensionality Reduction]%
\label{def:PCAReduction}%
\begin{figure*}[t]
	\centering
	\subfloat[Original MNIST digits, ascending from 0 to 9.]{%
		\includegraphics[width=0.8\linewidth]{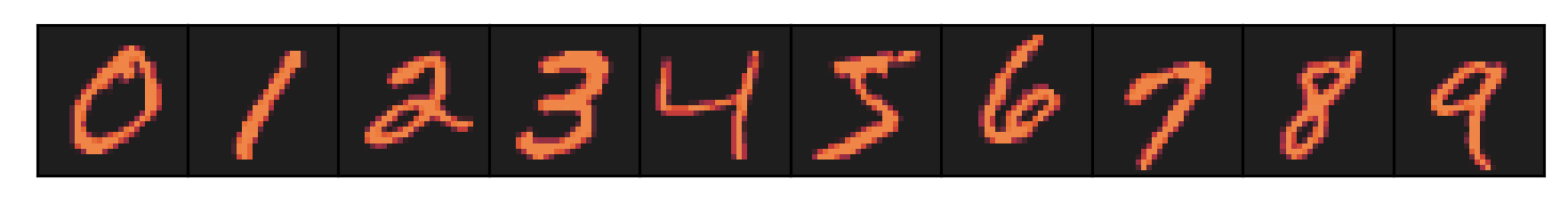}
	} \\[-2mm]
	\subfloat[PCA reconstruction with $k=2$.]{%
		\includegraphics[width=0.8\linewidth]{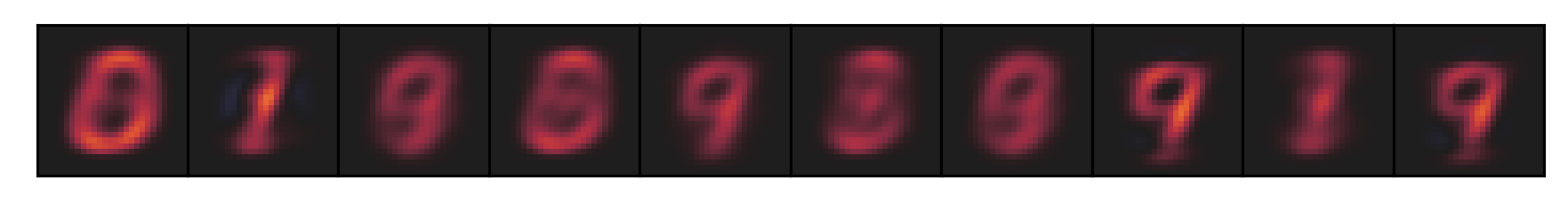}
	} \\[-2mm]
	\subfloat[PCA reconstruction with $k=8$.]{%
		\includegraphics[width=0.8\linewidth]{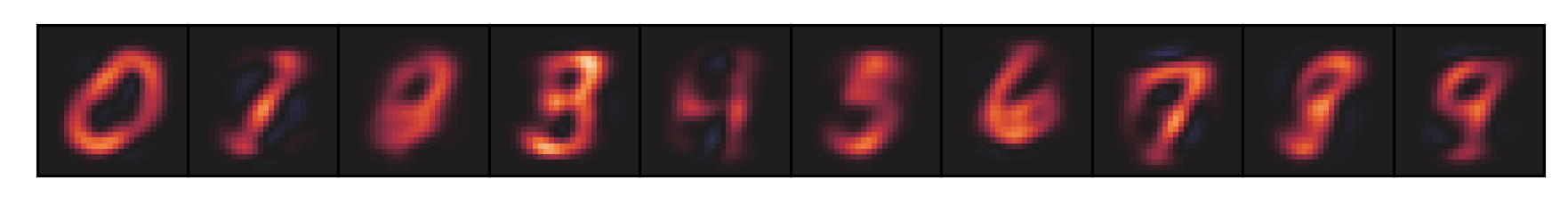}
	} \\[-2mm]
	\subfloat[PCA reconstruction with $k=16$.]{%
		\includegraphics[width=0.8\linewidth]{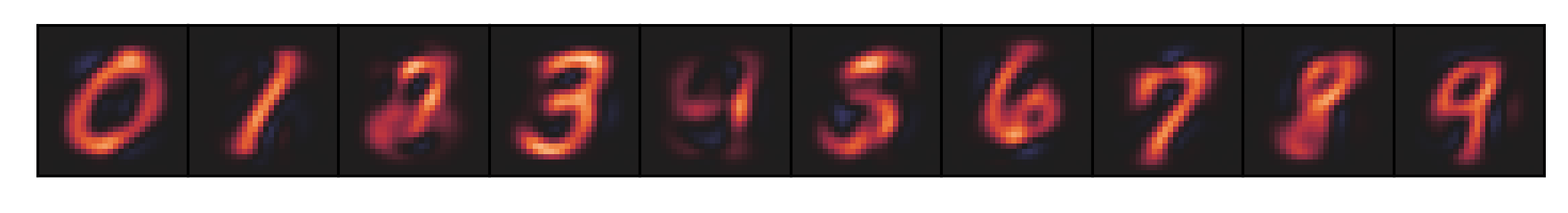}
	}
	\caption{Reconstruction of $784$ pixel MNIST digits with various number of principal components, showcasing how a PCA reconstruction can faithfully reconstruct an original input based on very little information ($k=16$ principal components vs. $784$ pixel). Original MNIST digits in first row, following rows show reconstruction with $j=2, 8, 16$ principal components.
	Vectors are visualized using a perceptually uniform diverging color palette (Seaborn's ``icefire'').}
\end{figure*}
Let $0<k< n$. For some $\vec x \in \realset^n$ with $\rho(\vec x)=(r_1,\dots, r_n)$, the \textit{$k$-dimensional PCA representation} is given by cutting off the PCA representation after the $k$-th element:
\[ \rho_k(\vec x) = (r_1,\dots, r_k)^\transpose \eqeos \]
Consequently, the \textit{$k$-dimensional PCA reconstruction} to $\vec x$ is given as:
\[ \vec x \approx \theta_k(\rho_k(\vec x))= \sum_{i=1}^k r_i \vec p_i \eqeos \]
\end{definition}
As $\rho_k$ is a projection for $k < n$, it loses information. Therefore, the PCA reconstruction after dimensionality reduction is approximative.

In essence, the composition of PCA encoding and reconstruction forms a function that is close to the identity function \textit{on the dataset and its surrounding points} while reducing the number of dimensions needed to express the data. The success of PCA is heavily dependent on the dataset being mainly distributed along a linear subspace of $\realset^n$ and its generalization performance requires that new data follow the same distribution as the training data. 
However, if these assumptions hold, it is a very good approximation, as indicated by the following defining property of the principal components:
\begin{lemma}
	The principal components are exactly those vectors that make the reconstruction error minimal over $D$ among all linear, orthogonal encoders and decoders using $k$ dimensions \cite{abdi2010principal}. That is, for all orthogonal, linear functions $e : \realset^n\rightarrow \realset^k$, and $d : \realset^k \rightarrow \realset^n$, the term
	\[ \sum_{\vec x \in D} \norm[big]{\vec x - d(e(\vec x))}_2^2 \]
	is minimal iff $e = \rho_k$, $d=\theta_k$.
\end{lemma}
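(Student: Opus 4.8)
\noindent\emph{Proof plan.} The plan is to reduce the minimization to a standard spectral (trace-maximization) problem and then to recognize its optimum as exactly the span of the first $k$ principal components. I write the two maps as matrices $\mat E\in\realset^{k\times n}$ and $\mat D\in\realset^{n\times k}$, where ``orthogonal'' is read as $\mat E\mat E^\transpose=\idmat{k}$ and $\mat D^\transpose\mat D=\idmat{k}$ (orthonormal rows, resp.\ columns), so that the quantity to minimize is $\sum_{\vec x\in D}\norm{\vec x-\mat D\mat E\vec x}_2^2$. The first step is to show that the decoder essentially determines the encoder: for a fixed $\mat D$ with orthonormal columns and a fixed input $\vec x$, the least-squares solution of $\min_{\vec y}\norm{\vec x-\mat D\vec y}_2^2$ is $\vec y=\mat D^\transpose\vec x$ (because $\mat D^\transpose\mat D=\idmat{k}$), and this choice is itself linear in $\vec x$; hence we may assume $\mat E=\mat D^\transpose$, so that $\mat D\mat E=\mat D\mat D^\transpose=\mat P_U$ is the orthogonal projection onto the $k$-dimensional subspace $U=\range(\mat D)$. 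Conversely, at a minimizer one needs $\mat D\mat E\vec x=\mat P_U\vec x$ for all $\vec x\in D$, which --- assuming, as is implicit in \cref{def:PCA}, that the $\vec d_i$ span $\realset^n$ --- forces $\mat E=\mat D^\transpose$, leaving as the only residual freedom the choice of orthonormal basis of $U$.

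The second step reduces the problem to variance maximization. Since $\mat P_U\vec x$ and $\vec x-\mat P_U\vec x$ are orthogonal, the Pythagorean identity gives $\norm{\vec x}_2^2=\norm{\mat P_U\vec x}_2^2+\norm{\vec x-\mat P_U\vec x}_2^2$ for every $\vec x$; summing over $D$ and observing that $\sum_{\vec x\in D}\norm{\vec x}_2^2$ does not depend on $U$, minimizing the reconstruction error is equivalent to maximizing $\sum_{\vec x\in D}\norm{\mat P_U\vec x}_2^2$. Expanding $\mat P_U$ in an orthonormal basis $\vec u_1,\dots,\vec u_k$ of $U$ rewrites this objective as $\sum_{i=1}^k\sum_{\vec x\in D}\innerprod{\vec u_i}{\vec x}^2=\sum_{i=1}^k\vec u_i^\transpose\mat C\,\vec u_i$, where $\mat C=\sum_{\vec x\in D}\vec x\vec x^\transpose$ is the (scaled) covariance matrix of $D$ --- precisely the quantity whose iterated maximization \cref{def:PCA} uses to define the principal components.

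The third step is spectral. By the Ky Fan (Courant--Fischer) maximum principle, $\sum_{i=1}^k\vec u_i^\transpose\mat C\,\vec u_i$ over all orthonormal $k$-tuples $\vec u_1,\dots,\vec u_k$ is maximized, with value $\lambda_1+\dots+\lambda_k$, exactly by orthonormal bases of the span of $k$ leading eigenvectors of the symmetric positive-semidefinite matrix $\mat C$, and this span is the unique maximizer whenever $\lambda_k>\lambda_{k+1}$. It then remains to match this against \cref{def:PCA}: its greedy construction picks at step $i$ a unit vector maximizing $\vec p^\transpose\mat C\,\vec p$ under orthogonality to $\vec p_1,\dots,\vec p_{i-1}$, which by Courant--Fischer is a unit eigenvector of $\mat C$ for its $i$-th largest eigenvalue; hence $\vec p_1,\dots,\vec p_n$ is an eigenbasis of $\mat C$ ordered by decreasing eigenvalue. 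Therefore the optimal subspace is $\laspan\{\vec p_1,\dots,\vec p_k\}$, an encoder/decoder pair is optimal iff $\mat D$ is an orthonormal basis of this subspace and $\mat E=\mat D^\transpose$, and taking $\mat D=(\vec p_1,\dots,\vec p_k)$ yields $e=\rho_k$, $d=\theta_k$.

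The main obstacle sits in this last step and has two facets. The substantive one is the trace-maximization fact itself: that the greedy, one-component-at-a-time procedure of \cref{def:PCA} attains the \emph{global} optimum over all $k$-dimensional subspaces is not automatic for greedy schemes and must be established via Courant--Fischer (or a Lagrange-multiplier argument together with eigenvalue interlacing). The delicate one is the exact meaning of the ``iff'': the decoder is only pinned down up to an orthogonal change of basis inside the optimal subspace, and that subspace itself is unique only under the spectral gap $\lambda_k>\lambda_{k+1}$, so the cleanest honest statement is uniqueness of $\range(\mat D)$ --- equivalently, of $e$ and $d$ up to such a basis change --- under the gap hypothesis, rather than literal equality with $\rho_k,\theta_k$.
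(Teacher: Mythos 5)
The paper does not actually prove this lemma --- it is stated with a citation to the PCA literature and no argument is given --- so there is no in-paper proof to compare against. Your proposal supplies the standard proof and it is correct: fixing the decoder and solving the inner least-squares problem to force $e=d^\transpose$ (so that $d\circ e$ is the orthogonal projection onto $\range(d)$), applying the Pythagorean identity to convert error minimization into maximization of the projected energy $\sum_{i}\vec u_i^\transpose\mat C\vec u_i$, and invoking Ky Fan / Courant--Fischer to identify the optimal subspace with the span of the $k$ leading eigenvectors of the scatter matrix, which the greedy construction in \cref{def:PCA} produces. The one step worth making explicit if this were written out in full is the bridge you flag yourself: that the greedy, one-direction-at-a-time definition of the principal components attains the global optimum over all $k$-dimensional subspaces is exactly the content of the Ky Fan principle and is not a triviality about greedy procedures.

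Your closing caveat is also well taken and is, strictly speaking, a correction to the lemma as the paper states it: the minimizer is only unique up to an orthogonal change of basis inside the optimal subspace (replacing $d$ by $d\circ Q$ and $e$ by $Q^\transpose\circ e$ for orthogonal $Q$ leaves $d\circ e$ unchanged), and the optimal subspace itself is unique only under the spectral gap $\lambda_k>\lambda_{k+1}$. So the ``is minimal iff $e=\rho_k$, $d=\theta_k$'' direction holds only up to this equivalence; the ``if'' direction is unconditional. Since the paper only ever uses the ``if'' direction (PCA gives a good reconstruction), this does not affect anything downstream, but your sharper formulation is the correct one.
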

PCA is attractive for multiple reasons. First, PCA representations and approximations are linear functions which makes them easy to work with. Second, PCA supports reductions to $k$ for any $0<k\leq n$, which makes PCA very flexible. Lastly, but perhaps most importantly, PCA is a well-understood and well-proven method in practice and can elegantly enable strong performance in even relatively simple machine learning models.
An example of a PCA encoding and reconstruction is shown in \cref{fig:pca_graph,fig:pca_representations,fig:pca_reconstruction}.

\highlight{PCA allows the compression of high-dimensional inputs into lower-dimensional representations such that a given dataset is compressed with as little information loss as is possible using orthogonal, linear compression.}

\section{Problem Setting: Robustness on MNIST}
\subsection{Introduction to MNIST}
\label{sec:Setting}
\label{subsec:Basic Case}

In the remainder of the paper we consider the problem of digit recognition using the MNIST dataset \cite{deng2012mnist}. 
The MNIST dataset provides a traditional baseline-problem scenario for machine learning. While simpler than modern, large-scale machine learning tasks, MNIST requires PLNNs of relevant size for satisfactory classification 
 and stands to this day as an introductory problem in verification benchmarks \cite{bak2021second}.

The MNIST dataset consists of 70.000 gray-scale images of hand-written digits, each labeled with the digit they represent to a human observer. The dataset is split into 60.000 examples for training and 10.000 examples for testing. Images consist of $28\times 28$ pixels and are represented as vectors $\vec x \in \realset^{28\cdot 28}$ with each component $\vec{x}_i$ representing the gray-scale value of the $i$-th pixel on a scale from $0$ to $1$. Thus, each sample has the form
\[ (\vec x, l) \]
with $\vec x \in [0,1]^{28\cdot 28}$ and $l \in \{0,\ldots,9\}$.
The task is to find a PLNN classifier that represents a function 
\[ \classifier\colon \realset^{28*28} \rightarrow \{0,\ldots,9\} \] assigning to each image the digit it is supposed to represent. 
At a baseline, $\classifier$ should classify most training examples correctly and should perform acceptably well on the test data.

A challenge for \emph{classification problems} like this is to control so-called \emph{adversarial examples} as discussed in the following.
\begin{figure*}
	\centering
	\subfloat
	[We consider robustness around the point $\vec x= (2,3,3)^\transpose$.
	The neighborhood defined by the $\infty$-ball (blue cube) lies on the same side of the classification boundary, thus the point is robust.]
	{
		\includegraphics[width=.48\linewidth]{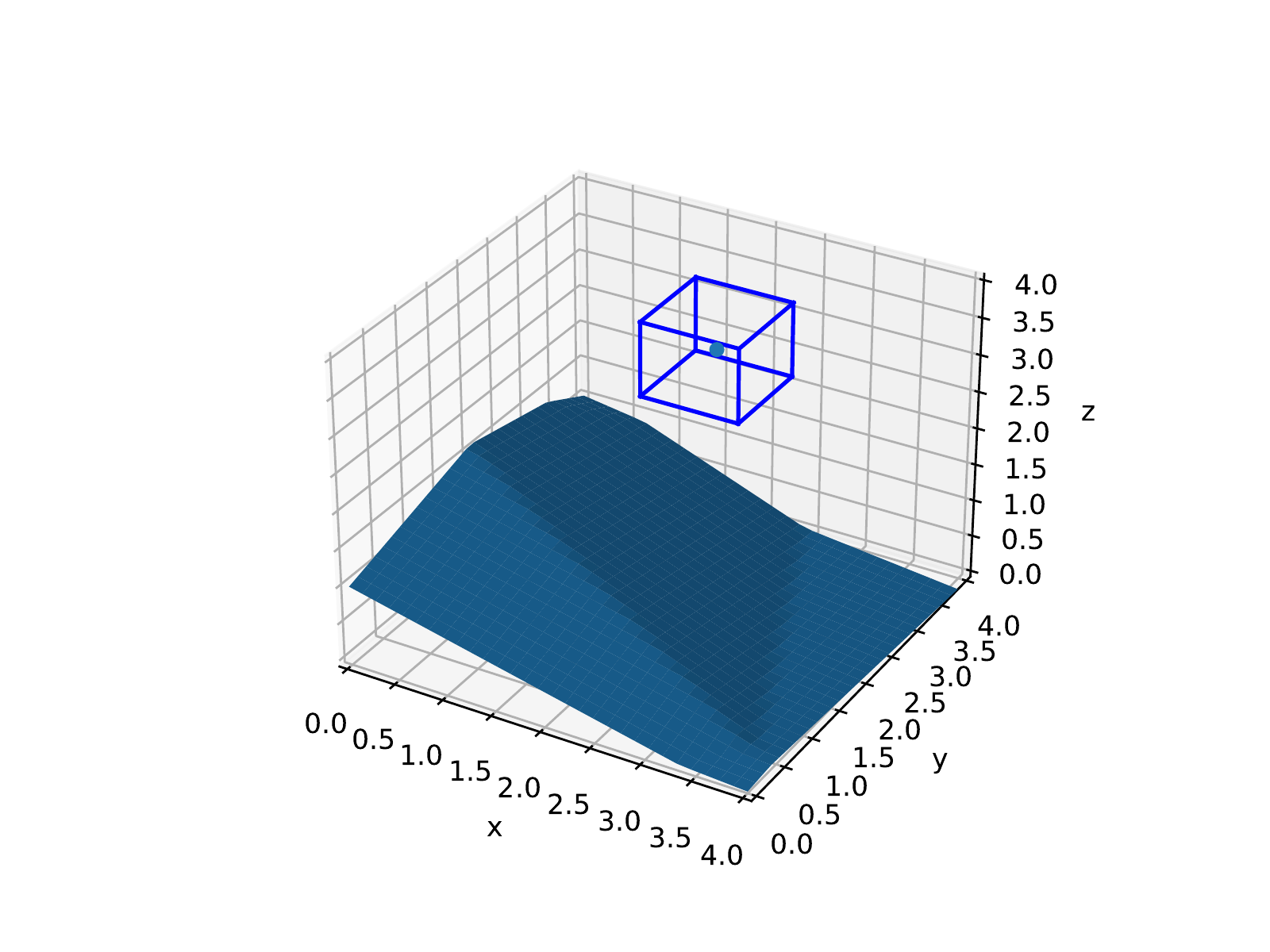}
		\label{fig:robustnessballexample}
	} \hfil
	\subfloat
	[Using the same PLNN, robustness is considered for a point close to the decision boundary.
	The $\infty$-ball intersects the boundary, thus the network is not robust for this point.]
	{
		\includegraphics[width=.48\linewidth]{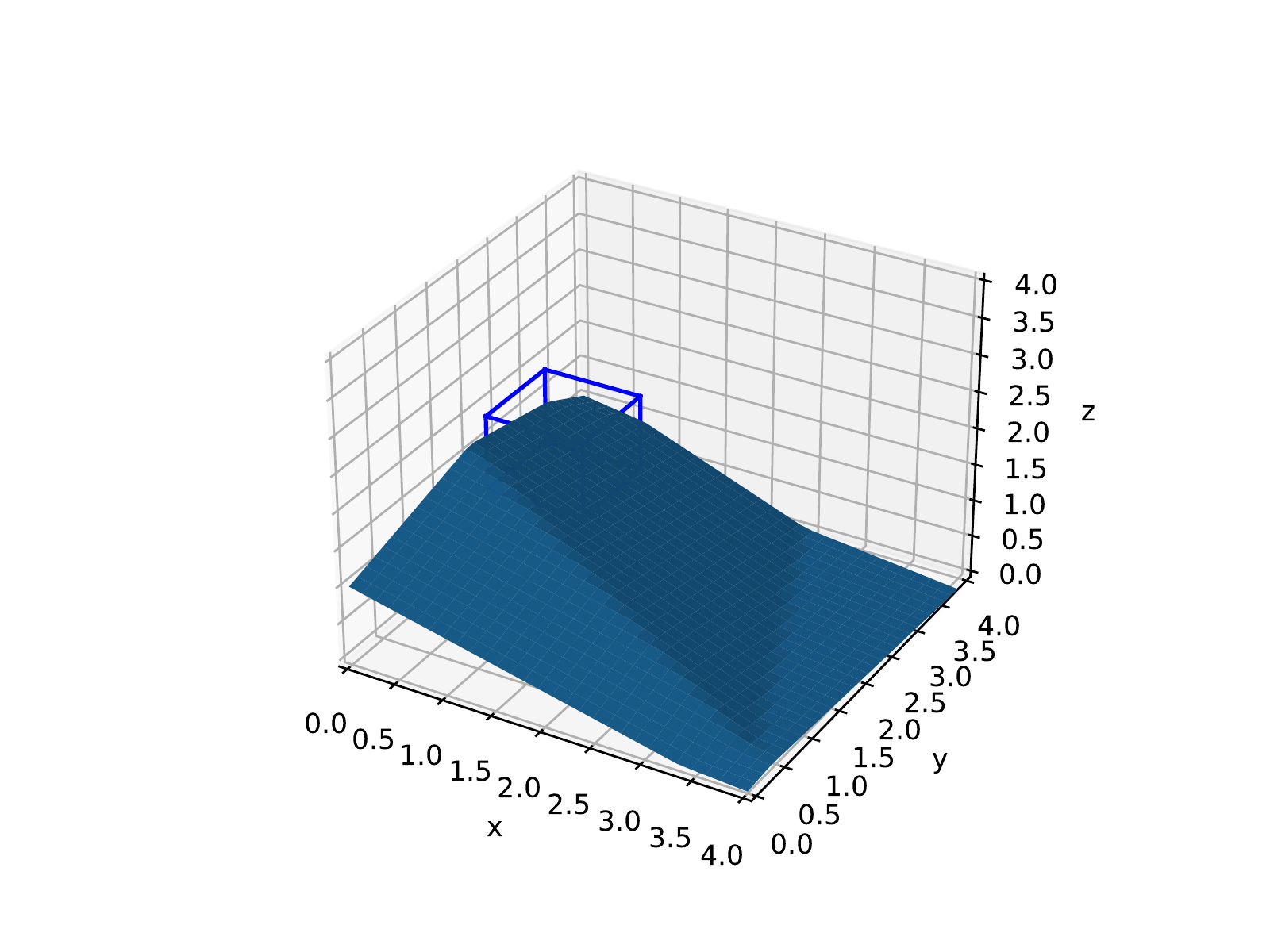}
		\label{fig:nonrobustnessballexample}
	}
	\caption{Illustration of two robustness scenarios using its geometric interpretation. In (a) robustness is achieved while in (b) robustness is violated as the $\infty$-ball around $\vec x$ intersects with the decision boundary.}
\end{figure*}

\subsection{Robustness to Adversarial Examples}
In essence, robustness is the absence of adversarial examples, which are perhaps the most well-known manifestations of chaotic behavior of neural networks and have received wide attention in research \cite{szegedy2013intriguing,goodfellow2014explaining,kurakin2016adversarial}.
We work with the following definition of adversarial examples:
\begin{definition}[Adversarial Example]
\label{def:adversarial_example}
Let $\classifier \colon \realset^n \rightarrow \{1,\dots,m\}$ be a PLNN classifier.
Further, let $\vec x \in \realset^n$ be a given point of interest that is correctly classified by $\classifier$. Then, $\vec y \in \realset^n$ is an $\epsilon$-adversarial example to $\vec x$ iff
\[ \norm{\vec y - \vec x}_\infty \leq \epsilon \: \land \: \classifier(\vec y) \neq \classifier(\vec x) \eqeos \]
If $\classifier$ admits no $\epsilon$-adversarial examples for a given input $\vec x$, then it is called \textit{$\epsilon$-robust} around $\vec x$.
\end{definition}
Intuitively, an adversarial example is a slight perturbation of an input that, although minor, changes the neural networks prediction. 
Note that in image recognition problems such as MNIST, the restriction $\inftynorm{\vec y - \vec x} \leq \epsilon$ encodes that between $\vec x$ and $\vec y$, each pixel can only differ by at most $\epsilon$.

In practice, adversarial examples can be almost imperceptible to a human \cite{luo2018towards,szegedy2013intriguing} while arbitrarily altering previously correct decisions, sometimes yielding outlandish classification results, which may 
enable outside attacks on neural network systems. 
Thus, it is critical that neural networks cannot be adversarially attacked at points where the desired semantics is clear\footnote{Of course, neural networks necessarily have regions where the prediction flips from one class to another. Ideally, these flips should only occur in regions where inputs are non-sensical and would not intuitively be assigned to any class by a human. Therefore, robustness is usually considered only at some select sample inputs where semantics is clear.}.  

\subsection{Verifying Robustness}
Generally, PLNN verification is the task proving a property for the result of a PLNN where the input is restricted to a given domain \cite{bunel2018unified,albarghouthi2021introduction}.
Formally, let ${\classifier\colon\realset^n\to\realset^m}$ be a PLNN, ${S \subset \realset^n}$ a restriction of the input domain, and ${P\colon\realset^m\to\{0,1\}}$ a predicate.
Then PLNN verification is the task of proving or refuting with a counterexample the formula
\begin{equation}\label{eq:general_plnn_verification}
	\forall \vec x \in S : P(\classifier(\vec x)) \eqeos
\end{equation}
For the case of verifying $\epsilon$-robustness around $\vec x$ for $\classifier$, we can formulate \eqref{eq:general_plnn_verification} specifically as (cf.\ \cref{def:adversarial_example})
\[ \forall \vec y \in \vec x + \epsilon B_\infty : \classifier(\vec y) = \classifier(\vec x) \eqeos \]
Corresponding state-of-the-art verification tools use different methods like \cite{bunel2018unified}:
\begin{itemize}
	\item Satisfiability Modulo Theories
	\item Mixed Integer Programming
	\item Branch and Bound
\end{itemize}
For more information, see \cref{sec:related work} on related work.

\section{Extending TADS to Cover Robustness Properties}
\label{sec:extending}

TADS are characterized by:
\begin{enumerate}
	\item \emph{Global explanations}, i.e., they explain the behavior of a PLNN over the entire space of possible inputs. Robustness properties however concern only the relatively small neighborhood $x + \epsilon B_\infty$ of a point $\vec x$.
	\item \emph{Regression behavior}, they represent a continuous function. With respect to robustness, we are however interested in the behavior of the associated \emph{PLNN classifier}. 
\end{enumerate} 
The following two subsections will show that TADS are nevertheless well-suited to deal with robustness properties.

\subsection{Precondition Projection on TADS}

When studying adversarial examples, one may use the strict preconditions (given as infinity balls $\epsilon B_\infty$) to reduce the work load.
Given the strong connection between affine functions and (convex) polytopes, it is a straightforward procedure to apply polyhedral preconditions---such as infinity balls as particularly required for robustness properties---on TADS\@.
Please note that stronger preconditions result in less work.

Given a TADS $t$ representing a piece-wise affine function 
${f \colon \realset^n \rightarrow \realset^m}$
we are interested in the behavior of $f$ on a given (small) 
polyhedron $S \subset \realset^n$. In other words, we are interested in 
the function  $\restr{f}{S} \colon S \rightarrow \realset^m$ 
which is given by:
\[ \restr{f}{S}(\vec x) =
\begin{cases}
	f(\vec x) \quad &\text{if } x \in S \\
	\bot &\text{otherwise}
\end{cases} \eqeos \]
Technically, this is implemented by encoding the polytope $S$ as a TADS using affine inequalities:
\[ \restr{t}{S} \defined \big(\restr{\mathrm{id}}{S}\big) \tadsjoin t \eqeos \]
By explicitly eliminating paths that lead to $\bot$ (see \cite{schlueter2022towards}), the resulting TADS is significantly reduced in size.
\subsection{Argmax on TADS for Classification}
\begin{figure}
	\centering
	\includegraphics[width=.7\columnwidth]{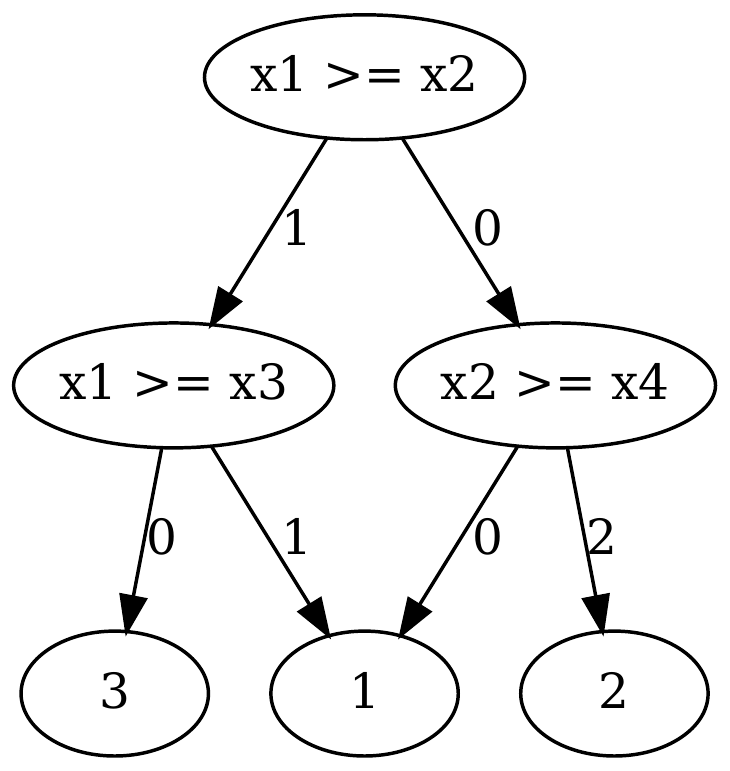}
	\caption{The TADS $t_a$ representing the argmax function with 3 variables.}
	\label{fig:argmaxtads}
\end{figure}
Neural networks are frequently used for classification, as outlined in \cref{sec:nnclassification}. As described there, the neural network classifier associated with a given neural network $\nu$ can naturally be modeled as 
\[ \nu_c = {\argmax} \circ {\sem{\nu}} \]
Interpreting a neural networks behavior in this way drastically changes its nature, and if one seeks to analyze a neural network that is meant to be used as a neural network classifier, it is important that one analyzes it with respect to its classification behavior. 

We know how to construct a $t_{\nu}$ for any PLNN  $\nu$. On the other hand, it is also easy to see how a TADS $t_a$
can be constructed for  ${\argmax}$: 
Intuitively, such a TADS needs only to perform a linear search for the maximum of ${\vec x=(x_1,\dots,x_n)}$ from $x_1$ to $x_n$. \Cref{fig:argmaxtads} illustrates this for the three-dimensional argmax.\footnote{Note that this TADS deviates slightly from the  representation of TADS we use for the rest of this paper, notably with respect to the way linear inequalities are represented. This is purely done to enhance readability.} This TADS first compares $x_1$ and $x_2$ in the first layer, then compares their maximum with $x_3$ to attain the result. The extension to higher dimensions is straightforward.

Taken together, it is straightforward to construct the classification TADS $t_{\nu_c}$ using TADS composition as follows:
\[ t_{\nu_c} = t_{\nu} \tadsjoin t_{a} \eqeos \]
The semantical correctness of this construction follows directly from the correctness of the TADS composition, i.e.:
\[ \tadssem{t_{\nu_c}} = {\argmax} \circ {\sem{\nu}} \eqeos \]

\section{Verifying Robustness on MNIST Using TADS}
\begin{figure*}
	\definecolor{olivegreen}{HTML}{BBCC33}
	\definecolor{mintgreen}{HTML}{44BB77}
	\definecolor{lightgreen}{HTML}{CCDDAA}
	\definecolor{orangebar}{HTML}{EE8866}
	\definecolor{lightblue}{HTML}{77AADD}
	\centering
	\subfloat[Standard Verification Task]{%
		\includegraphics[width=0.22\linewidth]{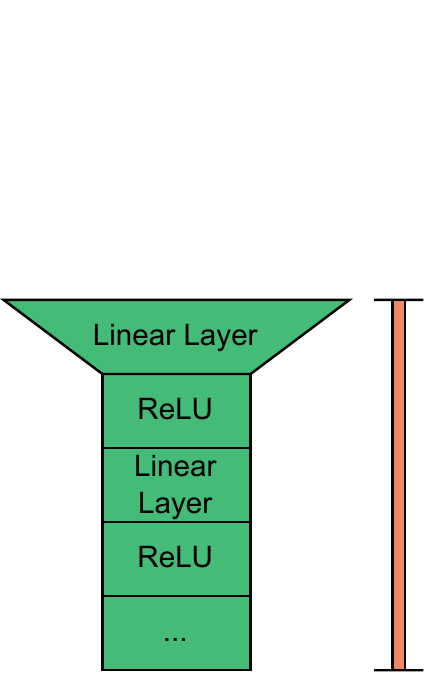}
		\label{fig:pca_plnn_base_case}
	} \hfil
	\subfloat[PCA Search Space]{%
		\includegraphics[width=0.22\linewidth]{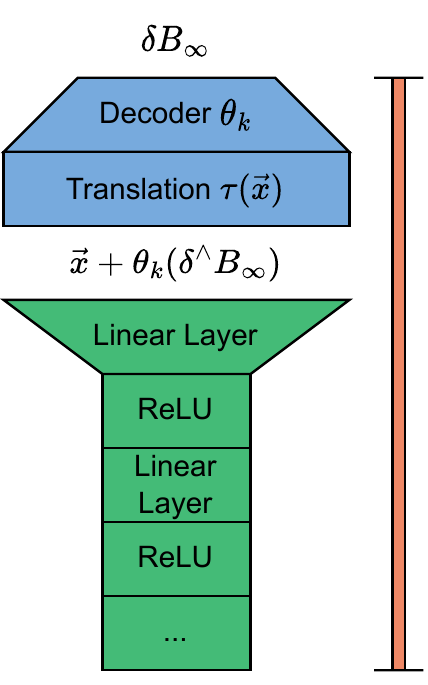}
		\label{fig:pca_plnn_heuristic}
	} \hfil
	\subfloat[End-to-end Verification]{%
		\includegraphics[width=0.22\linewidth]{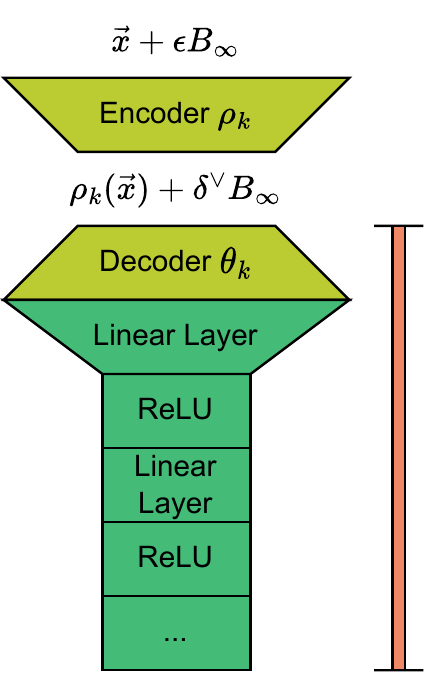}
		\label{fig:pca_plnn_complete}
	} \hfil
	\subfloat[Reduced Parameter Space]{%
		\includegraphics[width=0.22\linewidth]{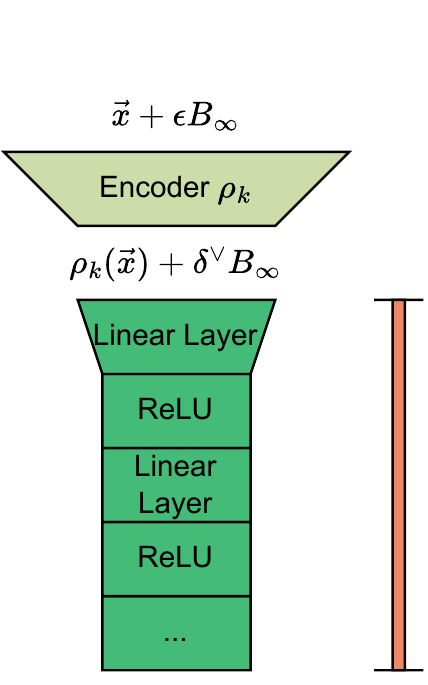}
		\label{fig:pca_plnn_learned}
	}
	\caption{Overview of the different approaches to verifying robustness with PCA encoding.
	Legend: (green) entity for usage in the real world, (\textcolor{lightblue}{blue}) components only used during verification, (\textcolor{orangebar}{orange sidebar}) components used for TADS construction, (\textcolor{mintgreen}{mint green}) components used during training and actively trained, (\textcolor{lightgreen}{pale green}) parts included in training but not actively learned, (\textcolor{olivegreen}{olive green}) parts that are not included in the training process.}
	\label{fig:methods_overview}
\end{figure*}
The following subsections of this Section present four approaches to robustness verification via TADS and illustrate them using the MNIST data set:
\begin{itemize}
	\item
	A straightforward approach where the considered PLNN is directly transformed into a TADS (cf., \cref{fig:pca_plnn_base_case}).
	This approach typically does not scale due to the typical exponential explosion of the TADS transformation.
	\item 
	An approximative approach based on PCA-based dimensionality reduction that scales, provides a good heuristics to search for 
	adversarial examples, but is insufficient to prove robustness (cf., \cref{fig:pca_plnn_heuristic}). In this case, the TADS-based analysis only 
	covers the subspace that can be `reached' from the initial, low-dimensional PCA-based vectors space via 
	decoding and adequate basis transformation as indicated by the blue part. Thus, this approach cannot guarantee that the analysis of the TADS is sufficient 
	to reveal all adversarial examples of the original PLNN.
	\item
	A transformational approach based on PCA-based dimensionality reduction, where the PLNN is extended by a preprocessing step,
	defined by PCA-based auto-encoding, i.e., the composition of a PCA-based dimensionality reduction followed by a linear function 
	that embeds (decodes) the low-dimensional space into the original space (cf., \cref{fig:pca_plnn_complete}). Here we can show that analyses of the partial extension
	that start with the decoding are sufficient to obtain robustness results for the extended PLNN that is defined for the 784-dimensional space
	of MNIST.
	\item
	A modification of the third approach, where the linear function defined by the composition of the 
	decoder and the initialization layer of the original net is replaced by a linear layer to provide a 
	network architecture with the same number of layers but with a strongly reduced input dimension (cf., \cref{fig:pca_plnn_learned}).
	The PLNN considered for verification is now given as the result of a learning process using the
	same sample set as in the other cases, but starting with a PCA-based reduction step. Technically,
	the subsequent TADS-based robustness analysis proceeds exactly in the same way as before 
	guaranteeing that the robustness result proven for the dark green part can again be lifted to the 
	overall net.
\end{itemize}
We will show that the third and fourth approaches allow us to prove full robustness in a computationally efficient manner. However, they come at the price of modifying the PLNN. 
In our eyes, this is no disadvantage as long as the modified PLNN is still sufficiently accurate; Neural networks are themselves only results of a heuristic training process and have no intrinsic merit beyond their predictive accuracy.
In fact, the results shown in \cref{fig:PCAaccplot} indicate that predictive accuracy can still be achieved after a significant reduction in dimensionality, drastically easing formal verification.

\subsection{Full Verification with TADS}
\label{sec:cases_tads}
At their baseline, TADS are so called \emph{model explanations} \cite{guidotti2019ai} of PLNNs, i.e., for any classification PLNN ${\nu_c \colon \realset^n \rightarrow \netclass}$, a corresponding TADS can be generated that represents the same function as $\nu_c$ in an easily comprehensible and analyzable manner.
Of course, the global behavior of neural networks is usually too large to be represented with a TADS\@.
However, in the case of robustness verification, we are only interested in the behavior of $\nu_c$ in the neighborhood around some point of interest $\vec x$, formalized by an infinity ball (see \cref{def:mball}).
Recall from \cref{def:adversarial_example} that $\epsilon$-robustness for a point $\vec x$ is formalized by the property
\[ \forall \vec y : \norm{\vec y - \vec x}_\infty \leq \epsilon \implies \nu_c(\vec x) = \nu_c(\vec y) \]
Equivalently, this problem can also be stated as
\[ \abs{\nu_c( \vec x + \epsilon B_\infty^n)} = 1  \]
that is, the neighborhood of $\vec x$ defined by the infinity ball $\epsilon B_\infty$ of dimension $n$ with radius $\epsilon$ is classified consistently as one class.
This property can be verified using the following theorem:
\begin{theorem}[TADS Verification]
	Let $\nu_c$ be a PLNN classifier, $\vec x$ a point of interest, and $t$ a TADS satisfying $\tadssem{t} = \netsem{\nu_c}$.
	Then $\nu_c$ is $\epsilon$-robust around $\vec x$ iff 
	\[ \restr{t}{\epsilon B_\infty^n}\ \]
	contains only feasible paths to the class $\nu_c(\vec x)$.
\end{theorem}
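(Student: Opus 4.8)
The plan is to reduce both sides of the equivalence to a single statement about the polyhedral cell decomposition that the restricted TADS $\restr{t}{\epsilon B_\infty^n}$ induces on the neighborhood $N \defined \vec x + \epsilon B_\infty^n$. First I would unfold the two definitions involved. By \cref{def:adversarial_example}, $\nu_c$ is $\epsilon$-robust around $\vec x$ exactly when $\nu_c(\vec y) = \nu_c(\vec x)$ for every $\vec y$ with $\norm{\vec y - \vec x}_\infty \leq \epsilon$, i.e., for every $\vec y \in N$. On the other hand, by the correctness of precondition projection, $\restr{t}{\epsilon B_\infty^n} = (\restr{\mathrm{id}}{N}) \tadsjoin t$ satisfies $\tadssem{\restr{t}{\epsilon B_\infty^n}}(\vec y) = \tadssem{t}(\vec y) = \nu_c(\vec y)$ for $\vec y \in N$ and $\tadssem{\restr{t}{\epsilon B_\infty^n}}(\vec y) = \bot$ otherwise; here we use the hypothesis $\tadssem{t} = \netsem{\nu_c}$, so that $t$ computes the classifier and its terminals are constant class labels. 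Hence only the behavior of $t$ on $N$ is relevant.

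Next I would make the notion of \emph{feasible path} precise. A root-to-leaf path $\pi$ in the decision DAG $\restr{t}{\epsilon B_\infty^n}$ collects at each inner node the affine inequality in the polarity dictated by which successor $\pi$ takes; the conjunction of these inequalities (which already subsumes the inequalities encoding $N$, since $\restr{\mathrm{id}}{N}$ was composed in) defines a polyhedron $P_\pi \subseteq N$, and $\pi$ is \emph{feasible} iff $P_\pi \neq \emptyset$, and feasible \emph{to class $c$} iff moreover its terminal is the constant $c$. The two facts I need are: (i) every $\vec y \in N$ drives the evaluation $\tadssem{\restr{t}{\epsilon B_\infty^n}}(\vec y)$ along a unique path $\pi$, and then $\vec y \in P_\pi$, so $\pi$ is feasible; and (ii) conversely, any feasible path $\pi$ has a witness $\vec y \in P_\pi \subseteq N$, and evaluation on that $\vec y$ follows $\pi$ and returns $\pi$'s terminal value, which therefore equals $\nu_c(\vec y)$. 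Both follow directly from the inductive definition of the TADS semantic function $\tadssem{}$ --- a predicate sends an input to exactly the successor consistent with it --- together with the correctness of precondition projection. Granting (i) and (ii), the theorem is immediate in both directions: if $\nu_c$ is $\epsilon$-robust, then for any feasible $\pi$ choose $\vec y \in P_\pi$, whose terminal equals $\nu_c(\vec y) = \nu_c(\vec x)$, so $\pi$ leads to class $\nu_c(\vec x)$; conversely, if all feasible paths lead to class $\nu_c(\vec x)$, then for any $\vec y \in N$ the path taken by evaluation is feasible, so $\nu_c(\vec y) = \tadssem{\restr{t}{\epsilon B_\infty^n}}(\vec y) = \nu_c(\vec x)$, and $\nu_c$ is $\epsilon$-robust.

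The routine inequality bookkeeping aside, the real work --- and the only genuinely subtle point --- is establishing the bidirectional correspondence (i)/(ii) between points of $N$ and feasible paths, i.e., that the feasible paths of $\restr{t}{\epsilon B_\infty^n}$ form an exhaustive, non-redundant description of how $\nu_c$ partitions $N$ into class regions. This rests on the two ingredients already provided earlier: that evaluation of a TADS deterministically follows predicate polarities, and that precondition projection is semantics-preserving on $S$ while returning $\bot$ off $S$ --- in particular, that the path-elimination step removes only paths whose polyhedron is empty, never a feasible one. Once that is in place, neither direction requires more than substituting $\tadssem{t} = \netsem{\nu_c}$ and reading off \cref{def:adversarial_example}.
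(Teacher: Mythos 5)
Your argument is correct and follows essentially the same route as the paper, which simply asserts that the theorem ``follows directly from the correctness results regarding TADS established in Section 4'' (i.e., the semantics-preservation of precondition projection and of the argmax composition). You have merely made explicit the point-to-feasible-path correspondence that the paper leaves implicit; nothing in your elaboration deviates from or adds to the intended argument.
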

The correctness of this theorem follows directly from the correctness results regarding TADS that were established in \cref{sec:extending}.

The approach to directly verify the original network sketched at the very left of \cref{fig:methods_overview} only works 
for quite small MNIST networks. Core reason for this scaling problem is the dimensionality of MNIST: With 784-dimensional 
inputs, the volume of the $\epsilon$-ball around $\vec x$ is proportional to $\epsilon^{784}$ which grows quite quickly 
leading to intractably large TADS. 

\highlight{The complexity of robustness verification increases exponentially with the number of input dimensions. Reducing dimensionality is therefore key to mitigating scaling issues and proving robustness for larger $\epsilon$.}

\subsection{PCA Guided Validation}
\label{sec:cases_full_pca}
\begin{figure}[htbp]
	\centering
	\subfloat[Principal Component 1]{%
		\includegraphics[width=.4\columnwidth]{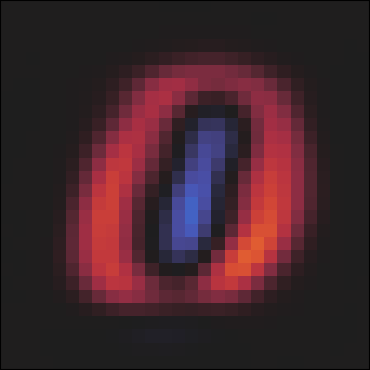}
		\label{fig:pc1}
	} \hfil
	\subfloat[Principal Component 2]{%
		\includegraphics[width=.4\columnwidth]{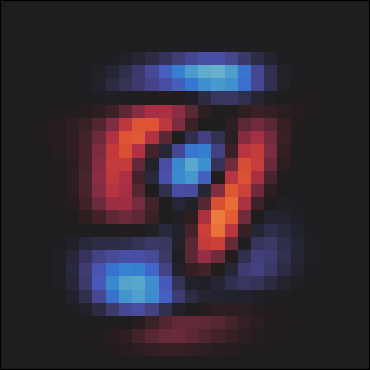}
		\label{fig:pc2}
	} \\
	\subfloat[Principal Component 3]{%
		\includegraphics[width=.4\columnwidth]{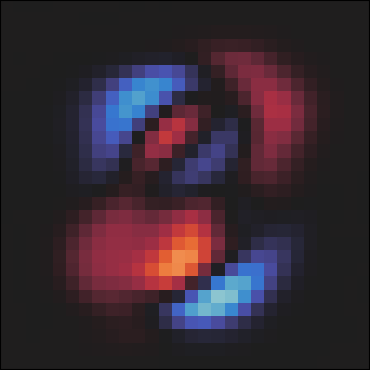}
		\label{fig:pc3}
	} \hfil
	\subfloat[Principal Component 4]{%
		\includegraphics[width=.4\columnwidth]{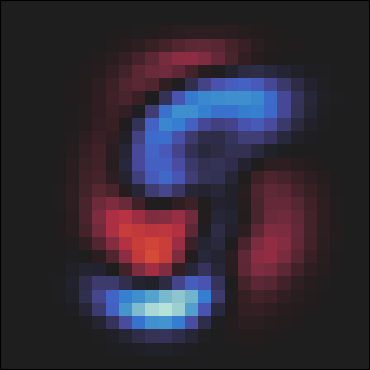}
		\label{fig:pc4}
	} \\
	\subfloat[Principal Component 5]{%
		\includegraphics[width=.4\columnwidth]{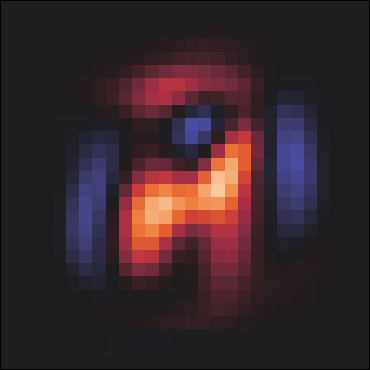}
		\label{fig:pc5}
	} \hfil
	\subfloat[Principal Component 6]{%
		\includegraphics[width=.4\columnwidth]{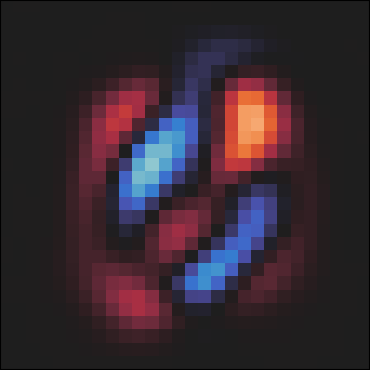}
		\label{fig:pc6}
	}
	\caption{The first 6 principal components of the MNIST dataset. Note that in the context of MNIST, images are just 784-dimensional vectors, and we therefore represent the PCA vectors as images.
	Vectors are visualized using a perceptually uniform diverging color palette (Seaborn's ``icefire'').
	Negative values are shown in blue, positives in red. Higher values are expressed with higher color intensity.}
	\label{fig:principal_components}
\end{figure}
\begin{figure}[htp]
	\includegraphics[scale=0.5]{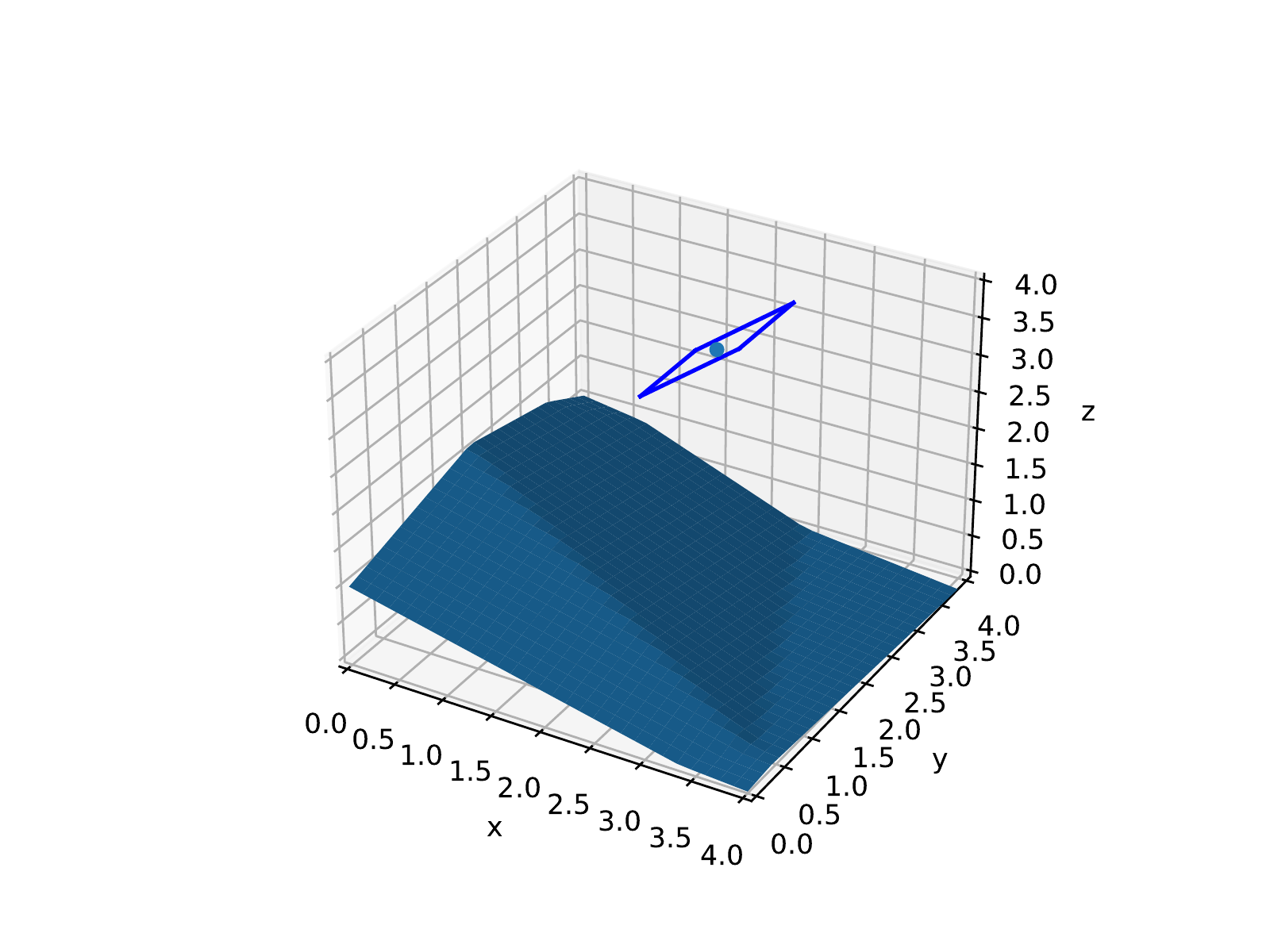}
	\caption{The same general robustness scenario that is depicted in \cref{fig:robustnessballexample}, except that now only approximative robustness is considered. }
	\label{fig:approxanalysisexample}
\end{figure}
%

To improve scalability of TADS-based verification, one might consider approximative robustness instead.
More concretely, instead of searching for adversarial examples in the full ball $\epsilon B_{\infty}(\vec x)$, we will present an approach that restricts the search to a lower-dimensional subset $S \subset \epsilon B_{\infty}(\vec x)$.

This will yield an underapproximation to robustness: If an adversarial example is found in $S$, it also exists in $B_{\infty}(\vec x)$ and robustness is violated.
However, the absence of adversarial examples in $S$ does not imply the absence of adversarial examples in $B_{\infty}(\vec x)$.
Key for the construction of the lower-dimensional manifold $S$ is \emph{principal component analysis} (PCA) as introduced in \cref{subsec:PCA}.

Applying PCA to MNIST results in a list of $n=784$ principal components 
\[ \vec{p}_1, \dots, \vec{p}_{n} \] 
ordered by decreasing variance along the respective axis.

The first six of which are visualized in \cref{fig:principal_components}.
Recall from \cref{def:PCAReduction}, that the principal components of $\vec p_i$ are precisely those along which a given dataset scatters most. They are therefore natural candidates to explore in a heuristic search for adversarial examples.

Let $\vec x$ be some point for which we seek to find adversarial examples. Then, we can define the $k$-dimensional PCA space around $\vec x$ as follows:
\[ U_k(\vec x) = \set*{\vec x +  \sum_{i=1}^k w_i \vec p_i  \given  w_i \in \realset } \eqeos \]
This space contains all vectors that are reachable from $\vec x$ along the principal components or, equivalently, the image of the PCA decoding function $\theta_k$.

This allows us to define a search space for adversarial examples:
\begin{equation}\label{eq:unit_search_space}
	S \defined  U_k(\vec x)  \  \cap \   (\vec x + \epsilon B_\infty^n) \eqeos
\end{equation}
Observe that $S$ is by definition a subset of $B_\infty^n$ of dimensionality $k \ll n$ and that, as both $U_k$ and $ \epsilon B_\infty^n$ are defined by linear equations, it can be conveniently expressed as a TADS precondition.

Restricting the search for adversarial examples to $S$ via a PCA-based transformations that adequately decodes the vectors of some k-dimensional ball $\delta B_\infty^k$, as sketched in \cref{fig:pca_plnn_heuristic}, drastically reduces the computational load.\footnote{Considerations about a good choice of $\delta$ are postponed to the next section, where we determine a $\delta$ that is suffices to prove robustness with he method indicated in \cref{fig:pca_plnn_complete}.} However, this reduction comes at a price, which is usually quite high (cf., \cref{fig:approxanalysisexample}): Independently of the choice of $\delta$, it can never prove the absence of adversarial examples.
	
\highlight{Dimensionality reduction mitigates scalability issues, but can only be considered as a heuristics for finding adversarial examples.}

\subsection{Built-in PCA Verification}
\label{subsec:pca verif}
Fundamentally, neural networks are heuristic models that seek \emph{only} to achieve high performance, which is typically defined as the accuracy of their predictions. 
If a neural network is only as useful as its predictive accuracy, then any change that is made to the neural network that does not drastically alter its predictive accuracy is acceptable.
This opens up a new angle to neural network verification that is unlike traditional program verification: Rather than trying to verify a given network as it is, one may well alter the network as long as 
this does not impair the prediction quality too much. In fact, we consider such a step (often)
necessary, as classifiers defined by high-dimensional neural networks will often not be robust, 
but small alterations may well be.

\Cref{fig:pca_plnn_complete} sketches how the idea of PCA
can be used to achieve such an alteration: The point is that each input is channeled through
the low-dimensional PCA space, which, similar to the situation in the previous section,  
is simple enough to support verification.
However we will see that, in contrast to the previous section, the special character of 
PCA encoding allows us to infer robustness result from the robustness results for
the PCA space. More concretely, after verifying the robustness of 
\[ \nu_c \circ \theta_k  \]
we establish a robustness result for the full modified net 
\[ \nu_r \defined \nu_c \circ \theta_k \circ \rho_k \eqeos \]
The success of this method very much depends on the accuracy
 of  $\nu_r$, which itself strongly depends on the chosen $k$.
We will discuss this issue in the \cref{sec:cases_accuracy}.

In the remainder of this section, we show how to infer robustness result for
the full $n$-dimensional vector space from robustness results for $\nu_c \circ \theta_k$.
Key observation to prove this property is that PCA preserves neighborhoods:

\begin{lemma}[PCA Preserves Neighborhoods]
	\label{lem:pca_robustness}%
	Let $\rho_k$ be the PCA transformation of the first $k$ principal components.
	For an input $\vec x$ and an $\epsilon$-neighbor $\vec y$ with ${\norm{\vec y - \vec x}_\infty \leq \epsilon}$ one can estimate their distance in the image of $\rho_k$ as
	\[ \norm{\rho_k(\vec y) - \rho_k(\vec x)}_\infty \leq \epsilon \max\nolimits_i \norm{\vec{p}_i}_1 \eqeos \]
\end{lemma}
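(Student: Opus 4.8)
The plan is to unwind the definitions and reduce the claim to a coordinate-wise estimate of $\rho_k$ via the Cauchy–Schwarz-free bound $\abs{\innerprod{\vec w}{\vec v}} \leq \norm{\vec w}_1 \norm{\vec v}_\infty$. Recall from \cref{def:PCAReduction} that $\rho_k(\vec z) = (r_1,\dots,r_k)^\transpose$ where $r_i = \innerprod{\vec p_i}{\vec z}$. Since each $r_i$ is a linear functional in $\vec z$, the difference $\rho_k(\vec y) - \rho_k(\vec x)$ has $i$-th component exactly $\innerprod{\vec p_i}{\vec y - \vec x}$; this uses only linearity of the inner product in its second argument. First I would make this linearity step explicit so that the whole argument is reduced to bounding $\abs{\innerprod{\vec p_i}{\vec y - \vec x}}$ for each $i \in \{1,\dots,k\}$.

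Next I would apply the elementary inequality $\abs{\innerprod{\vec p_i}{\vec z}} = \abs{\sum_j (p_i)_j z_j} \leq \sum_j \abs{(p_i)_j}\abs{z_j} \leq \big(\sum_j \abs{(p_i)_j}\big)\max_j \abs{z_j} = \norm{\vec p_i}_1 \norm{\vec z}_\infty$ with $\vec z = \vec y - \vec x$. By hypothesis $\norm{\vec y - \vec x}_\infty \leq \epsilon$, so the $i$-th component of $\rho_k(\vec y) - \rho_k(\vec x)$ is bounded in absolute value by $\epsilon \norm{\vec p_i}_1 \leq \epsilon \max_{i'} \norm{\vec p_{i'}}_1$, where the last maximum ranges over the relevant index set (one may take it over all $n$ principal components, or just the first $k$, since the bound only needs to dominate $i \leq k$). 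Taking the maximum over $i \in \{1,\dots,k\}$ of these component-wise bounds yields $\norm{\rho_k(\vec y) - \rho_k(\vec x)}_\infty \leq \epsilon \max_i \norm{\vec p_i}_1$, which is exactly the claimed estimate.

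There is essentially no hard part here — the statement is a direct consequence of the operator-norm inequality between the $l_1$ and $l_\infty$ norms under a linear map whose rows are the $\vec p_i$. The only point worth a sentence of care is making sure the index set of the maximum in the conclusion is interpreted consistently (over $i \leq k$, which is what naturally arises, and which is dominated by the maximum over all $n$ components since the $\vec p_i$ are fixed unit-norm vectors). I would also remark that normedness of the principal components ($\norm{\vec p_i}_2 = 1$) is not actually needed for this bound — only that the $\vec p_i$ are fixed vectors — so the lemma holds for any linear encoder, with PCA's specific structure (orthonormality) mattering only for the reconstruction-quality statements elsewhere. This remark is optional but clarifies why the proof is so short.
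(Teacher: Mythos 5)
Your proof is correct and follows essentially the same route as the paper's: linearity of $\rho_k$, component-wise expansion of the inner product, the triangle inequality giving $\abs{\innerprod{\vec p_i}{\vec y - \vec x}} \leq \norm{\vec p_i}_1 \norm{\vec y - \vec x}_\infty$, and a maximum over $i \leq k$. The only difference is that the paper additionally notes the bound is tight (attained at $\vec y = \vec x + \epsilon \sign(\vec p_i)$), which your argument omits but which is not needed for the stated inequality.
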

\begin{proof}
	\begin{align*}
		&\phantom{{}={}} \norm{\rho_k(\vec y) - \rho_k(\vec x)}_\infty \\
		&= \norm{\rho_k(\vec y - \vec x)}_\infty 
		&\text{linearity}\\
		&= \max\nolimits_{i=1}^k \, \abs*{ \innerprod{\vec{p}_i} {\vec y- \vec x} }
		&\text{def. $\norm{}_\infty$} \\
		&= \max\nolimits_{i=1}^k \abs*{\sum\nolimits_{j=1}^k \, {(\vec{p}_i)_j} \cdot {(\vec{y}_j- \vec{x}_j)} }
		&\text{def } \innerprod{}{} \\
		&\leq \max\nolimits_{i=1}^k \sum\nolimits_{j=1}^k \, \abs{(\vec{p}_i)_j} \cdot \abs{\vec{y}_j- \vec{x}_j }
		&\text{$\triangle$ for $\abs{}$} \\
		&\leq \epsilon \: \max\nolimits_{i=1}^k \sum\nolimits_{j=1}^k \, \abs{(\vec{p}_i)_j}
		&\text{assumption} \\
		&= \epsilon \: \max\nolimits_{i=1}^k \, \norm{\vec{p}_i}_1 
		&\text{def. $\norm{}_1$}
	\end{align*}
	One can see that the bound is tight by setting 
	\[ \vec y = \vec x + \epsilon \sign(\vec{p}_i) \]
	where $\sign(\vec{p}_i)$ is the sign function applied component wise to $\vec{p}_i$. For that $\vec y$ equality holds for all steps.
\end{proof}

As all $p_i$ have unit length, it is possible to derive an upper bound for $\norm{\vec{p}_i}_1$ for every PCA\@.
It is obtained when at least one principal component $\vec{p_i}$ (with some $1 \leq i \leq k$) equals
\[ \vec{p}_i = \left(\pm\frac{1}{\sqrt{n}}, \dots, \pm\frac{1}{\sqrt{n}} \right)^\transpose \in \realset^n \eqeos \]
In that case the norm is $\norm{\vec{p}_i}_1 = \sqrt{n}$, leading to the following proposition.
\begin{corollary}
	For every $k \leq n$ and every set of principal components $\vec{p_1}, \dots, \vec{p_n}$ the PCA representation $\rho_k$ satisfies
	\[ {\norm{\vec y - \vec x}_\infty \leq \epsilon} \implies \norm{\rho_k(\vec x) - \rho_k(\vec y)}_\infty \leq \epsilon \sqrt{n} \eqeos \]
\end{corollary}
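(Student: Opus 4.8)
The plan is to derive this directly from \cref{lem:pca_robustness}. That lemma already delivers, for any $\epsilon$-neighbors $\vec x, \vec y$, the estimate $\norm{\rho_k(\vec y) - \rho_k(\vec x)}_\infty \leq \epsilon \max\nolimits_i \norm{\vec p_i}_1$, so the only thing left is to bound $\max\nolimits_{i \leq k} \norm{\vec p_i}_1$, and hence $\max\nolimits_{i \leq n} \norm{\vec p_i}_1$, by $\sqrt n$ using only the fact that each principal component has unit Euclidean norm.

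First I would recall the standard norm comparison $\norm{\vec v}_1 \leq \sqrt n \,\norm{\vec v}_2$ valid for every $\vec v \in \realset^n$. This is an instance of the Cauchy--Schwarz inequality: writing $\norm{\vec v}_1 = \sum\nolimits_{j=1}^n \abs{v_j} = \innerprod{\vec 1}{(\abs{v_1}, \dots, \abs{v_n})^\transpose}$ with $\vec 1 = (1,\dots,1)^\transpose$, Cauchy--Schwarz gives $\norm{\vec v}_1 \leq \norm{\vec 1}_2 \cdot \norm{\vec v}_2 = \sqrt n \,\norm{\vec v}_2$. Since \cref{def:PCA} forces $\norm{\vec p_i}_2 = 1$ for every $i$, this yields $\norm{\vec p_i}_1 \leq \sqrt n$ for all $i$, hence $\max\nolimits_i \norm{\vec p_i}_1 \leq \sqrt n$. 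Substituting into the conclusion of \cref{lem:pca_robustness} then gives exactly $\norm{\rho_k(\vec x) - \rho_k(\vec y)}_\infty \leq \epsilon \sqrt n$ whenever $\norm{\vec y - \vec x}_\infty \leq \epsilon$.

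There is no real obstacle here; the statement is genuinely a corollary of the preceding lemma plus a one-line norm inequality. The only point worth a remark is sharpness: the bound $\norm{\vec v}_1 \leq \sqrt n \,\norm{\vec v}_2$ is attained precisely when all $\abs{v_j}$ are equal, i.e.\ when some $\vec p_i = (\pm 1/\sqrt n, \dots, \pm 1/\sqrt n)^\transpose$, which is exactly the extremal configuration described in the text just before the statement; combined with the tightness of \cref{lem:pca_robustness} (witnessed by $\vec y = \vec x + \epsilon \sign(\vec p_i)$), this confirms that the factor $\sqrt n$ cannot be improved in general.
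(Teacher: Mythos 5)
Your proof is correct and follows essentially the same route as the paper: invoke \cref{lem:pca_robustness} and then bound $\max_i \norm{\vec p_i}_1$ by $\sqrt n$ using the unit Euclidean length of the principal components. If anything, your version is slightly more careful, since you actually establish $\norm{\vec v}_1 \leq \sqrt n\,\norm{\vec v}_2$ via Cauchy--Schwarz, whereas the paper merely asserts that the extremal value $\sqrt n$ is attained at $\vec p_i = (\pm 1/\sqrt n, \dots, \pm 1/\sqrt n)^\transpose$ without proving it is a maximum.
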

This suffices to prove the announced robustness result:

\begin{theorem}[Robustness]\label{lem:epsilon_delta_robustness}%
	Let $\nu\colon\realset^n\to\realset^m$ by a PLNN\@.
	Then, let
	\begin{align*}
		\nu'_r &\defined \nu \circ \theta_k \\
		\nu_r &\defined \nu \circ \theta_k \circ \rho_k \eqeos
	\end{align*}
	If $\nu'_r$ is $\delta$-robust around $\rho_k(\vec x)$ with
	\[ \delta = \max\nolimits_i \norm{\vec{p}_i}_1 \leq \epsilon \sqrt{n} \ , \]
	then $\nu_r$ is $\epsilon$-robust around $\vec x$.
\end{theorem}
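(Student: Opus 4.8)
The plan is to reduce the asserted $\epsilon$-robustness of $\nu_r$ around $\vec x$ to the assumed $\delta$-robustness of $\nu'_r$ around $\rho_k(\vec x)$, bridging the two by the factorization $\nu_r = \nu'_r \circ \rho_k$ and by \cref{lem:pca_robustness}, which turns an $\epsilon$-neighbourhood in $\realset^n$ into a $\delta$-neighbourhood in $\realset^k$. As elsewhere in the paper, I read ``$\nu'_r$ is $\delta$-robust'' as ``the associated classifier $(\nu'_r)_c = \argmax \circ \netsem{\nu'_r}$ is $\delta$-robust'' in the sense of \cref{def:adversarial_example}, and likewise for $\nu_r$.

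First I would record that $\nu_r = \nu \circ \theta_k \circ \rho_k = \nu'_r \circ \rho_k$ straight from the definitions, and that pre-composing by the deterministic map $\rho_k$ commutes with the operator $\argmax \circ \netsem{\cdot}$, so this lifts to classifiers: $(\nu_r)_c = (\nu'_r)_c \circ \rho_k$. Next, fix an arbitrary $\vec y \in \realset^n$ with $\norm{\vec y - \vec x}_\infty \leq \epsilon$ and put $\vec x' \defined \rho_k(\vec x)$, $\vec y' \defined \rho_k(\vec y)$; the goal is $(\nu_r)_c(\vec y) = (\nu_r)_c(\vec x)$. Applying \cref{lem:pca_robustness} to $\vec x, \vec y$ gives $\norm{\vec y' - \vec x'}_\infty \leq \epsilon \max_i \norm{\vec p_i}_1 = \delta$, so $\vec y'$ lies in the $\delta$-ball around $\rho_k(\vec x)$; the hypothesis that $\nu'_r$ is $\delta$-robust there then yields $(\nu'_r)_c(\vec y') = (\nu'_r)_c(\vec x')$. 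Finally, unwinding $(\nu_r)_c = (\nu'_r)_c \circ \rho_k$ on both sides gives $(\nu_r)_c(\vec y) = (\nu'_r)_c(\vec y') = (\nu'_r)_c(\vec x') = (\nu_r)_c(\vec x)$; since $\vec y$ was arbitrary, $\nu_r$ is $\epsilon$-robust around $\vec x$. The inequality $\delta \leq \epsilon\sqrt n$ in the statement is just the preceding corollary (via $\norm{\vec p_i}_1 \leq \sqrt n$) and only serves to give a PCA-independent value for $\delta$.

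I do not expect a genuinely hard step: all quantitative content is packed into \cref{lem:pca_robustness}, which is already established, and what remains is essentially a diagram chase. The points that need care are bookkeeping ones: keeping track of which ambient space each object lives in ($\realset^n$ versus $\realset^k$); being explicit that ``robust'' refers to $\argmax \circ \netsem{\cdot}$ and that this composition commutes with the deterministic preprocessing $\rho_k$, which is exactly what makes the reduction to $\nu'_r$ legitimate; and matching the threshold precisely, i.e.\ ensuring the bound returned by \cref{lem:pca_robustness} is literally the $\delta$ appearing in the hypothesis (in particular carrying the factor $\epsilon$ through). If one adopts the stronger reading of \cref{def:adversarial_example} in which the centre must also be correctly classified, one additionally observes that $(\nu_r)_c(\vec x) = (\nu'_r)_c(\rho_k(\vec x))$, so correctness of $\nu'_r$ at $\rho_k(\vec x)$ with the intended label of $\vec x$ transfers verbatim to $\nu_r$ at $\vec x$.
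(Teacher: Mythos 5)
Your proof is correct and matches the paper's argument in all essentials: both rest on the factorization $\nu_r = \nu'_r \circ \rho_k$ together with \cref{lem:pca_robustness} to carry the $\epsilon$-ball around $\vec x$ into the $\delta$-ball around $\rho_k(\vec x)$. The only difference is presentational --- the paper argues by contraposition (pushing an adversarial example for $\nu_r$ forward under $\rho_k$ to one for $\nu'_r$) while you argue directly --- and your extra bookkeeping about the classifier reading and the factor of $\epsilon$ in $\delta$ is sound but does not constitute a different method.
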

\begin{proof}
	For a proof by contraposition, we show that if $\nu_r$ is not $\epsilon$-robust, then $\nu'_r$ is not $\delta$-robust either.
	Let $\vec z \in \realset^n$ be an adversarial example for $\nu_r$ with $\norm{\vec z - \vec x}_\infty \leq \epsilon$.
	By \cref{lem:pca_robustness} it follows that  $\norm{\rho_k(\vec z) - \rho(\vec x)}_\infty \leq \delta$.
	Therefore $\rho_k(\vec z) \in \rho_k(\vec x) + \delta B_\infty^k$.
	And since $\vec z$ is an adversarial, it follows as desired that
	\[  \nu'_r(\rho_k(\vec z)) = \nu_r(\vec z)\neq \nu_r(\vec x) = \nu'_r(\rho_k(\vec x)) \eqeos \]
\end{proof}
In other words, proving $\nu'_r$'s robustness on the $k$-dimensional PCA space with radius $\delta$ directly proves robustness for the entire construct $\nu_r$ with radius $\epsilon = \frac{\delta}{\sqrt n}$. In the case of MNIST, $n$ is equal to 784. Therefore, proving robustness of $\nu'_r$ for some radius $\delta$ implies robustness of $\nu_r$ with radius at least
\[ \delta \geq \epsilon \geq \frac{\delta}{28} \eqeos \]

\highlight{Altering the classifier via PCA enables full robustness verification via 
low-dimensional reasoning. However, as shown in \cref{fig:PCAaccplot}, this comes at the cost of accuracy, in particular for small $k$.}

\subsection{Improving Accuracy}
\label{sec:cases_accuracy}
As laid out in \cref{subsec:pca verif}, PCA can be used to modify a neural network in a manner that makes it much easier to verify at the cost of some predictive accuracy.
Fortunately, by modifying not only the neural network itself, but also its training process, some of that lost accuracy can be regained at almost no cost.
\Cref{fig:pca_plnn_learned} sketches a way how both, verification can be eased and
and accuracy for low $k$ can be  improved. Key to this
approach is the observation that in \cref{fig:pca_plnn_complete} the PCA decoder and
the first linear layer are adjacent and can therefore simply evaluate to a linear function with
k-dimensional input and an output dimension defined by the first hidden layer. Thus, rather
than just modifying the original classifier via PCA auto-encoding, one can (re-) learn the
entire green part through the PCA encoder. This results in a much smaller trained network $\nu_t$
which, in particular, is shielded from the 784 dimensions of MNIST by the PCA decoder. In fact, in our setup, 
\begin{itemize}
\item
the number of neurons in $\nu_t$ is essentially an order of magnitude smaller than the original net, and 
\item
the performance of $ \nu_t  \circ \rho_k $ is much better for small $k$, as shown in \cref{traineddracc}.
\end{itemize}

\highlight{Specifically training a neural network according to PCA encoding improves accuracy, in particular, for small 
PCA dimensions $k$.}

\begin{figure}
	\centering
	\includegraphics[width=.9\columnwidth]{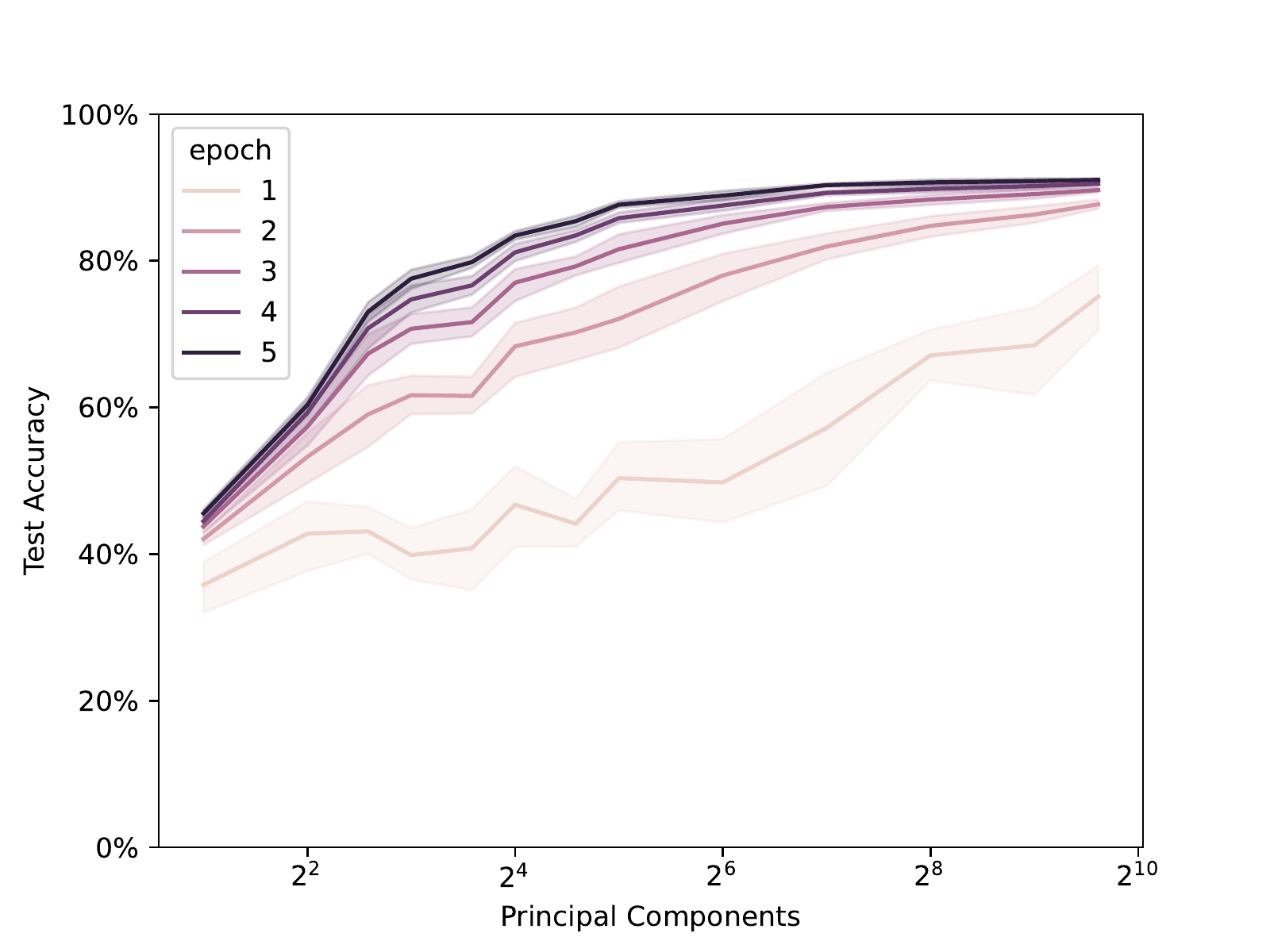}
	\caption{A plot showing the dependence of the neural network accuracy on the number $k$ of input dimensions allowed in the input encoding.
	Multiple networks were trained with different initializations for 5 epochs.
	Error bars illustrate $95\%$ confidence interval.
	Parameters: PyTorch framework with random seeds 0, 5, 10, 15, 20, 25, 42; network with 5 layers, 10 neurons per layer, ReLU activation, kaiming normal initialization, Adam optimizer, cross-entropy loss. PCA implementation of SciPy.}
	\label{fig:PCAaccplot}
\end{figure}

\begin{figure}
	\centering
	\includegraphics[width=.9\columnwidth]{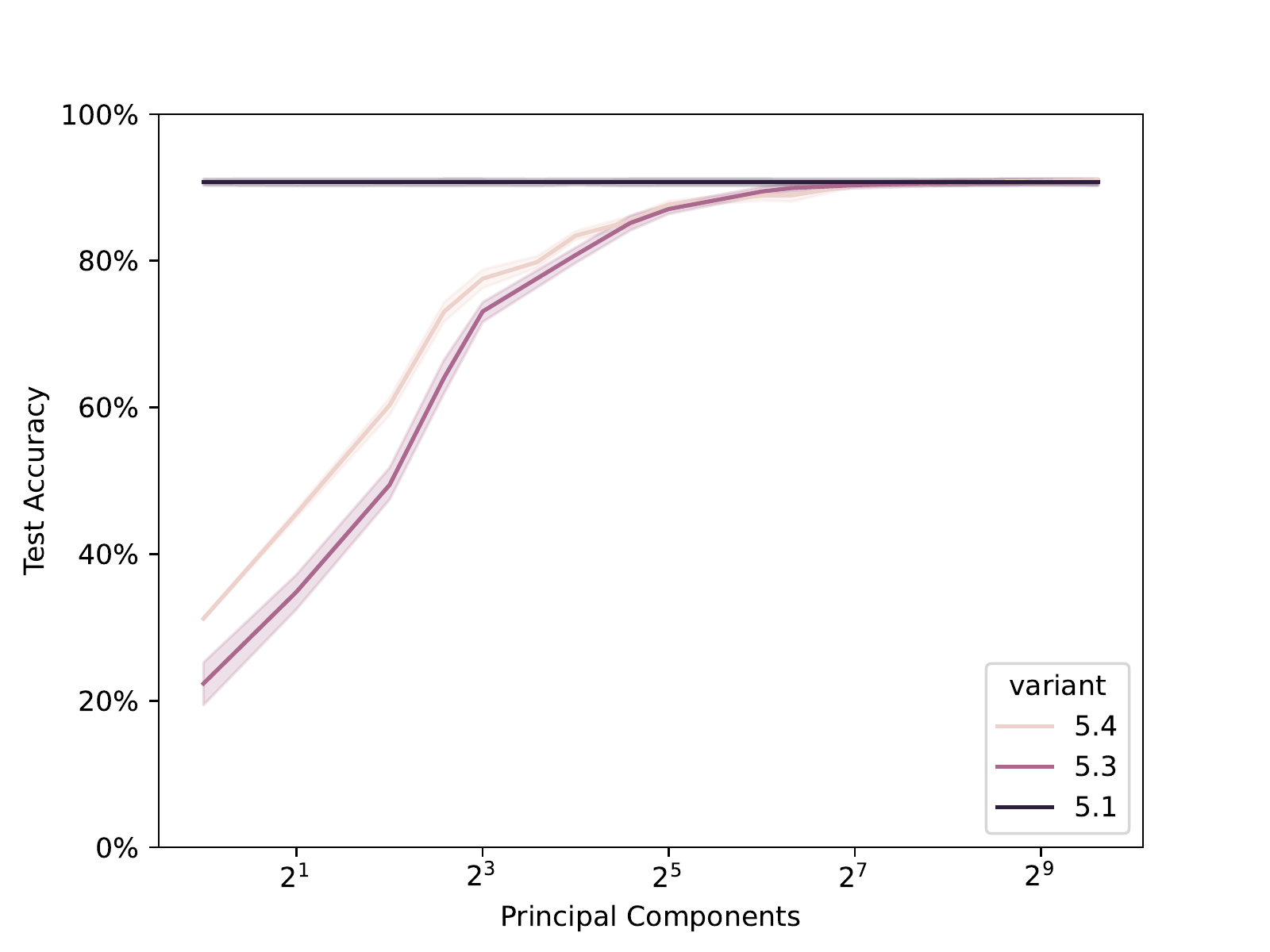}
	\caption{Comparison of predictive accuracy on MNIST's test set for the modified PLNNs of \cref{subsec:pca verif} and \cref{sec:cases_accuracy}. The violet line (``variant 5.1'') shows the reference accuracy of an unmodified PLNN with same architecture and hyperparameters. All networks were trained as in \cref{fig:PCAaccplot}, but only the accuracy after the 5-th epoch is shown.}
	\label{traineddracc}
\end{figure}

\begin{figure}[t]
	\center
	\includegraphics[width=.6\columnwidth]{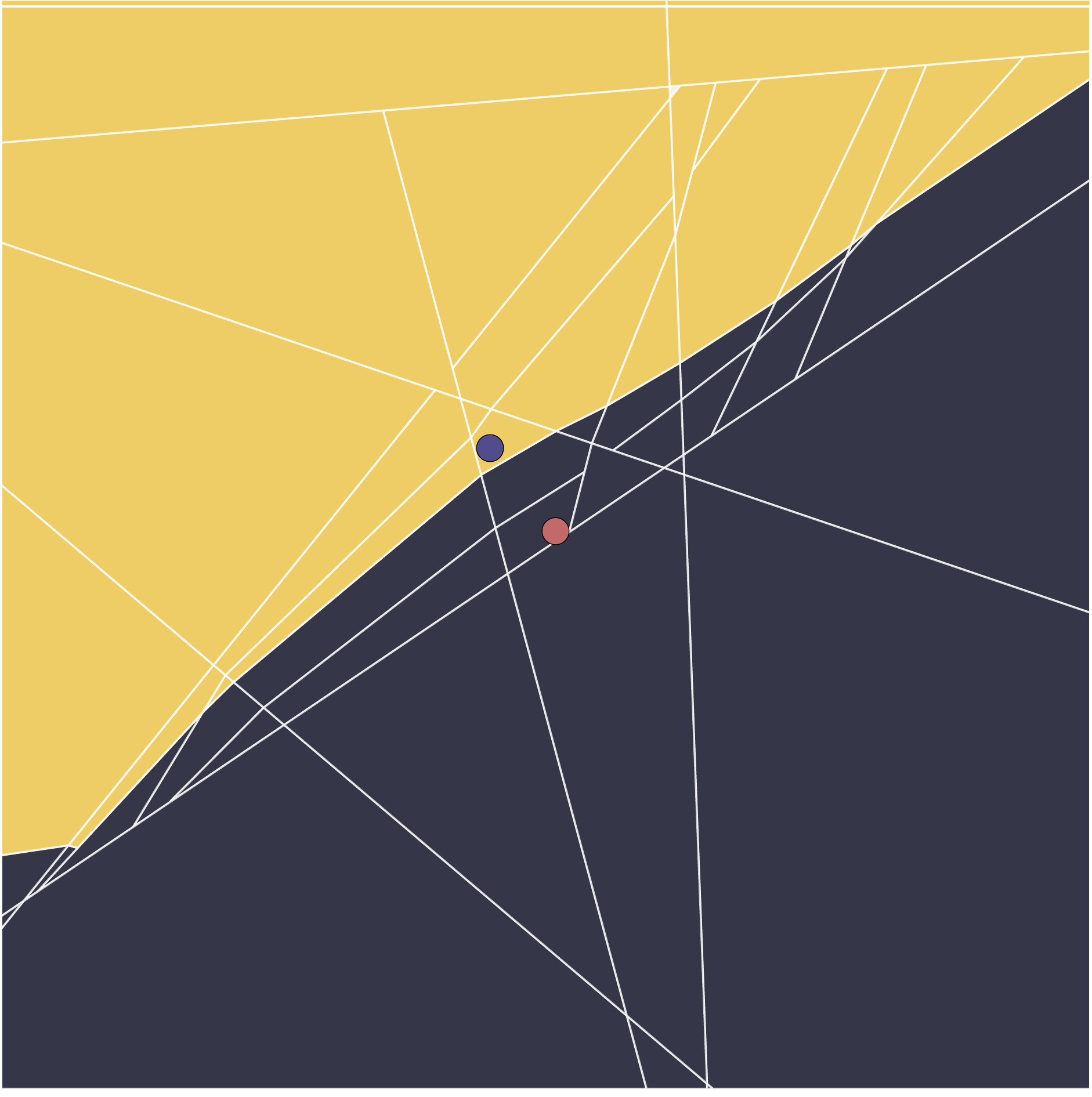}
	\caption{A function plot representing the TADS of $\nu'_c$ in an area around $\vec{x}_9$. The yellow area contains points that are classified correctly, the black area contains points that are classified incorrectly. The blue point represents $\vec{x}_9$ and the red point represents the adversarial example $\vec{x}_5$.}
	\label{fig:neighborhood_plot}
\end{figure}

\section{Experimental Results}\label{subsec:experimental_results}
In the following, we will showcase experimental results regarding the TADS-based verification of neural networks using PCA to reduce the dimensionality of the verification problem. We will start by considering the reduction to two dimensions, allowing us to visualize the process and showcase its workings conceptually.
Afterwards, we will move towards higher dimensions, examining more concrete questions of scalability. 

\subsection{Conceptual Showcase and Visualization}
\label{subsec:2dtads}
\begin{figure*}[t]
	\centering
	\begin{tabular}{>{\centering\arraybackslash}m{.2\textwidth}m{.5in}>{\centering\arraybackslash}m{.2\textwidth}m{.1in}>{\centering\arraybackslash}m{.2\textwidth}}
		\centering\arraybackslash
		\includegraphics[width=.2\textwidth]{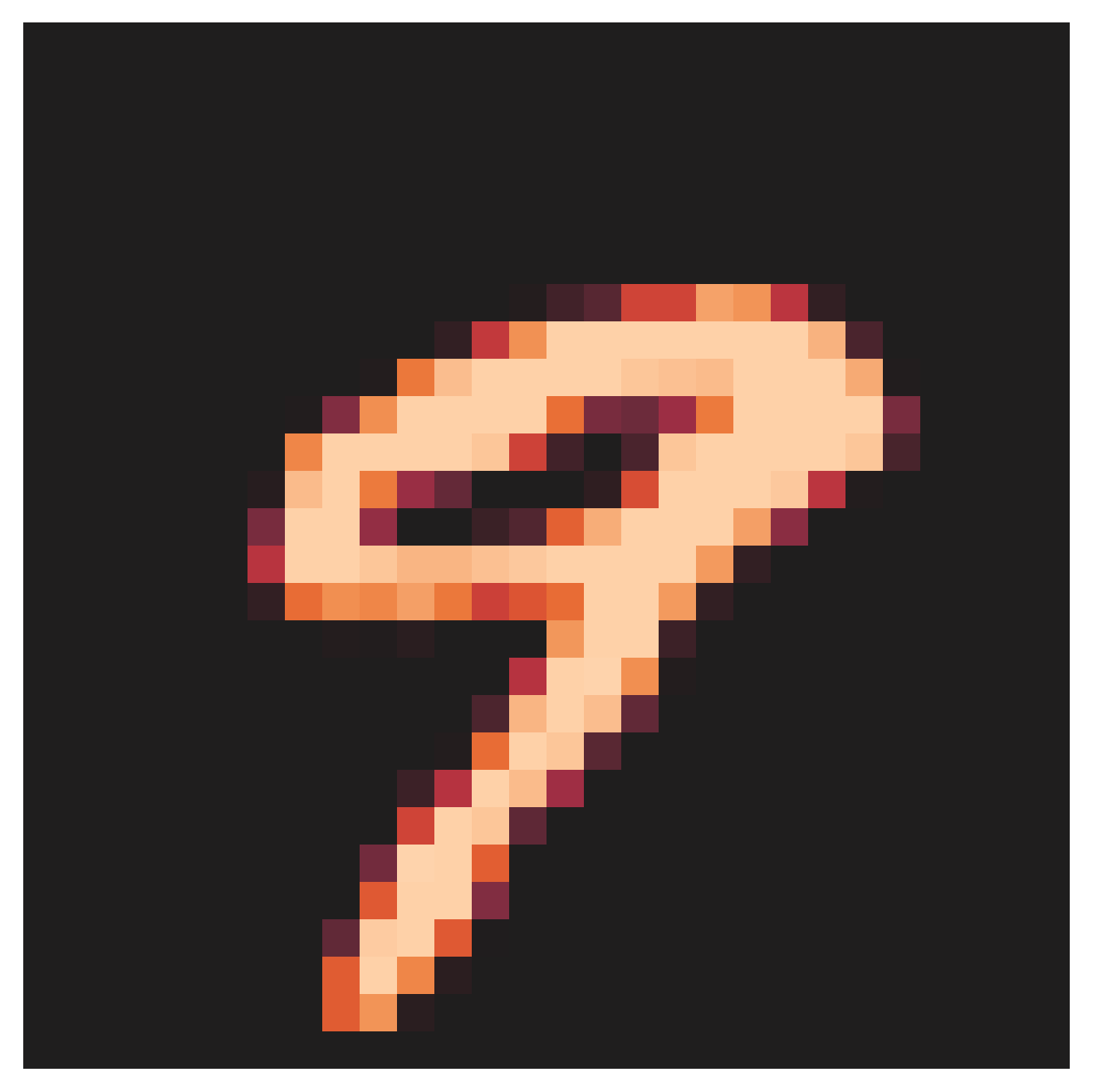}
		&%
		\centering\arraybackslash%
	$\ +\ .01\ \times$ &%
		\includegraphics[width=.2\textwidth]{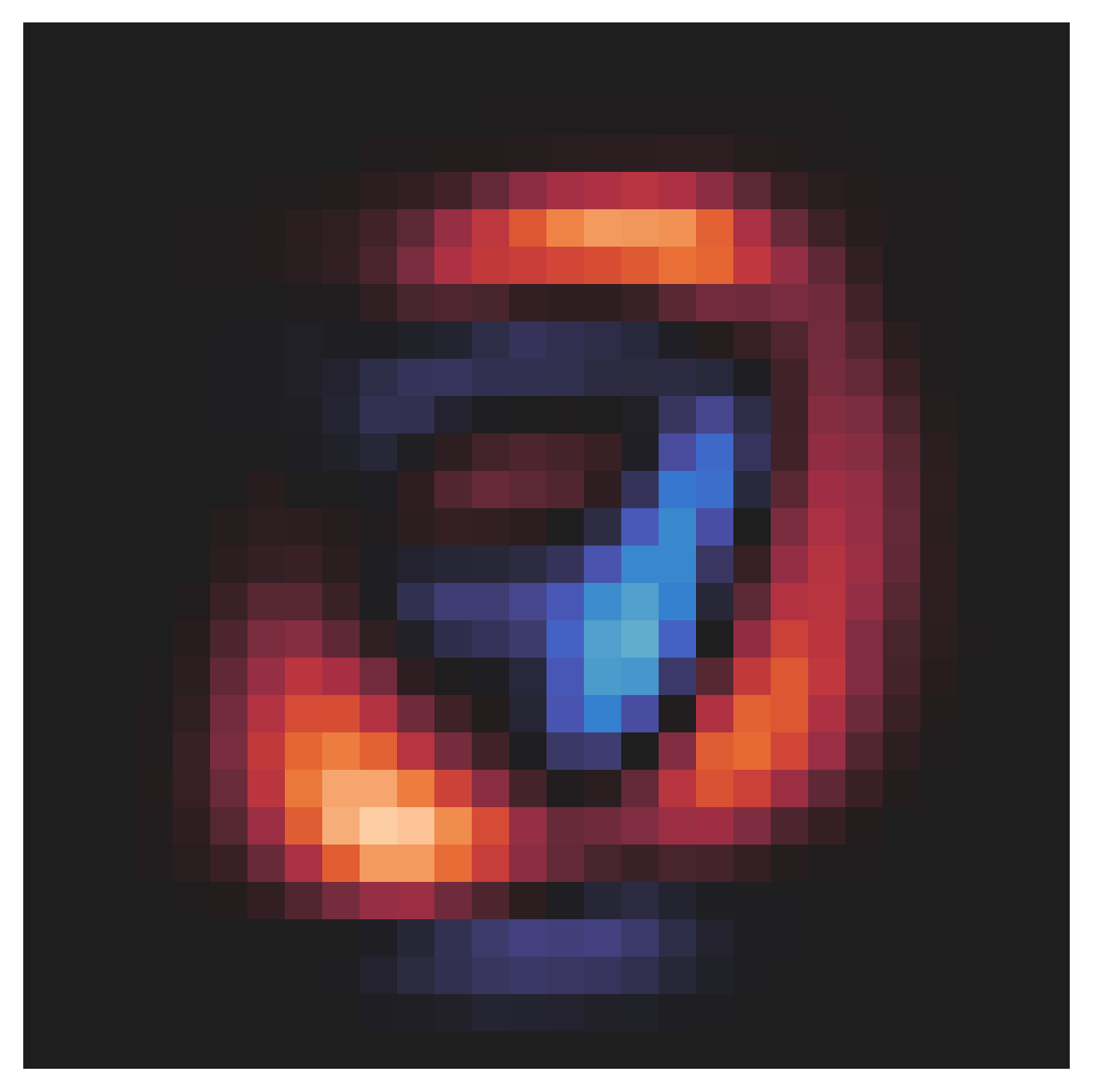}
		&%
		$=$ & %
		\includegraphics[width=.2\textwidth]{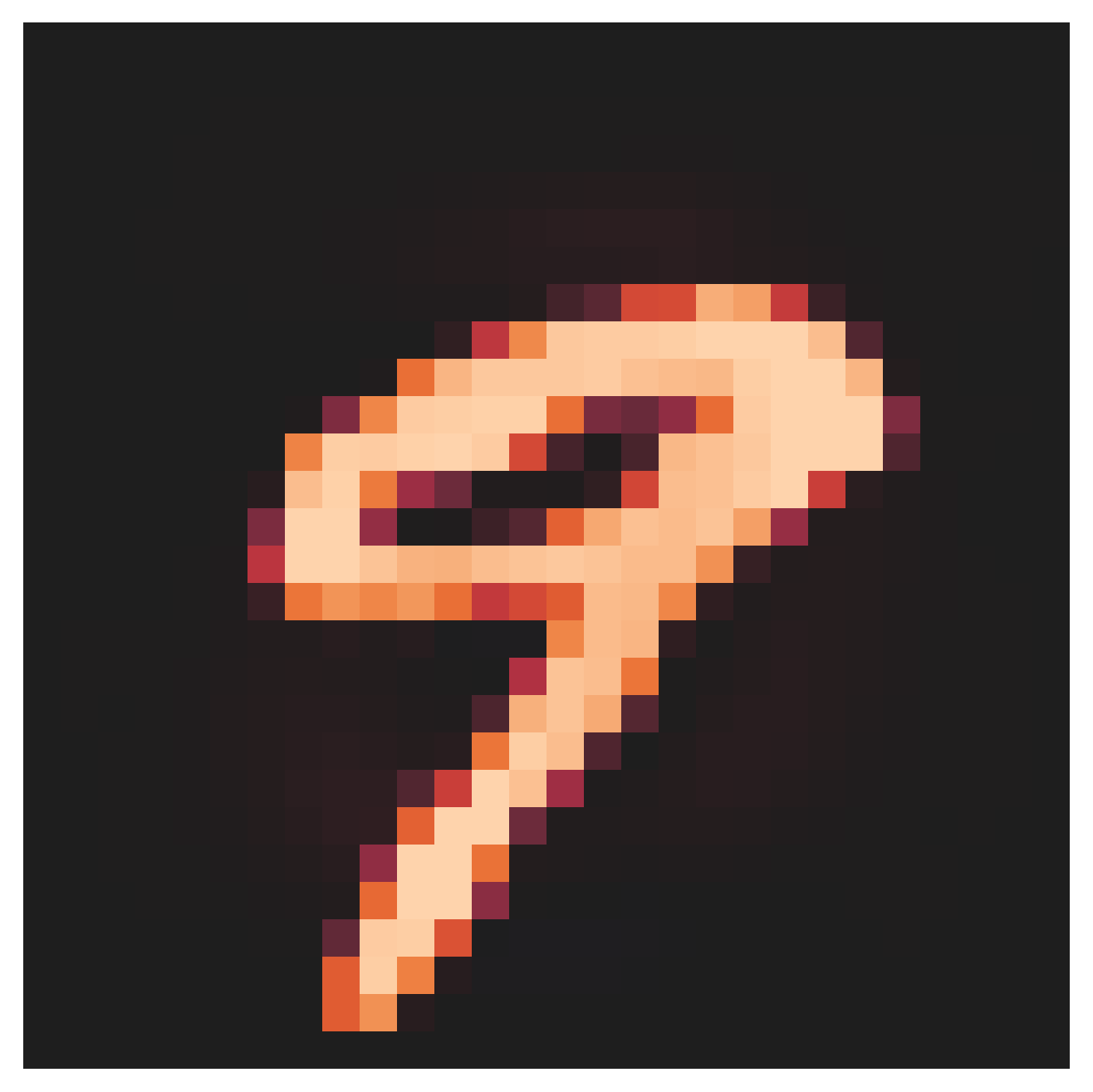}
		\\
		$\nu$: ``9'' & & $\Delta$ &   & $\nu$: ``5''
	\end{tabular}
	\caption{MNIST sample that represents the number ``9'' (left) and a close adversarial example that is classified as ``5'' (right).
	The difference between the two is marginal (center).
	The adversarial was found in a neighboring linear region using a restricted TADS (cf.,\ \cref{fig:2d_tads}).
	Vectors are visualized using a perceptually uniform diverging color palette (Seaborn's ``icefire'').
	Idea of representation \cite{goodfellow2014explaining}.}
	\label{fig:2dsample}
\end{figure*}
\begin{figure}
\center
\includegraphics[scale=0.2,angle =270]{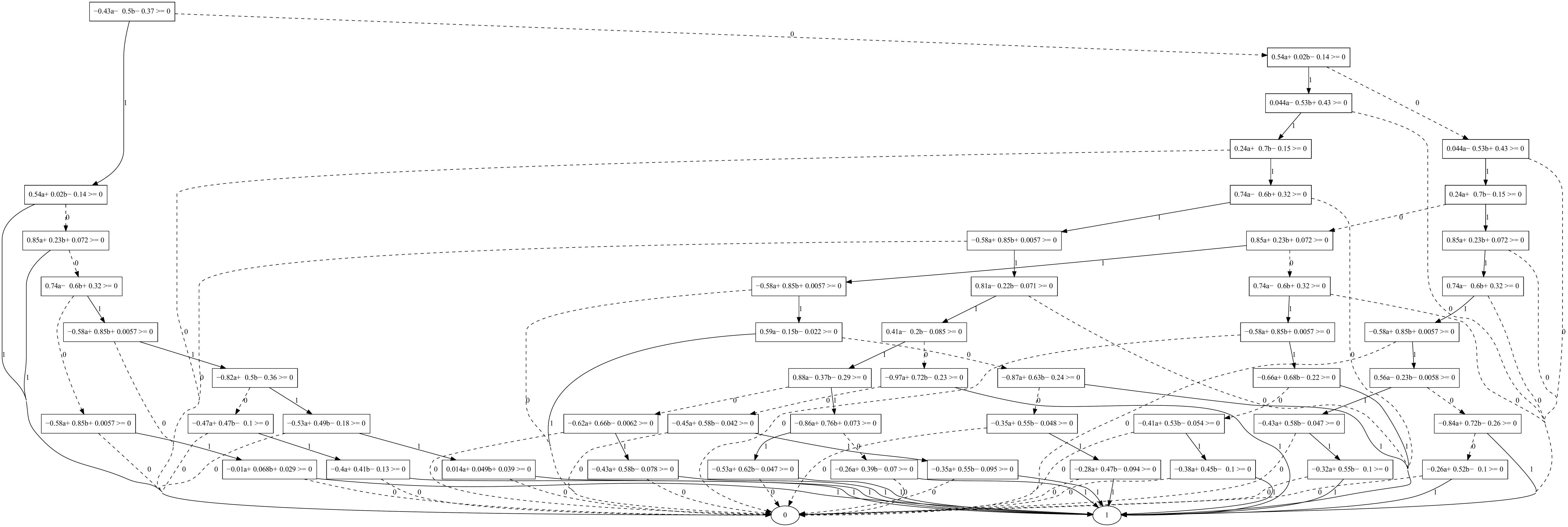}
\caption{A TADS representing the behavior of $\nu'$ around $\vec{x}_9$. For readability, this TADS is constructed such that $\vec{x}_9$ corresponds to the vector $(0,0)$. The terminal node ``1'' represents a correct classification, the node ``0'' an incorrect one.}
\label{fig:2d_tads}
\end{figure}
For this section, we consider the neural network classifier
\[ \nu_c = {\argmax }\circ {\netsem{\nu'}} \circ \rho_2 \]
where $\nu'$ is a fully connected ReLU-network with 5 layers of 10 neurons each. 
Training is done on the MNIST training set with batches of 300 images per training step using standard settings of the ADAM optimizer \cite{kingma2014adam}.
This classifier uses the two-dimensional PCA representation. This allows us to plot the function represented by $\netsem{\nu'} \circ \rho_2$ as done in \cref{fig:neighborhood_plot}. 

We consider the sample $\vec x_9$ shown in \cref{fig:2dsample}. This image is  classified correctly by $\nu_c$, being assigned the label ``9''. However, as we will see, this classification is very unstable.

Using TADS, we can gain insight into this prediction by creating the class characterization TADS 
\[ t^9_{\nu_c} = t_{\nu'} \tadsjoin t_a \tadsjoin t_{x=9} \]
for $\nu_c'$ and class ``9''  on the infinity ball $\vec{x}_9 + 0.3 \cdot B^{2}_{\infty}$
This TADS is shown in \cref{fig:2d_tads} and can be interpreted as follows:
\highlight{Any input belonging to the set $\vec{x}_9 + 0.3 \cdot B^{2}_{\infty}$ that reaches the ``1'' terminal in the TADS depicted in \cref{fig:2d_tads} is classified as a ``9'' by $\nu_c$ (which is the desired behavior), while all others are adversarial examples.}

Moreover, we can visualize the function plot corresponding to this TADS as shown in \cref{fig:neighborhood_plot}. Note that lines in this plot indicate decision boundaries that are implied by the non-terminal nodes in the TADS\@. These decision boundaries separate the regions of the piece-wise affine function encoded by the neural network. 
As a consequence, each polygon that is enclosed by such linear boundaries corresponds to precisely one path in the TADS $t^9_{\nu_c}$.

One can immediately observe that while $\nu_c$ classifies $\vec{x}_9$ correctly, there exists a close region of inputs that are classified incorrectly. Using the information contained in the TADS, it is trivial to obtain adversarial examples by picking any path in the TADS ending in the ``0'' terminal and finding a point satisfying the corresponding path condition. An example adversarial example generated in this way is shown in \cref{fig:2dsample}. Observe that, while being classified differently by $\nu_c$, both images are almost identical to the human eye, which indicates that this neural network might not be entirely trustworthy even though it classified $\vec{x}_9$ correctly.

\subsection{Scaling to higher dimensions}
\begin{figure}
\centering
\includegraphics[scale=0.4]{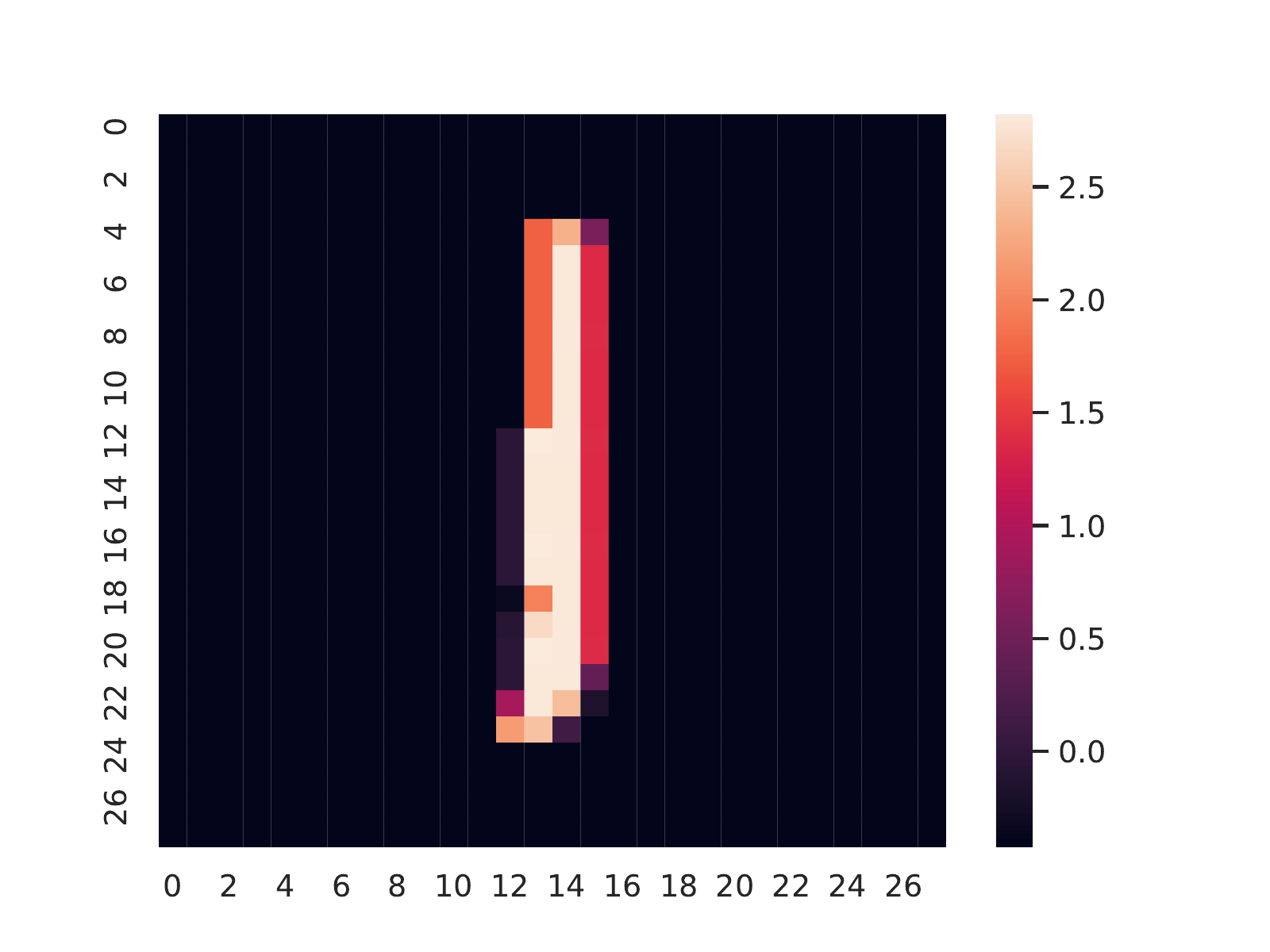}
\caption{An MNIST sample image $\vec x_1$ representing the digit ``1''.}
\label{fig:advsample}
\end{figure}

After showcasing our verification approach conceptually on a 2-dimensional problem, we now move towards higher dimensions and seek to examine how the addition of new dimensions affects scalability.
To do this, we construct a neural network classifier that uses a 6-dimensional input representation instead of a 2-dimensional one (all other settings are equal)
\[ \nu_c = \underbrace{{\argmax } \circ { \sem{\nu'}}}_\text{classifier} {}\circ \rho_6 \eqeos \]
This increase of dimension drastically improves the accuracy, however at the price of an explosion in size of the corresponding TADS\@. All reported numbers reflect an average according to six random runs.

\paragraph{Accuracy.} The six-dimensional neural network classifier $\nu_c$ achieved roughly 74\% accuracy on the test set in comparison
to the 91\% accuracy of the original unrestricted network, but much better than the 46\% accuracy of the two-dimensional classifier
(cf., \cref{traineddracc}). 

We also tested different dimensions for PCA with respect to network accuracy, the results of which can be found in \cref{fig:PCAaccplot}.
These results show that in this case, a dimensionality reduction by an order of magnitude still allows one to achieve 90\% accuracy,
which is very close to the 91\% accuracy of the original network.

\paragraph{Scalability.} In the two-dimensional case, we showed an
example where robustness around some input could be accurately
disproven with a radius of $\delta=0.3$, which, according to \cref{lem:pca_robustness} implies 
\[ \epsilon=\frac{\delta}{\max_i \norm{p_i}_1} \approx\frac{0.3}{20}=0.015 \]
robustness of the 785-dimensional network.\footnote{The denominator of 20 was computed as the maximum of the $l_1$ norms of the first six principal components of MNIST.} 
The corresponding TADS, describing network behavior in the space of interest, had 51 nodes and
could be handled quite easily. We repeat this experiment with the input image shown in \cref{fig:advsample} and the six-dimensional neural network instead. The TADS resulting from this experiment possesses roughly 4600 nodes.
This is still manageable computationally, but indicates the expected
explosion in size.

\section{Related Work}
\label{sec:related work}
The topic of robustness has been widely discussed in the machine learning community ever since it first gained attention in 2013~\cite{szegedy2013intriguing}. One topic of interest is research into heuristic methods that quickly and reliably find adversarial examples for modern neural networks, serving to understand how adversarial examples occur and therefore how they might be mitigated~\cite{carlini2017towards,goodfellow2014explaining}. As they are devised by the machine learning community, it is not surprising that these methods devised to find adversarial examples typically leverage methods from the machine learning toolbox, using gradient descent and other training heuristics to find adversarial examples.

Another natural topic with respect to robustness has been constructing neural networks that are reliably robust after training. A typical approach to this is defensive distillation~\cite{papernot2016distillation}. Defensive distillation seeks to secure a previously trained neural network. This is achieved by using the outputs of the first neural network to train a second neural network with equivalent architecture. This process is called distilling. The additional information provided by the first neural network allows for efficient training in less training steps, reducing the need for large parameter values and therefore reducing the risk of adversarial attacks. Other approaches directly modify the training process to ensure scalability, usually by introducing additional regularization terms that are meant to steer the training process into a robust direction, often working in tandem with formal methods~\cite{zheng2016improving,wang2018mixtrain,han2018co}.

Closer to our approach are neural network verification approaches (for robustness). They can be split into two categories, approaches based on branch-and-bound tree search algorithms \cite{dakin1965tree,leofante2018automated} and approaches based on abstract interpretation~\cite{cousot1992abstract}.

\paragraph{Neural Network Verification --- Tree Search. } 
Much like SAT and SMT solvers, these approaches use a branch-and-bound tree search algorithm to find a counterexample to the property of interest. A critical part of this is finding an apt ReLU configuration, i.e., which neuron activation values need to be set to 0 by the ReLU activation function and which do not. This corresponds to finding a satisfiable path in a TADS that contains a counterexample, which makes TADS based verification inherently a representative of this category.

Other examples include Reluplex~\cite{katz2017reluplex}, one of the earliest scalable neural network verifiers, and alpha-beta-crown~\cite{wang2021beta}, a modern method that can be regarded as current state-of-the-art \cite{bak2021second}. Methods of this type differ mostly in the heuristics that guide their branching and bounding.

Tree search methods are accurate and leading in practice, but they tend to be more time intensive than abstract-interpretation based methods. Moreover, they are, much like TADSs, naturally restricted to piece-wise affine neural networks and cannot cover activation functions such as sigmoid or softmax. 

\paragraph{Neural Network Verification --- Abstract Interpretation. }
These neural network verifiers define an abstract interpretation of neural networks to attain an overapproximation of the reachable states that a neural network can output on a given input region \cite{elboher2020abstraction}. As these methods compute an overapproximation of the truly reachable states, they are safe, but not complete, i.e., they might incorrectly state that a given property is violated when it is not. On the flipside, abstract interpretation verifiers are typically computationally quite efficient and extend to neural networks that are not piece-wise affine. Examples of verifiers based on abstract interpretation include AI$^2$~\cite{elboher2020abstraction} and DeepPoly~\cite{singh2019abstract}. Our TADS-based approach naturally also applies to abstractly interpreted neural networks.

\section{Conclusion}
In this paper, we have applied TADS, a white-box representation of neural networks, to the problem of neural network robustness.
To apply TADS to this problem, we have introduced precondition projection and showed how to extend the argmax function, that is typically used with neural networks in classification tasks, to generate TADS that precisely describe a neural networks classification behavior in a given area around a fixed input point.
Choosing the considered robustness region as precondition, robustness becomes equivalent to the property that the entire corresponding TADS collapses to one node that then characterizes the robust classification.
If this is not the case, the resulting TADS explicitly represents the set of all adversarial examples. 

This unique power of TADS-based robustness verification comes at the price of an exponential complexity which we have proposed to mitigate via PCA-based dimensionality reduction
by focusing the verification on the image of a low-dimensional PCA encoding. Three versions of this approach have been discussed:
\begin{itemize}
	\item 
	An approximative version that can be regarded as an elaborate search heuristics for adversarial examples,
	\item
	A transformational approach where the PLNN is extended by a preprocessing step defined by PCA-based auto-encoding, 
	and which allows one to infer robustness of the 784-dimensional transformed network based on the analysis of 
	the corresponding low-dimensional PCA space, and 
	\item
	An approach that is based on a modified learning process, specifically tailored to the corresponding PCA-based encoding. This method leverages the machine learning toolbox to improve the accuracy of the 784-dimensional transformed network while still allowing low-dimensional robustness verification.
\end{itemize}
We believe that dimensionality reduction, as illustrated in this paper for PCA, is key to achieve neural networks that are ready for verification.
The challenge is to find dimensionality reduction techniques that maintain a high level of accuracy.
In our experience, the success of such techniques hinges on characteristics of the application domain.
We are optimistic that this approach will widen the scope of applications where neural networks are accepted.

\bibliographystyle{alpha}
\bibliography{literature}   

\end{document}